\newlength\aftertitskip     \newlength\beforetitskip
\newlength\interauthorskip  \newlength\aftermaketitskip
\def\maketitle{\par
 \begingroup
   \def\thefootnote{\fnsymbol{footnote}}
   \def\@makefnmark{\hbox to 4pt{$^{\@thefnmark}$\hss}}
   \@maketitle \@thanks
 \endgroup
\setcounter{footnote}{0}
 \let\maketitle\relax \let\@maketitle\relax
 \gdef\@thanks{}\gdef\@author{}\gdef\@title{}\let\thanks\relax}
\def\@startauthor{\noindent \normalsize\bf}
\def\@endauthor{}
\def\@starteditor{\noindent \small {\bf Editor:~}}
\def\@endeditor{\normalsize}
\def\@maketitle{\vbox{\hsize\textwidth
 \linewidth\hsize \vskip \beforetitskip
 {\begin{center} \LARGE\@title \par \end{center}} \vskip \aftertitskip
 {\def\and{\unskip\enspace{\rm and}\enspace}%
  \def\addr{\small\it}%
  \def\email{\hfill\small\tt}%
  \def\name{\normalsize\bf}%
  \def\AND{\@endauthor\rm\hss \vskip \interauthorskip \@startauthor}
  \@startauthor \@author \@endauthor}
}}
\newcommand{\va}{\bm{a}}                          
\newcommand{\vb}{\bm{b}}                          
\newcommand{\vd}{\bm{d}}
\newcommand{\vi}{\bm{i}}
\newcommand{\vp}{\bm{p}}                          
\newcommand{\vq}{\bm{q}}                          
\newcommand{\vr}{\bm{r}}                          
\newcommand{\vs}{\bm{s}}                          
\newcommand{\vu}{\bm{u}}                          
\newcommand{\vv}{\bm{v}}                          
\newcommand{\vw}{\bm{w}}                          
\newcommand{\vx}{\bm{x}}                          
\newcommand{\vy}{\bm{y}}                          
\newcommand{\vz}{\bm{z}}                          
\newcommand{\valpha}{\bm{\alpha}}
\newcommand{\vbeta}{\bm{\beta}}
\newcommand{\Xc}{\mathcal{X}}
\newcommand{\Zc}{\mathcal{Z}}
\newcommand{\ma}{\bm{A}}
\newcommand{\mb}{\bm{B}}
\newcommand{\md}{\bm{D}}
\newcommand{\mH}{\bm{H}}
\newcommand{\mi}{\bm{I}}
\newcommand{\mk}{\bm{K}}
\newcommand{\ml}{\bm{L}}
\newcommand{\mn}{\bm{N}}
\newcommand{\mr}{\bm{R}}
\newcommand{\mx}{\bm{X}}
\newcommand{\my}{\bm{Y}}
\newcommand{\tu}{\mathsf{U}}
\newcommand{\tx}{\mathsf{X}}
\newcommand{\ty}{\mathsf{Y}}
\newcommand{\tw}{\mathsf{W}}
\newcommand{\mynorm}[2]{\| {#1} \|_{#2}}
\newcommand{\norm}[2]{\mynorm{#1}{#2}}
\newcommand{\infnorm}[1]{\mynorm{#1}{\infty}}
\newcommand{\enorm}[1]{\mynorm{#1}{2}}
\newcommand{\frob}[1]{\|{#1}\|_{\text{F}}}
\DeclareMathOperator*{\argmin}{argmin}
\DeclareMathOperator{\dom}{dom}
\DeclareMathOperator{\sgn}{sgn}
\DeclareMathOperator{\Diag}{Diag}
\DeclareMathOperator{\vect}{vec}
\DeclareMathOperator{\prox}{prox}
\def\abovestrut#1{\rule[0in]{0in}{#1}\ignorespaces}
\def\belowstrut#1{\rule[-#1]{0in}{#1}\ignorespaces}
\def\abovespace{\abovestrut{0.20in}}
\def\belowspace{\belowstrut{0.10in}}
\newcommand{\infconv}{\:\square\:}
\newcommand{\set}[1]{\left\{ {#1}\right\}}
\newcommand{\nlsum}{\sum\nolimits}
\newcommand{\nlmin}{\min\nolimits}
\newcommand{\defeq}{\ \stackrel{\text{def}}{=}\ }
\newcommand{\ip}[2]{\left\langle#1, #2\right\rangle}
\newcommand{\reals}{\mathbb{R}}
\newcommand{\half}{\tfrac{1}{2}}
\newcommand{\sml}[1]{{\small #1}}
\newcommand{\fromto}[3]{\sml{$#1 \le #2 \le #3$}}
\newcommand{\tvell}{\text{Tv}}
\newcommand{\ind}{\delta}
\newcommand{\trip}{\textsc{Trip}\xspace}
\newcommand{\oned}{\text{1D}}
\newcommand{\twod}{\text{2D}}
\newcommand{\threed}{\text{3D}}
\newtheorem{theorem}{Theorem}
\newtheorem{proposition}[theorem]{Proposition}
\newtheorem{defn}[theorem]{Definition}
\newcommand{\ubar}[1]{\underaccent{\bar}{#1}}
\numberwithin{equation}{section}
\numberwithin{theorem}{section}
\newcommand{\MIT}{Laboratory for Information and Decision Systems\\ Massachusetts Institute of Technology (MIT), Cambridge, MA}
\newcommand{\mpi}{An initial version of this work was performed during 2013-14, when the author was with the Max Planck Institute for Intelligent Systems, Tübingen, Germany, and with Carnegie Mellon University, Pittsburgh.}
\newcommand{\iisc}{Instituto de Ingenier\'ia del Conocimiento and Universidad Aut\'onoma de Madrid\\Francisco Tom\'as y Valiente 11, Madrid, Spain}
\begin{document}

\title{Modular proximal optimization for multidimensional total-variation regularization}
\author{\name \'Alvaro Barbero
  \email alvaro.barbero@inv.uam.es\\
  \addr \iisc
  \AND
  \name Suvrit Sra\thanks{\mpi} \email suvrit@mit.edu\\
  \addr \MIT
}

\maketitle

\begin{abstract}
  We study \emph{TV regularization}, a widely used technique for eliciting structured sparsity. In particular, we propose efficient algorithms for computing prox-operators for $\ell_p$-norm TV. The most important among these is $\ell_1$-norm TV, for whose prox-operator we present a new geometric analysis which unveils a hitherto unknown connection to taut-string methods. This connection turns out to be remarkably useful as it shows how our geometry guided implementation results in efficient weighted and unweighted 1D-TV solvers, surpassing   state-of-the-art methods. Our 1D-TV solvers provide the backbone for building more complex (two or higher-dimensional) TV solvers within a modular proximal optimization approach. We review the literature for an array of methods exploiting this strategy, and illustrate the benefits of our modular design through extensive suite of experiments on (i) image denoising, (ii) image deconvolution, (iii) four variants of fused-lasso, and (iv) video denoising. To underscore our claims and permit easy reproducibility, we provide all the reviewed and our new TV solvers in an easy to use multi-threaded C++, Matlab and Python library.
\end{abstract}

\section{Introduction}
Sparsity impacts the entire data analysis pipeline, touching algorithmic, modeling, as well as practical aspects. Most commonly, sparsity is elicited via $\ell_1$-norm regularization~\citep{LASSO,candesTao04}. However, numerous applications rely on  more refined ``structured'' notions of sparsity, e.g., groupwise-sparsity~\citep{Meier08,liu09b,yuan06,bach11}, hierarchical  sparsity~\citep{bach10,Mairal10}, gradient sparsity~\citep{RudinTV92,vogel,fl}, or sparsity over structured `atoms'~\citep{atomic}. 

Such regularizers typically arise in optimization problems of the form
\begin{equation}
  \label{eq.1}
  \nlmin_{\vx \in \reals^n}\quad \Phi(\vx) := \ell(\vx) + r(\vx),
\end{equation}
where $\ell: \reals^n \to \reals$ is a smooth loss function (often convex), while  $r: \reals^n \to \reals \cup \set{+\infty}$ is a lower semicontinuous, convex, and nonsmooth regularizer that induces sparsity. 

We focus on instances of~\eqref{eq.1} where $r$ is a weighted \emph{anisotropic Total-Variation} (TV) regularizer:\footnote{We use the term ``anisotropic'' to refer to the specific TV penalties considered in this paper.}, which, for a vector $\vx \in \reals^n$ and fixed weights $\vw \ge 0$ is defined as
\begin{equation}
  \label{eq.oned}
  r(\vx) \defeq \tvell_p^{1}(\vw;\vx) \defeq \Bigl(\nlsum_{j=1}^{n-1}w_j |x_{j+1}-x_j|^p\Bigr)^{1/p} 
  \quad p \ge 1.
\end{equation}
More generally, if $\tx$ is an order-$m$ tensor in $\reals^{\prod_{j=1}^m n_j}$ with entries $\tx_{i_1,i_2,\ldots,i_m}$ (\fromto{1}{i_j}{n_j} for \fromto{1}{j}{m}); we define the weighted \emph{$m$-dimensional anisotropic TV} regularizer as
\begin{equation}
  \label{eq.multid}
  \tvell^m_{\vp}(\tw; \tx) \defeq \sum_{k=1}^m \sum_{I_{k} = \set{i_1,\ldots,i_m} \setminus i_k}\biggl(\sum_{j=1}^{n_k-1}w_{I_k,j}|\tx^{[k]}_{j+1}-\tx^{[k]}_{j}|^{p_k}\biggr)^{1/p_k},
\end{equation}
where $\tx^{[k]}_j \equiv \tx_{i_1,\ldots,i_{k-1},j,i_{k+1},\ldots,i_m}$, $w_{I_k,j} \ge 0$ are weights, and $\vp \equiv [p_k \ge 1]$ for \fromto{1}{k}{m}. If $\tx$ is a matrix, expression \eqref{eq.multid} reduces to (note, $p, q \ge 1$)
\begin{equation}
  \label{eq.twod}
  \begin{split}
    \tvell_{p,q}^{2}(\tw; \tx) &=
    \sum_{i=1}^{n_1}\Bigl(\sum_{j=1}^{n_2-1}w_{1,j}|x_{i,j+1}-x_{i,j}|^p\Bigr)^{1/p}
    + \sum_{j=1}^{n_2}\Bigl(\sum_{i=1}^{n_1-1}w_{2,i}|x_{i+1,j}-x_{i,j}|^q\Bigr)^{1/q},    
  \end{split}
\end{equation}

These definitions look formidable; already  2D-TV~\eqref{eq.twod} or even the simplest 1D-TV~\eqref{eq.oned} are fairly complex, which further complicates the overall optimization problem~\eqref{eq.1}. Fortunately, this complexity can be ``localized'' by invoking \emph{prox-operators}~\citep{moreau62}, which are now widely used across machine learning~\citep{sraBook,parikh2014proximal}. 

The main idea of using prox-operators while solving~\eqref{eq.1} is as follows. Suppose $\Phi$ is a convex lsc function on a set $\Xc \subset \reals^n$. The \emph{prox-operator} of $\Phi$ is defined as the map
\begin{equation}
  \label{eq.2}
  \prox_\Phi \defeq \vy \mapsto  \argmin_{\vx \in \Xc}\quad \half\enorm{\vx-\vy}^2 + \Phi(\vx)\quad\text{for}\quad \vy \in \reals^n.
\end{equation}
A popular method based on prox-operators is the 
\emph{proximal gradient method} (also known as `forward backward splitting'), which performs a gradient (forward) step followed by a proximal (backward) step to iterate
\begin{equation}
  \label{eq.fbs}
  \vx_{k+1} = \prox_{\eta_kr}(\vx_k - \eta_k\nabla\ell(\vx_k)),\quad k=0,1,\ldots.
\end{equation}
Numerous other proximal methods exist---see e.g.,~\citep{fista,nest07,Combettes09,KimICML10,schmidt11}. 

To implement the proximal-gradient iteration~\eqref{eq.fbs} efficiently, we require a subroutine that computes the prox-operator $\prox_r$. An additional concern is whether the overall algorithm requires an \emph{exact} computation of $\prox_r$, or merely a moderately \emph{inexact} computation. This concern is justified: rarely does $r$ admit an exact algorithm for computing $\prox_r$. Fortunately, proximal methods easily admit inexactness, e.g.,~\citep{schmidt11,salzo,nocops}, which allows approximate prox-operators (as long as the approximation is sufficiently accurate).

We study both exact and inexact prox-operators in this paper, contingent upon the $\ell_p$-norm used and on the data dimensionality $m$.

\subsection{Contributions}
In particular, we review, analyze, implement, and experiment with a variety of fast algorithms. The ensuing contributions of this paper are summarized below.

\begin{itemize}
  \setlength{\itemsep}{-1pt}
\item Geometric analysis that leads to a new, efficient version of the classic Taut String Method~\citep{daviesTautString}, whose origins can be traced back to~\citep{Barlow} -- this version turns out to perform better than most of the recently developed TV proximity methods.
\item A previously unknown connection between (a variation of) this classic algorithm and Condat's \emph{unweighted} TV method~\citep{fastTV}. This connection provides a geometric, more intuitive interpretation and helps us define a hybrid taut-string algorithm that combines the strengths of both methods, while also providing a new efficient algorithm for \emph{weighted} $\ell_1$-norm 1D-TV proximity.
\item Efficient prox-operators for general $\ell_p$-norm ($p \ge 1$) 1D-TV. In particular, 
  \begin{itemize}
    \vspace*{-4pt}
    \setlength{\itemsep}{-1pt}
  \item For $p=2$, we present a specialized Newton method based on the root-finding strategy of \cite{More83},
  \item For the general $p \geq 1$ case we describe both ``projection-free'' and projection based first-order methods.
  \end{itemize}
\item Scalable proximal-splitting algorithms for computing 2D~\eqref{eq.twod} and higher-D TV~\eqref{eq.multid} prox-operators. We review an array of methods in the literature that use prox-splitting, and through extensive experiments show that a splitting strategy based on alternating reflections is the most effective in practice. Furthermore, this modular construction of 2D and higher-D TV solvers allows reuse of our fast 1D-TV routines and exploitation of the massive parallelization inherent in matrix and tensor TV.
\item The final most important contribution of our paper is a well-tuned, multi-threaded open-source C++, Matlab and Python implementation of all the reviewed and developed methods.\footnote{See \href{https://github.com/albarji/proxTV}{{https://github.com/albarji/proxTV}}}
\end{itemize}

\noindent To complement our algorithms, we illustrate several applications of TV prox-operators to: (i) image and video denoising; (ii) image deconvolution; and (iii) four variants of fused-lasso. 

\vspace*{6pt}
\noindent\textbf{Note:} We have invested great efforts to ensure reproducibility of our results. In particular, given the vast attention that TV problems have received in the literature, we believe it is valuable to both users of TV and other researchers to have access to our code, datasets, and scripts, to independently verify our claims, if desired.\footnote{This material shall be made available at: \href{http://suvrit.de/work/soft/tv.html}{http://suvrit.de/work/soft/tv.html}}

\subsection{Related work}
The literature on TV is too large to permit a comprehensive review here. Instead, we mention the most directly related work to help place our contributions in perspective.

We focus on \emph{anisotropic}-TV (in the sense of~\citep{TwIST}), in contrast to  \emph{isotropic}-TV~\citep{RudinTV92}. 
Isotropic TV regularization arises frequently in image denoising and signal processing, and quite a few TV-based denoising algorithms exist~\citep[see e.g.]{ZhuTVAlg08}.

The anisotropic TV regularizers $\tvell_1^{\oned}$ and $\tvell_{1,1}^{\twod}$ arise in image denoising and deconvolution~\citep{DahlTV10}, in the fused-lasso~\citep{fl}, in logistic fused-lasso~\citep{kolar}, in change-point detection~\citep{harLev10}, in graph-cut based image segmentation~\citep{chaDar09}, in submodular optimization~\citep{jegBac13}; see also the related work in~\citep{vert}. This broad applicability and importance of anisotropic TV is the key motivation towards developing carefully tuned proximity operators.

There is a rich literature of methods tailored to anisotropic TV, e.g., those developed in the context of fused-lasso~\citep{frHaHoTi07,liuYe10}, graph-cuts~\citep{chaDar09}, ADMM-style approaches~\citep{Combettes09,BoydTV}, or fast methods based on dynamic programming~\citep{dpTV} or KKT conditions analysis~\citep{fastTV}. However, it seems that anisotropic TV norms other than $\ell_1$ has not been studied much in the literature, although recognized as a form of Sobolev semi-norms~\citep{pontow09}.

For 1D-TV and for the particular $\ell_1$ norm, there exist several direct methods that are exceptionally fast. We treat this problem in detail in Section~\ref{sec:tvl1}, and hence refer the reader to that section for discussion of closely related work on fast solvers. We note here, however, that in contrast to many of the previous fast solvers, our solvers allow weights, a capability that can be very important in applications~\citep{jegBac13}.

Regarding 2D-TV, \citet{splitbreg} presented a so-called ``Split-Bregman'' (SB). It turns out that this method is essentially a variant of the well-known ADMM method. In contrast to the 2D approach presented here, the SB strategy followed by \citet{splitbreg} is to rely on $\ell_1$-soft thresholding substeps instead of 1D-TV substeps. From an implementation viewpoint, the SB approach is somewhat simpler, but not necessarily more accurate. Incidentally, sometimes such direct ADMM approaches turn out to be less effective than ADMM methods that rely on more complex 1D-TV prox-operators~\citep{ramdas}.

It is worth highlighting that it is not just proximal solvers such as FISTA \citep{fista}, SpaRSA~\citep{sparsa}, SALSA \citep{SALSA}, TwIST \citep{TwIST},  \trip \citep{KimICML10}, that can benefit from our fast prox-operators. All other 2D and higher-D TV solvers, e.g., \citep{YangTV}, as well as the recent ADMM based trend-filtering solvers of~\citet{trendf} immediately benefit, not only in speed but also by gaining the ability to solve weighted problems. 

\subsection{Summary of the paper}
The remainder of the paper is organized as follows. In Section~\ref{sec:tvl1} we consider prox operators for 1D-TV problems when using the most common $\ell_1$ norm. The highlight of this section is our analysis on taut-string TV solvers, which lead to the development a new hybrid method and a weighted TV solver (Sections~\ref{sec:tauthybrid},~\ref{sec:tautweighted}). Thereafter, we discuss  variants of 1D-TV (Section~\ref{sec:tvoneothers}), including a specialized $\tvell_2^{\oned}$ solver, and a more general $\tvell_p^{\oned}$ method based on a projected-Newton strategy. Subsequently, we describe multi-dimensional TV problems  and study their prox-operators in Section~\ref{sec:proxMulti}, paying special attention to 2D-TV; for both 2D and multi-D, prox-splitting methods are used. After these theoretical sections, we describe experiments and applications in Section~\ref{sec:exps}. In particular, extensive experiments for 1D-TV are presented in Section~\ref{sec:TVproxExp}; 2D-TV experiments are in Section~\ref{sec:2d.expt}, while an application of multi-D TV is the subject of Section~\ref{sec:appl.multi}. The appendices to the paper include further technical details and additional information about the experimental setup.

\section{TV-L1: Fast prox-operators for $\tvell_1^{\oned}$}
\label{sec:tvl1}
We begin with the 1D-TV problem~\eqref{eq.oned} for an $\ell_1$ norm choice, for which we review several carefully tuned algorithms. Using such well--tuned algorithms pays off: we can find fast, robust, and low-memory (in fact, in place) algorithms, which are not only of independent value, but also ideal building blocks for scalably solving 2D- and higher-D TV problems.

Computation of the $\ell_1$-norm TV prox-operator can be compactly written as the problem
\begin{equation}
  \label{eq:l1tv}
  \min_{\vx \in \reals^n}\quad\half\enorm{\vx-\vy}^2 + \lambda \norm{\md\vx}{1},
\end{equation}
where $\md$ is the \emph{differencing matrix}, all zeros except $d_{ii}=-1$ and $d_{i,i+1}=1$ (\fromto{1}{i}{n-1}). 

To solve~\eqref{eq:l1tv} we will analyze an approach based on the line of ``taut-string'' methods. We first introduce these methods for the unweighted \emph{TV-L1} problem~\eqref{eq:l1tv}, before discussing the elementwise weighted TV problem~\eqref{eq.14}. Most of the previous fastest methods handle only unweighted-TV.  It is often nontrivial to extend them to handle weighted-TV, a problem that is crucial to several applications, e.g., segmentation~\citep{chaDar09} and certain submodular optimization problems~\citet{jegBac13}. 

A remarkably efficient approach to TV-L1 was presented in~\citep{fastTV}. We will show Condat's fast algorithm can be interpreted as a ``linearized'' version of the taut-string approach, a view that paves the way to obtain an equally fast solver for weighted TV-L1.

Before proceeding we note that other than~\citep{fastTV}, other efficient methods to address unweighted $\tvell_1^{\oned}$ proximity have been proposed. \citet{dpTV}  shows how solving $\tvell_p^{\oned}$ proximity is equivalent to computing the data likelihood of an specific Hidden Markov Model (HMM), which suggests a dynamic programming approach based on the well-known Viterbi algorithm for HMMs. The resulting algorithm is very competitive, and guarantees an overall $O(n)$  performance while requiring approximately $8n$ storage. Another similarly performing algorithm was presented by~\citet{Kolmogorov16} in the form of a message passing method. We will also consider these algorithms in our experimental comparison in \S\ref{sec:TVproxExp}.

Yet another family of methods is based on projected-Netwon (PN) techniques: we also present in Appendix \ref{app:projNewton} a PN approach for its instructive value, and also because it provides key subroutines for solving TV problems with $p > 1$. Our derivation may also be helpful to readers seeking to implement efficient prox-operators for problems that have structure similar to TV, for instance $\ell_1$-trend filtering~\citep{boyd.kim,trendf}. Indeed, the PN approach proves to be foundational for the fast ``group fused-lasso'' algorithms of~\citep{wytock}. 

\subsection{The taut-string method for $\tvell_1^{\oned}$}
While taut-string methods seem to be largely unknown in machine learning, they have been widely applied in statistics---see e.g.,~\citep{grasmairTV07,daviesTautString,Barlow}. 

We start by transforming the problem as follows. For TV-L1, elementary manipulations, e.g., using Proposition~\ref{prop.pd}, yield the dual (re-written as a minimization problem)
\begin{equation}
  \label{eq:31}
  \min_{\vu}\ \half\enorm{\md^T\vu}^2 - \vu^T\md\vy,\ \text{s.t. } \infnorm{\vu} \le \lambda.
\end{equation}
Without changing the minimizer, the objective~\eqref{eq:31} can be replaced by
$\enorm{\md^T\vu - \vy}^2$, which then unfolds into
\begin{align*}
   \left(u_1 - y_1 \right)^2 + \nlsum_{i=2}^{n-1} \left( -u_{i-1} + u_i - y_i \right)^2 + \left( -u_{n-1} - y_{n} \right)^2  .
\end{align*}
Introducing the fixed extreme points $u_0 = u_n = 0$, we can replace the problem~\eqref{eq:31} by
\begin{equation}
  \label{eq:32}
 \min_{\vu}\ \nlsum_{i=1}^n \left( - u_{i-1} + u_i - y_i \right)^2,\ \text{s.t. } \infnorm{\vu} \le \lambda,\quad u_0 = u_n = 0.
\end{equation}
Now we perform a change of variables by defining the new set of variables $\vs = \vr - \vu$, where $r_i := \sum_{k=1}^i y_k$ is the cumulative sum of input signal values. Thus, \eqref{eq:32}  becomes
\begin{equation*}
 \min_{\vs}\ \nlsum_{i=1}^n \left( - r_{i-1} + s_{i-1} + r_i - s_i - y_i \right)^2,\ \text{s.t. } \infnorm{\vs - \vr} \le \lambda,\quad r_0 - s_0 = r_n - s_n = 0 ,
\end{equation*}
which upon simplification becomes
\begin{equation}
  \label{eq:33}
 \min_{\vs}\ \nlsum_{i=1}^n \left( s_{i-1} - s_i \right)^2,\quad\text{s.t.}\quad \infnorm{\vs - \vr} \le \lambda, s_0 = 0,\  s_n = r_n .
\end{equation}
Now the key trick: problem~\eqref{eq:33} can be shown to share the same optimum as 
\begin{equation}
 \label{eq:tautString}
 \min_{\vs}\ \sum_{i=1}^n \sqrt{1 + \left( s_{i-1} - s_i \right)^2},\ \text{s.t. } \infnorm{\vs - \vr} \le \lambda,\quad s_0 = 0,\  s_n = r_n.
\end{equation}
A proof of this relationship may be found in~\citep{SteidlTautString}; for completeness, and also because it will help us generalize to the weighted $\tvell_1^{\oned}$ variant, we include an alternative proof in Appendix~\ref{app:tautStringProof}.

The name ``taut-string'' is explained as follows. The objective in~\eqref{eq:tautString} can be interpreted as the Euclidean length of a polyline through the points $(i, \vs_i)$. Thus, \eqref{eq:tautString} seeks the minimum length polyline (the \emph{taut-string}) crossing a tube of height $\lambda$ with center the cumulative sum $\vr$ and having the fixed endpoints ($s_0, s_n$). An example illustrating this description is shown in Figure \ref{fig:tautStringExample}. 

Once the taut string is found, the solution for the original TV problem~\eqref{eq:l1tv} can be recovered by observing that
\begin{align*}
 s_i - s_{i-1}\quad=\quad r_i - u_i - (r_{i-1} - u_{i-1})\quad=\quad y_i - u_i + u_{i-1}\quad=\quad x_i,
\end{align*}
where we used the primal-dual relation $\vx = \vy - \md^T \vu$. Intuitively, the above argument shows that the solution to the TV-L1 proximity problem is obtained as the discrete gradient of the taut string, or as the slope of its segments.

\begin{figure}[htbp]
  \centering
  \includegraphics[width = \textwidth]{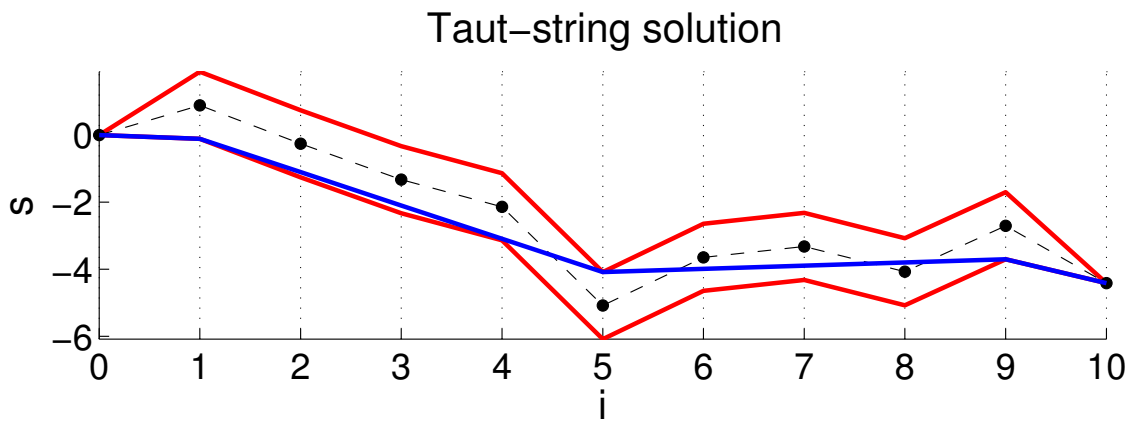}
  \caption{\small Example of the taut string method. The cumulative sum $\vr$ of the input signal values $\vy$ is shown as the dashed line; the black dots mark the points $(i,r_i)$. The bottom and top of the $\lambda$-width tube are shown in red. The taut string solution $\vs$ is shown as a blue line.
   }   \label{fig:tautStringExample}
\end{figure}

It remains to describe how to find the taut string. The most widely used approach seems to be the one due to~\citet{daviesTautString}. This approach starts from the fixed point $s_0 = 0$, and incrementally computes the \emph{greatest convex minorant} of the upper bounds on the $\lambda$ tube, as well as the \emph{smallest concave majorant} of the lower bounds on the $\lambda$ tube. When both curves intersect, the \emph{left-most} point where either the majorant or the minorant touched the tube is used to fix a first segment of the taut string. The procedure is then resumed at the end of the identified segment, and iterated until all taut string segments have been obtained. Pseudocode of this method is presented as Algorithm~\ref{algTV1classicTautString}, while an example of this procedure is shown in Figure~\ref{fig:classictautstringrun}.

\begin{algorithm}[tp]\small
 \caption{\small Taut string algorithm for TV-L1-proximity}
 \label{algTV1classicTautString}
 \begin{algorithmic}[1]
   \State {\bf Inputs}: input signal $\vy$ of length $n$, regularizer $\lambda$.
   \State {\bf Initialize} $i=0$, $concmajorant = \varnothing$, $convminorant = \varnothing$, $\vr_i = \sum_{k=1}^i \vy_k$.
   \While{$i < n$}
      \State Add new segment: $concmajorant = concmajorant \cup \left( (i-1, \vr_{i-1}-\lambda) \rightarrow (i, \vr_i-\lambda) \right)$.
      \While{$concmajorant$ is not concave}
	\State Merge the last two segments of $concmajorant$
      \EndWhile
      \State Add new segment: $convminorant = convminorant \cup \left( (i-1, \vr_{i-1}+\lambda) \rightarrow (i, \vr_i+\lambda) \right)$.
      \While{$convminorant$ is not convex}
	\State Merge the last two segments of $convminorant$
      \EndWhile
      \If{slope(left-most segment in $concmajorant$) $>$ slope(lest-most segment in $convminorant$)}
        \State $break$ = left-most point where either the majorant or the minorant touched the tube
        \If{$break \in convminorant$}
          \State Remove left-most segment of the minorant, add it to the taut-string solution $\vx$.
          \State Majorant is recalculated as a straight line from $break$ to its last point.
        \EndIf
        \If{$break \in concmajorant$}
          \State Remove left-most segment of the majorant, add it to the taut-string solution $\vx$.
          \State Minorant is recalculated as a straight line from $break$ to its last point.
        \EndIf
      \EndIf
      \State $i++$
  \EndWhile
  \State Add last segment from either the majorant or minorant to the solution $\vx$.
 \end{algorithmic}
\end{algorithm}

\begin{figure}[htbp]
  \centering
  \includegraphics[width = \textwidth]{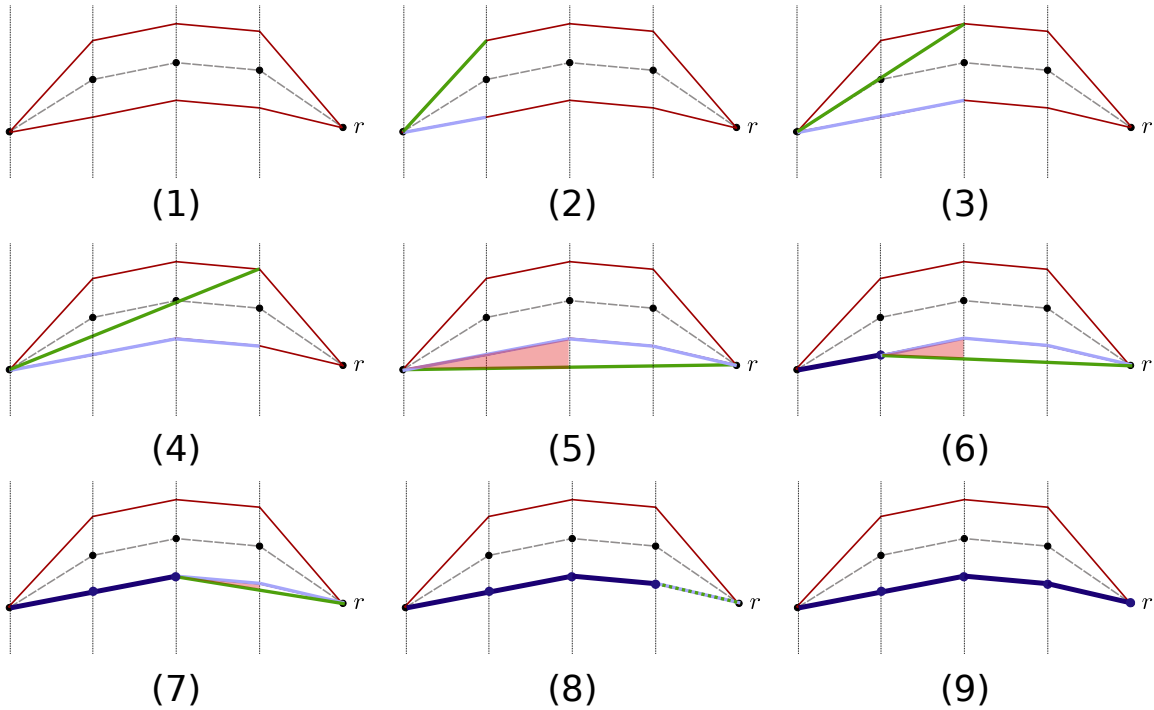}
  \caption{\small Example of the evolution of the taut string method. The smallest concave majorant (blue) and largest convex minorant (green) are updated are every step. At step (1) the algorithm is initialized. Steps (2) to (4) successfully manage to update majorant and minorant without producing crossings between them. Note how while the concave majorant keeps adding segments without issue, the convex minorant must remove and merge existing segments with new ones to mantain a convex function from the origin to the new points. At step (5) the end of the tube is reached, but the minorant and majorant slopes overlap, and so it is necessary to break the segment at the left-most point where the majorant/minorant touched the tube. Since the left-most touching point is in the concave majorant it's leftmost segment is removed and placed in the solution, while the convex minorant is updated as a straight line from the detected breakpoint to the last explored point, resulting in (6). The algorithm would then continue adding segments, but since the majorant/minorant slopes are still crossing, the procedure of fixing segments to the solution is repeated through steps (6), (7) and (8). Finally at step (9) the slopes are no longer crossing and the method would continue adding tube segments, but since the end of the tube has already been reached the algorithm stops.
   }   \label{fig:classictautstringrun}
\end{figure}

It is important to note that since we have a discrete number of points in the tube, the greatest convex minorant can be expressed as a piecewise linear function with segments of monotonically increasing slope, while the smallest concave majorant is another piecewise linear function with segments of monotonically decreasing slope. Another relevant fact is that each segment in the tube upper/lower bound enters the minorant/majorant exactly once in the algorithm, and is also removed exactly once. This limits the extent of the inner loops in the algorithm, and in fact an analysis of the computational complexity of this behavior leads to an overall $O(n)$ performance~\citep{daviesTautString}.

In spite of this, \citet{fastTV} notes that maintaining the minorant and majorant functions in memory is inefficient, and views a taut-string approach as potentially inferior to his proposed method. To this observation we make two claims: \citeauthor{fastTV}'s method can be interpreted as a linearized version of the taut-string method (see Section~\ref{sec:taut}); and that a careful implementation of the taut-string method can be highly competitive in practice. 

\subsubsection{Efficient implementation of taut-strings}
We propose now an efficient implementation of the taut-string method. The main idea is to carefully use double-ended queues~\citep{Knuth97} to store the majorant and minorant information. Therewith, all majorant/minorant operations such as appending a segment or removing segments from either the beginning or the end of the majorant can be performend in constant time. Note however that usual double-ended queue implementations use doubly linked lists, dynamic arrays or circular buffers: these approaches require dynamically reallocating memory chunks at some of the insert or remove operations. But in the taut-string algorithm, the maximum number of segments of the majorant/minorant is just the size of the input signal ($n$), and also the number of segments to be inserted in the queue throughout the algorithm will be $n$. Making use of these facts we implement a specialized queue based on a contiguous array of fixed length $n$. New segments are added from the start of the array on, and a couple of pointers are maintained to keep track of the first and last valid segments in the array, much in the way of a circular buffer. This implementation, however, does not require of the usual circular logic. Overall, this double-ended queue requires a single memory allocation at the beginning of the algorithm, keeping the rest of queue operations free from memory management and all but the simplest pointer or index algebra.

We also store for each segment the following values: $x$ length of the segment, $y$ length and slope. Slopes might seem as redundant given the other two factors, but given the number of times the algorithm requires comparing slopes between segments (e.g., to preserve convexity/concavity) it pays off to precompute these values. This fact together with other calculation and code optimization details produces our implementation; these can be reviewed in the code itself at \url{https://github.com/albarji/proxTV}.

\subsection{Linearized taut-string method for $\tvell_1^{\oned}$} 
\label{sec:taut}
We now present a variant, linearized version of the taut-string method. Surprisingly, the resulting algorithm turns out to be equivalent to the fast algorithm of~\citet{fastTV}, though now with a clearer interpretation based on taut-strings.

The key idea is to build linear approximations to the greatest convex minorant and smallest concave majorant, producing exactly the same results but significantly reducing the bookkeeping of the method to a handful of simple variables. We therefore replace the greatest convex minorant and smallest convex majorant by a \emph{greatest affine minorant} and \emph{smallest affine majorant}.

An example of the method is presented in Figure \ref{fig:tautStringAlgEvolution}. A proof showing that this linearization does not change the resultant taut-string is given in Appendix \ref{app:tautStringAlgEq}. In what follows, we describe the linearized method in depth.

\vskip5pt
\noindent\textbf{Details.} Linearized taut-string requires only the following bookkeeping variables:
\begin{enumerate}
  \vspace*{-2pt}
  \setlength{\itemsep}{-2pt}
 \item $i_0$: index of the current segment start
 \item $\bar \delta$: slope of the line joining segment start with majorant at the current point
 \item $\ubar \delta$: slope of the line joining segment start with minorant at the current point
 \item $\bar h$: height of majorant w.r.t.\ the $\lambda$-tube center
 \item $\ubar h$: height of minorant w.r.t.\ $\lambda$-tube center
 \item $\bar i$: index of last point where $\bar \delta$ was updated---potential majorant break point
 \item $\ubar i$: index of last point where $\ubar \delta$ was updated---potential minorant break point.
\end{enumerate}
Figure~\ref{fig:tautStringAlgVariables} gives a geometric interpretation of these variables; we use these variables to detect  minorant-majorant intersections, without the need to compute or store them explicitly.

\begin{figure}[htbp]
  \centering
  \includegraphics[width = 0.8\textwidth]{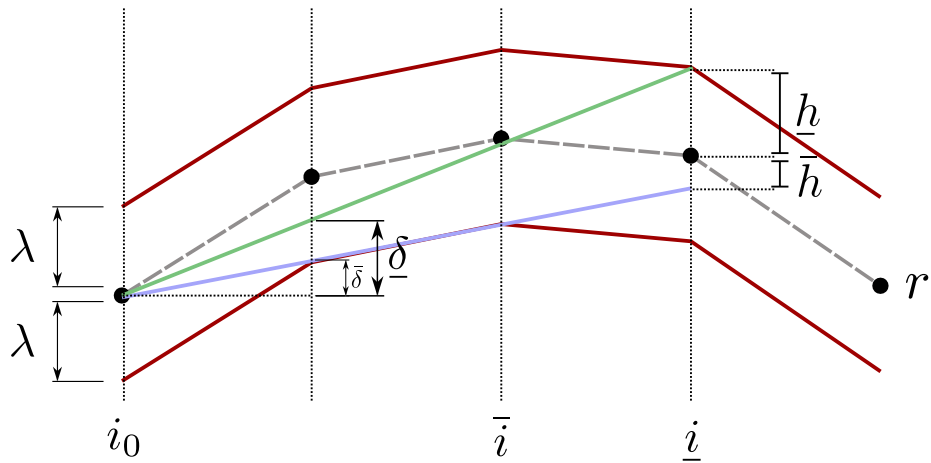}
  \caption{\small Illustration of the geometric concepts involved in the linearized taut string method. The greatest linear minorant (of the tube ceiling) is depicted in green, while the smallest linear majorant (of the tube bottom) is shown in blue. The $\delta$ slopes and $h$ heights are presented updated up to the index shown as $\underbar i$.
   }   \label{fig:tautStringAlgVariables}
\end{figure}

Algorithm \ref{algTV1tautString} presents full pseudocode of the linearized taut-string method. Broadly, the algorithm proceeds in the same fashion as the classic taut-string method, updating the affine approximations to the majorant and minorant at each step, and introducing a breakpoint whenever the slopes of these two functions cross.

More precisely, at each each iteration the method steps one point further through the tube, updating the minorant/majorant slopes  ($\ubar \delta$, $\bar \delta$) as well their heights at the current point ($\ubar h$, $\bar h$). To check for minorant/majorant crossings it suffices to compare the slopes ($\ubar \delta$, $\bar \delta$), or equivalently, to check whether the height of the minorant $\ubar h$ falls below the tube bottom (since the minorant follows the tube ceiling) or the height of the majorant $\bar h$ grows above the tube ceiling (since the majorant follows the tube bottom). We make use of this last variant, since updating heights turns out to be slightly cheaper than updating slopes, and so it is faster to ensure no crossing will take place before performing such updates.

When a crossing is detected, we perform similar steps as in the classic taut-string method but with one significant difference: the algorithm is completely restarted at the newly introduced breakpoint. This restart idea is in contrast with the classic method, where we simply re-use the previously computed information about the minorant and majorant to update their estimates and continue working with them. In the linearized version we do not keep enough information to perform such an operation, so all data about minorant and majorant is discarded and the algorithm begins anew. Because of this choice the same tube segment might be reprocessed up to $O(n)$ times in the method, and therefore the overall worst case performance is $O(n^2)$. This fact was already observed in \citep{fastTV}.

In what follows we describe the rationale behind the height update formulae.

\begin{algorithm}[t]\small
 \caption{\small Linearized taut string algorithm for TV-L1-proximity}
 \label{algTV1tautString}
 \begin{algorithmic}[1]
   \State Initialize $i = \bar i = \ubar i = \bar h = \ubar h = 0$, $\ubar \delta = y_0 + \lambda$, $\bar \delta = y_0 - \lambda$
   \While{$i < n$}
   \State Find tube height: $\tilde \lambda = \lambda$ if $i < n-1$, else $\tilde \lambda = 0$
   \State Update majorant height following current slope: $\bar h = \bar h + \bar \delta - y_i$.
   \State /* Check for ceiling violation: majorant is above tube ceiling */
   \If { $\bar h > \tilde \lambda$}
     \State Build valid segment up to last majorant breaking point: $\vx_{i_0+1:\bar i} = \bar \delta$.
     \State Start new segment after break: $(i_0, \ubar i) = \bar i$, $\ubar \delta = y_i + 2\lambda$, $\bar \delta = y_i$, $\ubar h = \lambda$, $\bar h = -\lambda$, $i = \bar i+1$
     \State \textbf{continue} 
   \EndIf
   \State Update minorant height following current slope: $\ubar h = \ubar h + \ubar \delta - y_i$.
   \State /* Check for bottom violation: minorant is below tube bottom */
   \If { $\ubar h < -\tilde \lambda$}
     \State Build valid segment up to last minorant breaking point: $\vx_{i_0+1:\ubar i} = \ubar \delta$.
     \State Start new segment after break: $(i_0, \bar i) = \ubar i$, $\ubar \delta = y_i$, $\bar \delta = -2\lambda + y_i$, $\ubar h = \lambda$, $\bar h = -\lambda$, $i = \ubar i + 1$
     \State \textbf{continue}
   \EndIf
   \State /* Check if majorant height is below the floor */
   \If { $\bar h \leq -\tilde \lambda$ }
     \State Correct slope: $\bar \delta = \bar \delta + \frac{\tilde \lambda - \bar h}{i - i_0}$
     \State The majorant now touches the floor: $\bar h = -\tilde \lambda$
     \State This is a possible majorant breaking point: $\bar i = i$
   \EndIf
   \State /* Check if minorant height is above the ceiling */
   \If { $\ubar h \geq \tilde \lambda$ }
     \State Correct slope: $\ubar \delta = \ubar \delta + \frac{-\tilde\lambda - \ubar h}{i - i_0}$
     \State The minorant now touches the ceiling: $\ubar h = \tilde \lambda$
     \State This is a possible minorant breaking point: $\ubar i = i$
   \EndIf
   \State Continue building current segment: $i = i+1$
  \EndWhile
  \State Build last valid segment: $\vx_{i_0+1:n} = \bar \delta$.
 \end{algorithmic}
\end{algorithm}

\paragraph{Height variables.} To implement the method described above, the height variables $h$ are not strictly necessary as they can be obtained from the slopes $\delta$. However, explicitly including them leads to efficient updating rules at each iteration, as we show below. 

Suppose we are updating the heights and slopes from their estimates at step $i-1$ to step $i$. Updating the heights is immediate given the slopes, since
\begin{equation*}
 h_{i} = h_{i-1} + \delta - y_i.
\end{equation*}
In other words, since we are following a line with slope $\delta$, the change in height from one step to the next is given by precisely such a slope. Note, however, that in this algorithm we do not compute absolute heights but instead relative heights with respect to the $\lambda$--tube center. Therefore we need to account for the change in the tube center between steps $i-1$ and $i$, which is given by $r_{i} - r_{i-1} = y_{i}$. This completes the update, which is shown in Algorithm \ref{algTV1tautString} as lines 4 and 11.

However, it is possible that the new height $h$ runs over or under the tube. This would mean that we cannot continue using the current slope in the majorant or minorant, and a recalculation is needed, which again can be done efficiently by using the height information. Assume without loss of generality that the starting index of the current segment is $0$ and the absolute height of the starting point of the segment is given by $\alpha$. Then, for adjusting the minorant slope $\bar \delta_i$ so that it touches the tube ceiling at the current point, we note that
\begin{equation*}
 \bar \delta_i = \frac{\lambda + r_i - \alpha}{i} = \frac{\lambda + (\bar h_i - \bar h_i) + r_i - \alpha}{i},
\end{equation*}
where we have also added and subtracted the current value of $\bar h_i$. Observe that this value was computed using the estimate $\delta_{i-1}$ of the slope so far, so we can rewrite it as the projection of the initial point in the segment following such a slope, that is, as $\bar h_{i} = i \bar \delta_{i} - r_{i} + \alpha$. Doing so for one of the added heights $\bar h_{i}$ produces
\begin{equation*}
 \bar \delta_i = \frac{\lambda + (i \bar \delta_{i-1} - r_{i} + \alpha) - \bar h_i + r_{i} - \alpha}{i} = \bar \delta_{i-1} + \frac{\lambda - \bar h_i}{i},
\end{equation*}
which generates a simple updating rule. A similar derivation holds for the minorant. The resulting updates are included in the algorithm in lines 20 and 26. After recomputing this slope we need to adjust the corresponding height back to the tube: since the heights are relative to the tube center we can just set $\bar h = \lambda$,  $\ubar h = -\lambda$; this is done in lines 21 and 27.

\begin{figure}[htbp]
  \centering
  \includegraphics[width = \textwidth]{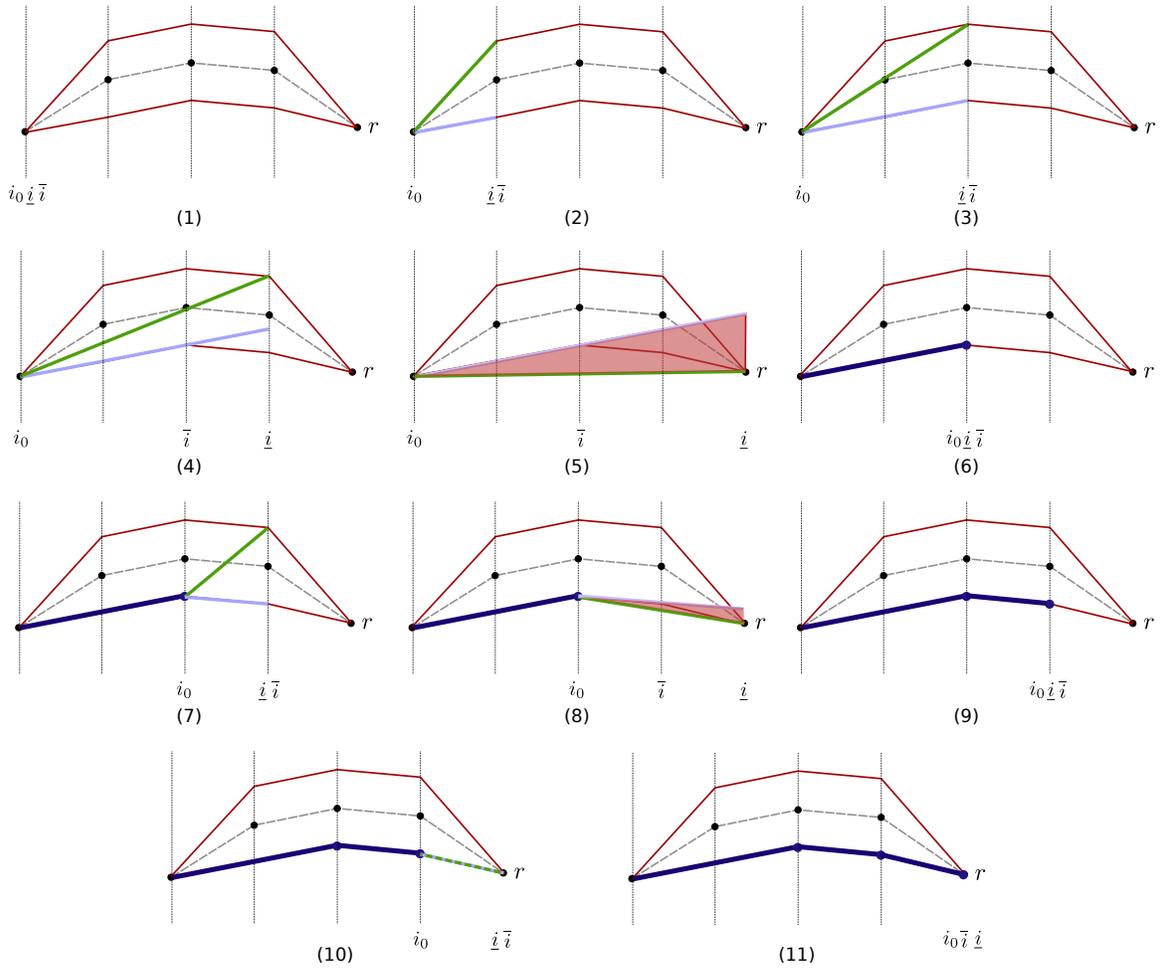}
  \caption{\small Example of the evolution of the linearized taut string method. The smallest affine majorant of the tube bottom (blue) and greatest affine minorant of the tube ceiling (green) are updated at every step. At step (1) the algorithm is initialized. Steps (2) to (4) successfully manage to update majorant/minorant without crossings. At step (5), however, the slopes cross, and so it is necessary to break the segment. Since the left-most tube touching point is the one in the majorant, the majorant is broken down at that point and its left-hand side is added to the solution, resulting in (6). The method is then restarted at the break point, with majorant/minorant being updated at step (7), though at step (8) once again a crossing is detected. Hence, at step (9) a breaking point is introduced again and the algorithm is restarted once more. Following this, step (10) manages to update majorant/minorant slopes up to the end of the tube, and so at step (11) the final segment is built using the (now equal) slopes.
   }   \label{fig:tautStringAlgEvolution}
\end{figure}

Notice also that the special case of the last point in the tube where the taut-string must meet $s_n = r_n$ is handled by line 3, where $\tilde \lambda$ is set to $0$ at such a point to enforce this constraint. Overall, one iteration of the method is very efficient, as mostly just additions and subtractions are involved with the sole exception of the division required for the slope updates, which are not performed at every iteration. Moreover, no additional memory is required beyond the constant number of bookkeeping variables, and in-place updates are also possible because $y_i$ values for already fixed sections of the taut-string are not required again, so the output $\vx$ and the input $\vy$ can both refer to the same memory locations.

The resulting algorithm turns out to be equivalent, almost line by line, to the method of~\citet{fastTV}, even though its theoretical grounds are radically different: while the approach presented here has a strong geometric basis due to its taut-string relationship, \citep{fastTV} is based solely on analysis of KKT conditions. Therefore, we have shown that Condat's fast TV method is, in fact, a linearized taut-string algorithm. 

\subsection{Comparison of taut-string methods and a hybrid strategy}
\label{sec:tauthybrid}

\begin{table}
  \begin{tabularx}{\textwidth}{l|X|X}
    & \bf Classic & \bf Linearized (Condat's) \\
  \hline
  Worst-case performance & $O(n)$ & $O(n^2)$ \\
  \hline
  In--memory & No & Yes \\
  \hline
  Other considerations & Fast bookkeeping through double-ended queues & Very fast iteration, cache friendly
  \end{tabularx}
  \caption{Comparison of the main features of reviewed taut-string algorithms.}
  \label{tab:tautstringComparison}
\end{table}

Table \ref{tab:tautstringComparison} summarizes the main features of the classic and linearized taut-string methods reviewed so far. Although the classic taut-string method has been largely neglected in the machine learning literature, its guarantee in linear performance makes it an attractive choice. Furthermore, although we could not find any references on implementation details of this method, we have empirically seen that a very efficient solver can be produced by making use of a double-ended queue to bookkeep the majorant/minorant information.

In contrast to this, the linearized taut-string method (equivalent to \citep{fastTV}) features a much better performance per step in the tube traversal, mainly due to not requiring additional memory and making use of only a small constant number of variables, making the method friendly for CPU cache or registers calculation. As a tradeoff of keeping such scarce information in memory, the method does not guarantee linear performance, falling to a quadratic theoretical runtime in the worst case. This fact was already observed in \citep{fastTV}, though such worst case was deemed as pathological, claiming a $O(n)$ performance in all practical situations. We shall review these claims in the experimental sections in this manuscript.

The key points of Table \ref{tab:tautstringComparison} show that no taut-string variant is clearly superior. While the classic method provides a safe linear time solution to the problem, the linearized method is potentially faster but riskier in terms of worst case performance. Following these observations we propose here a simple hybrid method combining both approaches: run the linearized algorithm up to a prefixed number of steps $n^S$, $S\in(1,2)$, and if the solution has not yet been found, we switch to the classic method. We therefore limit the worst-case scenario to $O(n^S) + O(n) \simeq O(n^S)$, because once the classic method kicks, it will ensure an $O(n)$ performance guarantee.

Implementation of this hybrid method is easy upon realizing the similarities between algorithms: a switch--check is added to the linearized method every time a segment of the taut-string has been identified (Algorithm \ref{algTV1tautString}, lines 7, 14). If it is confirmed that the method has already run for $n^S$ steps without reaching the solution, the remaining part of the signal for which the taut-string has not yet been found is passed on to the classic method, whose solution is concatenated to the part the linearized method managed to find so far. We also report the empirical performance of this method in the experimental section.

\subsection{Taut-string methods for weighted $\tvell_1^{\oned}$}
\label{sec:tautweighted}
Several applications TV require penalizing the discrete gradients individually, which can be done by solving the \emph{weighted TV-L1} problem
\begin{equation}
  \label{eq.14}
  \nlmin_{\vx}\quad\half\enorm{\vx-\vy}^2 + \nlsum_{i=1}^{n-1} w_i|x_{i+1}-x_i|,
\end{equation}
where the weights $\{w_i\}_{i=1}^{n-1}$ are all positive. To solve~\eqref{eq.14} using a taut-string approach, we again begin with its dual (written as a minimization problem)
\begin{equation}
  \label{eq.15}
  \nlmin_{\vu}\quad\half\enorm{\md^T\vu}^2 - \vu^T\bm{Dy}\quad\text{s.t.}\quad |u_i| \le w_i,\quad 1 \le i < n.
\end{equation}
Then, we repeat the derivation of the unweighted taut-string method but with a few key modifications. More precisely, we transform~\eqref{eq.15} by introducing $u_0 = u_n = 0$ to obtain
\begin{equation*}
 \min_{\vu}\ \nlsum_{i=1}^n \left(y_i - u_i + u_{i-1} \right)^2 \quad \text{s.t.}\  \, |u_i| \le w_i,\quad 1 \le i < n.
\end{equation*}
Then, we perform the change of variables $\vs = \vr - \vu$, where $r_i := \sum_{k=1}^i y_k$, and consider
\begin{equation*}
 \min_{\vs} \nlsum_{i=1}^n \left(s_i - s_{i-1} \right)^2 \quad \text{s.t.}\  |s_i - r_i| \le w_i,\quad 1 \le i < n\ s_0 = 0,\ s_n = r_n.
\end{equation*}
Finally, applying Theorem~\ref{the:tautStringEq} we obtain the equivalent \emph{weighted taut-string} problem
\begin{equation}
  \label{eq:34}
 \min_{\vs} \nlsum_{i=1}^n \sqrt{1 + \left(s_i - s_{i-1} \right)^2} \quad \text{s.t.}\ |s_i - r_i| \le w_i,\ 1\le i < n,\ s_0 = 0,\ s_n = r_n.
\end{equation}

Problem~\eqref{eq:34} differs from its unweighted counterpart~\eqref{eq:tautString} in the constraints $|s_i - r_i| \le w_i$ ($1\le i < n$), which allow different weights for each component instead of using the same value $\lambda$. Our geometric intuition also carries over to the weighted problem, albeit with a slight modification: the tube we are trying to traverse now has varying widths at each step instead of the previous fixed $\lambda$ width---Figure~\ref{fig:tautStringWeightedExample} illustrates this idea.

\begin{figure}[htbp]
  \centering
  \includegraphics[width = \textwidth]{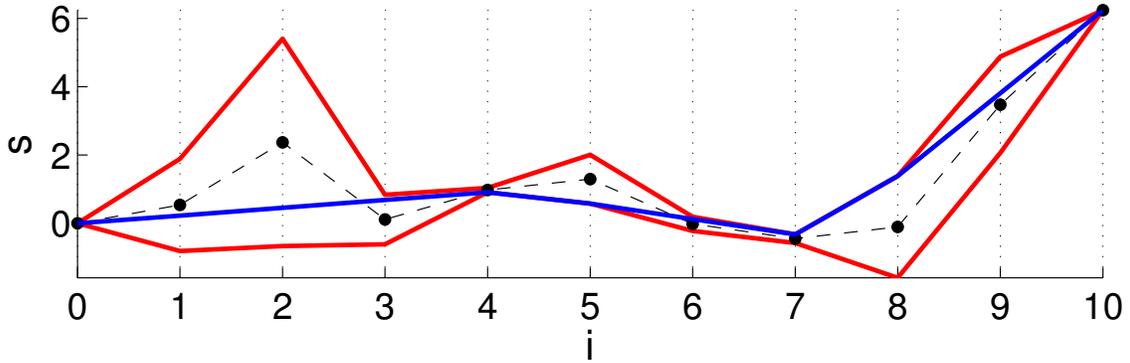}
  \caption{\small Example of the weighted taut string method with $\vw=(1.35$, $3.03$, $0.73$, $0.06$, $0.71$, $0.20$, $0.12$, $1.49$, $1.41)$. The cumulative sum $\vr$ of the input signal values $\vy$ is shown as the dashed line, with the black dots marking the points $(i,\vr_i)$. The bottom and ceiling of the tube are shown in red, which vary in width at each step following the weights $\vw_i$. The weighted taut string solution $\vs$ is shown as a blue line.
   }   \label{fig:tautStringWeightedExample}
\end{figure}

As a consequence of the above derivation and intuition, taut-string methods can be produced to solve the weighted $\tvell_1^{\oned}$ problem. The original formulation of the classic taut-string method in~\citep{daviesTautString} defines the limits of the tube through possibly varying bottom and ceiling values $(l_i, u_i)$ $\forall i$, and so this method easily extends to solve the weighted TV problem by assigning $l_i = r_i - w_i$, $u_i = r_i + w_i$. In our pseudocode in Algorithm~\ref{algTV1classicTautString} we just need to replace $\lambda$ by the appropriate $w_i$ values.

Similar considerations apply for the linearized version (Algorithm~\ref{algTV1tautString}), in particular, when checking ceiling/floor violations as well as when checking slope recomputations and restarts, we must account for varying tube heights. Algorithm \ref{algTV1tautStringWeighted} presents the precise modifications that we must make to Algorithm~\ref{algTV1tautString} to handle weights. Regarding the convergence of this method, the proof of equivalence with the classic taut-string method still holds in the weighted case (see Appendix \ref{app:tautStringAlgEq}).

\begin{algorithm}[htp]\small
 \caption{\small Modified lines for weighted version of Algorithm~\ref{algTV1tautString}}
 \label{algTV1tautStringWeighted}
 \begin{algorithmic}[1]
   \makeatletter
   \setcounter{ALG@line}{2}
   \makeatother
   \State\hskip12pt Find tube height: $\tilde \lambda = w_{i+1}$ if $i < n-1$, else $\tilde \lambda = 0$
   \makeatletter
   \setcounter{ALG@line}{7}
   \makeatother
     \State\hskip12pt Start new segment after break: $(i_0, \ubar i) = \bar i$, $\ubar \delta = y_i + w_{i-1} + w_i$, $\bar \delta = y_i + w_{i-1} - w_i$, $\ubar h = w_i$, $\bar h = -w_i$, $i = \bar i+1$
   \makeatletter
   \setcounter{ALG@line}{14}
   \makeatother
     \State\hskip12pt Start new segment after break: $(i_0, \bar i) = \ubar i$, $\ubar \delta = y_i + w_{i-1} - w_i$, $\bar \delta = y_i + w_{i-1} + w_i$, $\ubar h = w_i$, $\bar h = -w_i$, $i = \ubar i + 1$
 \end{algorithmic}
\end{algorithm}

The very same analysis as portrayed in Table \ref{tab:tautstringComparison} applies here: both the benefits and problems of the two taut-string solvers carry on to the weighted variant of the problem.

\section{Other one-dimensional TV variants}
\label{sec:tvoneothers}
While more infrequent, replacing the $\ell_1$ norm of the standard TV regularizer by an $\ell_p$-norm version can also be useful. In this section we focus first on a specialized solver for $p=2$, before discussing a less efficient but more general solver for any $\ell_p$ with $p \geq 1$. We also briefly cover the $p = \infty$ case.

\subsection{TV-L2: Proximity for $\tvell_2^{\oned}$}
\label{sec:tvl2}
For TV-L2 proximity ($p=2$) the dual to the prox-operator for~(\ref{eq.oned}) reduces to
\begin{equation}
  \label{eqTV2dual}
  \nlmin_{\vu}\ \phi(\vu) := \half\enorm{\md^T\vu}^2 - \vu^T\md\vy,\ \ \text{s.t.}\ \norm{\vu}{2} \le \lambda.
\end{equation}
Problem~\eqref{eqTV2dual} is nothing but a version of the well-known trust-region subproblem (TRS), for which a variety of numerical approaches are known~\citep{Conn}. 

We derive a specialized algorithm based on the classic Mor\'e-Sorensen Newton (MSN) method of~\citep{More83}. This method in general can be quite expensive, but for~\eqref{eqTV2dual} the Hessian is tridiagonal which can be well-exploited (see Appendix~\ref{app:projNewton}). Curiously, experiments show that for a limited range of $\lambda$ values, even ordinary gradient-projection (GP) can be competitive. But for overall best performance, a hybrid MSN-GP approach is preferable. 




Towards solving~\eqref{eqTV2dual}, consider its KKT conditions:
\begin{equation}
 \label{eqCSPTV2}
 \begin{split}
   &(\md\md^T + \alpha \mi) \vu = \md\vy,\\
   &\alpha (\enorm{\vu} - \lambda) = 0,\quad \alpha \ge 0,
 \end{split}
\end{equation}
where $\alpha$ is a Lagrange multiplier. There are two possible cases: either $\enorm{\vu} < \lambda$ or $\enorm{\vu} = \lambda$.

If $\enorm{\vu} < \lambda$, then the KKT condition $\alpha (\enorm{\vu} - \lambda) = 0$, implies that $\alpha=0$ must hold and $\vu$ can be obtained immediately by solving the linear system $\md\md^T\vu = \md\vy$. This can be done in $O(n)$ time owing to the bidiagonal structure of $\md$. Conversely, if the solution to $\md\md^T\vu = \md\vy$ lies in the interior of the ball $\enorm{\vu} \le \lambda$, then it solves (\ref{eqCSPTV2}). Therefore, this case is trivial, and we need to consider only the harder case $\enorm{\vu}=\lambda$.


For any given $\alpha$ one can obtain the corresponding vector $\vu$ as $\vu_{\alpha} = (\md\md^T+\alpha\mi)^{-1}\md\vy$. Therefore, optimizing for $\vu$ reduces to the problem of finding the ``true'' value of $\alpha$. 

An obvious approach is to solve  $\enorm{\vu_{\alpha}}^2=\lambda^2$. Less obvious is the \emph{MSN equation}
\begin{equation}
  \label{eq:25}
  h_{\alpha} := \lambda^{-1} - \norm{\vu_{\alpha}}{2}^{-1} = 0,
\end{equation}
which has the benefit of being almost linear in the search interval, which results in fast convergence~\citep{More83}. Thus, the task is to find the root of the function $h_{\alpha}$, for which we use Newton's method, which in this case leads to the iteration
\begin{equation}
  \label{eq:2}
  \alpha  \gets \alpha - h_{\alpha}/h'_{\alpha}.
\end{equation}
Some calculation shows that the derivative $h'$ can be computed as
\begin{equation}
  \label{eq:1}
  \frac{1}{h'_{\alpha}} = \frac{\enorm{\vu_{\alpha}}^3}{\vu_{\alpha}^T(\md\md^T+\alpha\mi)^{-1}\vu_{\alpha}}.
\end{equation}

The key idea in MSN is to eliminate the matrix inverse in~\eqref{eq:1} by using the Cholesky decomposition $\md \md^T + \alpha \mi = \mr^T_{\alpha} \mr_{\alpha}$ and defining a vector $\vq_{\alpha} = (\mr^T_{\alpha})^{-1} \vu$, so that $\enorm{\vq_{\alpha}}^2 = \vu_{\alpha}^T(\md \md^T+\alpha \mi)^{-1} \vu_{\alpha}$. As a result, the Newton iteration~\eqref{eq:2} becomes

\begin{eqnarray*}
        \alpha - \frac{h_{\alpha}}{h'_{\alpha}} & = & \alpha - (\enorm{\vu_{\alpha}}^{-1} - \lambda^{-1}) \cdot \frac{\enorm{\vu_{\alpha}}^3}{\vu_{\alpha}^T(\md \md^T+\alpha \mi)^{-1} \vu_{\alpha}}, \\
         & = & \alpha - \frac{\enorm{\vu_{\alpha}}^2 - \lambda^{-1} \enorm{\vu_{\alpha}}^3}{\enorm{\vq_{\alpha}}^2},  \\
         & = & \alpha - \frac{\enorm{\vu_{\alpha}}^2}{\enorm{\vq_{\alpha}}^2} \left( 1 - \frac{\enorm{\vu_{\alpha}}}{\lambda} \right), \\
\end{eqnarray*}
and therefore
\begin{equation}
  \label{eq:MSNupdate}
  \alpha \quad \gets \quad \alpha - \frac{\enorm{\vu_{\alpha}}^2}{\enorm{\vq_{\alpha}}^2} \left( 1 - \frac{\enorm{\vu_{\alpha}}}{\lambda} \right).
\end{equation}

As shown for TV-L$_1$ (Appendix~\ref{app:projNewton}), the tridiagonal structure of $(\md \md^T + \alpha \mi)$ allows one to compute both $\mr_{\alpha}$ and $\vq_{\alpha}$ in linear time, so the overall iteration runs in $O(n)$ time.

The above ideas are presented as pseudocode in Algorithm \ref{algMS}. As a stopping criterion two conditions are checked: whether the duality gap is small enough, and whether $\vu$ is close enough to the boundary. This latter check is useful because intermediate solutions could be dual-infeasible, thus making the duality gap an inadequate optimality measure on its own. In practice we use tolerance values $\epsilon_{\lambda} = 10^{-6}$ and $\epsilon_{\text{gap}} = 10^{-5}$.

\begin{algorithm}[t]
 \caption{MSN based TV-L2 proximity}
 \label{algMS}
 \begin{algorithmic}
  \State {\it Initialize:} $\alpha = 0$, $\vu_{\alpha} = 0$.
  \While{$\left| \enorm{\vu_{\alpha}}^2 - \lambda \right|$ $>$ $\epsilon_{\lambda}$ \textbf{or} $\text{gap}(\vu_{\alpha})$ $>$ $\epsilon_{\text{gap}}$}
   \State Compute Cholesky decomp. $\md \md^T + \alpha \mi = \mr^T_{\alpha} \mr_{\alpha}$.
   \State Obtain $\vu_{\alpha}$ by solving $\mr^T_{\alpha} \mr_{\alpha} \vu_{\alpha} = \md \vy$.
   \State Obtain $\vq_{\alpha}$ by solving $\mr^T_{\alpha} \vq_{\alpha} = \vu_{\alpha}$.
   \State $\alpha = \alpha - \frac{\enorm{\vu_{\alpha}}^2}{\enorm{\vq_{\alpha}}^2} \left( 1 - \frac{\enorm{\vu_{\alpha}}}{\lambda} \right)$.
  \EndWhile
  \State\Return $\vu_{\alpha}$
 \end{algorithmic}
\end{algorithm}

Even though Algorithm~\ref{algMS} requires only linear time per iteration, it is fairly sophisticated, and in fact a much simpler method can be devised. This is illustrated here by a gradient-projection method with a \emph{fixed} stepsize $\alpha_0$, whose iteration is
\begin{equation}
  \label{eq:gp}
  \vu^{t+1} = P_{\enorm{\cdot} \le \lambda}(\vu^t - \alpha_0\nabla\phi(\vu^t)).
\end{equation}

The theoretically ideal choice for the stepsize $\alpha_0$ is given by the inverse of the Lipschitz constant $L$ of the gradient $\nabla \phi(\vu)$~\citep{nest07,fista}. Since $\phi(\vu)$ is a convex quadratic, $L$ is simply the largest eigenvalue of the Hessian $\md \md^T$. Owing to its special structure, the eigenvalues of the Hessian have closed-form expressions, namely $\lambda_i = 2 - 2 \cos \left(\frac{i \pi}{n+1}\right)$ (for $1\le i \le n$). The largest one is $\lambda_{n} = 2 - 2 \cos \left(\frac{(n-1) \pi}{n}\right)$, which tends to $4$ as $n \to \infty$; thus the choice $\alpha_0 = 1/4$ is a good and cheap approximation. Pseudocode showing the whole procedure is presented in Algorithm \ref{alg:GP}. Combining this with the fact that the projection $P_{\enorm{\cdot} \le \lambda}$ is also trivial to compute, the GP iteration~\eqref{eq:gp} turns out to be very attractive. Indeed, sometimes it can even outperform the more sophisticated MSN method, though only for a very limited range of $\lambda$ values. 
Therefore, in practice we recommend a hybrid of GP and MSN, as suggested by our experiments (see~\S\ref{sec:exptvl2}).

\begin{algorithm}[tbp]
 \caption{GP algorithm for TV-L$_2$ proximity}
 \label{alg:GP}
 \begin{algorithmic}
  \State {\it Initialize} $\vu^0 \in {\mathbb R}^N$, $t=0$.
  \While{($\neg$ converged)}
   \State Gradient update: $\vv^t = \vu^t - \frac{1}{4} \nabla f(\vu^t)$.
   \State Projection: $\vu^{t+1} = \max(1-\lambda/\enorm{\vv^t}, 0)\cdot \vv^t$.
   \State $t \gets t + 1$.
  \EndWhile
   \State\Return $\vu^t$.
 \end{algorithmic}
\end{algorithm}

\subsection{TV-Lp: Proximity for $\tvell_p^{1D}$}
\label{sec:tvp}
For TV-$L_p$ proximity (for $1 < p < \infty$) the dual problem becomes
\begin{equation}
  \label{eqTVpdual}
  \min_{\vu}\ \phi(\vu) := \half\enorm{\md^T\vu}^2 - \vu^T\md\vy,\ \ \text{s.t.}\ \norm{\vu}{q} \le \lambda,
\end{equation}
where $q = 1 / (1 - 1/p)$.  Problem~\eqref{eqTVpdual} is not particularly amenable to Newton-type approaches, as neither PN (Appendix \ref{app:projNewton}), nor MSN-type methods (\S\ref{sec:tvl2}) can be applied easily. It is partially amenable to gradient-projection (GP), for which the same update rule as in~\eqref{eq:gp} applies, but unlike the $q=2$ case, the projection step here is much more involved. Thus, to complement GP, we may favor the projection-free Frank-Wolfe (FW) method. As expected, the overall best performing approach is actually a hybrid of GP and FW. We summarize both choices below.

\subsubsection{Efficient projection onto the $\ell_q$-ball}


The problem of projecting onto the $\ell_q$-norm ball is
\begin{equation}
  \label{eq:11}
  \min\nolimits_{\vw}\quad d(\vw) := \half \enorm{\vw - \vu}^2,\quad \text{s.t.}\ \ \norm{\vw}{q} \leq \lambda.
\end{equation}
For this problem, it turns out to be more convenient to address its Fenchel dual
\begin{equation}
  \label{eq:8}
  \nlmin_{\vw}\quad d^*(\vw) := \half \enorm{\vw - \vu}^2  +  \lambda \norm{\vw}{p},
\end{equation}
which is actually nothing but $\prox_{\lambda\norm{\cdot}{p}}(\vu)$. The optimal solution, say $\vw^*$, to~\eqref{eq:11} can be obtained by solving~\eqref{eq:8}, by using the  Moreau-decomposition~\eqref{eq.10} which yields
\begin{equation*}
  \vw^* = \vu - \prox_{\lambda\norm{\cdot}{p}}(\vu).
\end{equation*}
Projection~\eqref{eq:11} is computed many times within GP, so it is crucial to solve it rapidly and accurately. To this end, we first turn~\eqref{eq:8} into a differentiable problem and then derive a projected-Newton method following our approach presented in Appendix \ref{app:projNewton}. 

Assume therefore, without loss of generality that $\vu \ge 0$, so that $\vw \ge 0$ also holds (the signs can be restored after solving this problem). Thus, instead of~\eqref{eq:8}, we solve
\begin{equation}
  \label{eq:10}
  \nlmin_{\vw}\quad d^*(\vw) := \half\enorm{\vw-\vu}^2 + \lambda\bigl(\nlsum_i w_i^p\bigr)^{1/p}\quad \text{s.t.}\ \vw \ge 0.
\end{equation}
The gradient of $d^*$ 
may be compactly written as
\begin{equation}
  \label{eq:lpgrad}
  \nabla d^*(\vw) = \vw - \vu + \lambda \norm{\vw}{p}^{1-p}\vw^{p-1},
\end{equation}
where $\vw^{p-1}$ denotes elementwise exponentiation of $\vw$. Elementary calculation yields
\begin{align*}
 \tfrac{\partial^2}{\partial w_i \partial w_j} d^*(\vw)  &= 
  \delta_{ij} \bigl( 1 + \lambda (p-1) \bigl( \tfrac{w_i}{\norm{w}{p}} \bigr)^{p-2} \norm{\vw}{p}^{-1} \bigr) + \lambda (1-p) \bigl( \tfrac{w_i}{\norm{w}{p}} \bigr)^{p-1} \bigl( \tfrac{w_j}{\norm{w}{p}} \bigr)^{p-1} \norm{\vw}{p}^{-1} \\
  &= \delta_{ij} \bigl( 1 - c \hat \vw_i^{p-2} \bigr) + c \bar \vw_i \bar \vw_j,
\end{align*}
where $c := \lambda(1-p) \norm{\vw}{p}^{-1}$, $\hat{\bm{w} }:= \bm{w}/\norm{\vw}{p}$, $\bar{\vw} := (\vw/\norm{\vw}{p})^{p-1}$, and $\delta_{ij}$ is the Dirac delta. In matrix notation, this Hessian's diagonal plus rank-1 structure becomes apparent
\begin{equation}
 \label{eq:hessian}
 \mH (\vw) = \Diag \bigl( 1 - c \hat \vw^{p-2} \bigr) + c \bar \vw \cdot \bar \vw^T
\end{equation}

To develop an efficient Newton method it is imperative to exploit this structure. It is not hard to see that for a set of non-active variables $\bar I$ the reduced Hessian takes the form
\begin{equation}
  \label{eq:lpredhess}
  \mH_{\bar I}(\vw) = \Diag \bigl( \bm{1} - c \hat \vw^{p-2}_{\bar I} \bigr) + c \bar \vw_{\bar I} \bar \vw_{\bar I}^T.
\end{equation}
With the shorthand $\Delta = \Diag \bigl( \bm{1} - c \hat \vw^{p-2}_{\bar I} \bigr)$, the matrix-inversion lemma yields
\begin{equation}
  \label{eq:smlpredhess}
  \mH^{-1}_{\bar I}(\vw) = \bigl( \Delta + c \bar \vw_{\bar I} \bar \vw_{\bar I}^T \bigr)^{-1} = \Delta^{-1} - \frac{\Delta^{-1} c \bar\vw_{\bar I} \bar\vw_{\bar I}^T \Delta^{-1}}{1 + c \bar\vw_{\bar I}^T \Delta^{-1} \bar\vw_{\bar I}}.
\end{equation}
Furthermore, since in PN the inverse of the reduced Hessian always operates on the reduced gradient, we can rearrange the terms in this operation for further efficiency; that is, 
\begin{equation}
   \mH_{\bar I}(\vw)^{-1} \nabla_{\bar I} f(\vw) = \vv \odot \nabla_{\bar I} f(\vw) - \frac{\bigl( \vv \odot \bar\vw_{\bar I} \bigr) \bigl( \vv \odot \bar\vw_{\bar I} \bigr)^T \nabla_{\bar I} f(\vw)} {1/c + \bar\vw_{\bar I} \bigl( \vv \odot \bar\vw_{\bar I} \bigr)} ,
\end{equation}
where $\vv := \bigl(\bm{1} - c \hat\vw^{p-2}_{\bar I}\bigr)^{-1}$, and $\odot$ denotes componentwise product.

The relevant point of the above derivations is that the Newton direction, and thus the overall PN iteration can be computed in $O(n)$ time, which results in a highly effective solver. 

\subsubsection{Frank-Wolfe algorithm for TV-$L_p$ proximity}
\label{sec:FW}
The Frank-Wolfe (FW) algorithm (see e.g., \citep{JaggiFW} for a recent overview), also known as the conditional gradient method~\citep{Bertsekas} solves differentiable optimization problems over compact convex sets, and can be quite effective if we have access to a subroutine to solve linear problems over the constraint set.

The generic FW iteration is illustrated in Algorithm \ref{alg:FW}. FW offers an attractive strategy for TV-$L_p$ because both the descent-direction as well as stepsizes can be computed easily. Specifically, to find the descent direction we need to solve
\begin{equation}
  \label{eq:5}
 \nlmin_{\vs}\quad\vs^T \left( \md \md^T \vu - \md \vy \right),\quad\text{s.t.}\quad\norm{\vs}{q} \le \lambda.
\end{equation}
This problem can be solved by observing that $\max_{\norm{\vs}{q} \le 1} \vs^T \vz$ is attained by some vector $\vs$  proportional to $\vz$, of the form $|\vs^*| \propto |\vz|^{p-1}$. Therefore, $\vs^*$ in~\eqref{eq:5} is found by taking $\vz = \md \md^T \vu - \md \vy$, computing $\vs = - \sgn(\vz) \odot \left| \vz \right|^{p-1}$ and then rescaling $\vs$ to meet $\norm{\vs}{q} = \lambda$.

\begin{algorithm}[htbp]
 \caption{
    \label{alg:FW}
    Frank-Wolfe (FW)
 }
 \begin{algorithmic}
  \State {\bf Inputs:} $f$, compact convex set ${\cal D}$.
  \State Initialize $\vx_0 \in {\cal D}$, $t=0$.
  \While{stopping criteria not met}
   \State Find descent direction: $\min_{\vs} \vs \cdot \nabla f (\vx_t) \; \text{s.t.} \; \vs \in {\cal D}$.
   \State Determine stepsize: $\min_{\gamma} f(\vx_t + \gamma (\vs - \vx_t)) \; \text{s.t.} \; \gamma \in [0, 1]$.
   \State Update: $\vx_{t+1} = \vx_t + \gamma (\vs - \vx_t)$
   \State $t \gets t+1$.
  \EndWhile
  \State\Return{$\vx_t$.}
 \end{algorithmic}
\end{algorithm}

The stepsize can also be computed in closed form owing to the objective function being quadratic. Note the update in FW  takes the form $\vu + \gamma (\vs - \vu)$, which can be rewritten as $\vu + \gamma \vd$ with $\vd = \vs - \vu$. Using this notation the optimal stepsize is obtained by solving
\begin{equation*}
 \nlmin_{\gamma \in [0, 1]} \half \enorm{\md^T (\vu + \gamma \vd)}^2 - \left( \vu + \gamma \vd\right)^T \md \vy.
\end{equation*}
A brief calculation on the above problem yields 
\begin{equation*}
 \gamma^* = \min \left\lbrace \max \left\lbrace \hat\gamma, 1 \right\rbrace, 0 \right\rbrace,
\end{equation*}
where $\hat{\gamma} = - (\vd^T \md \md^T \vu + \vd^T \md \vy)/(\vd^T \md \md^T \vd)$ is the unconstrained optimal stepsize. We note that following \citep{JaggiFW} we 
also check a ``surrogate duality-gap''
\begin{equation*}
 g(\vx)  = \vx^T \nabla f(\vx) - \min_{\vs \in {\cal D}} \vs^T \nabla f(\vx)
         = \left( \vx - \vs^* \right)^T \nabla f(\vx),
\end{equation*}
at the end of each iteration. If this gap is smaller than the desired tolerance, the real duality gap is computed and checked; if it also meets the tolerance, the algorithm stops.

\subsection{Prox operator for TV-L$_\infty$}
The final case is $\tvell_{\infty}^{\oned}$ proximity. We mention this case only for completeness. The dual to the prox-operator here is
\begin{equation}
  \label{eq:20}
  \nlmin_{\vu}\quad\half\enorm{\md^T\vu}^2 - \vu^T\md\vy,\ \ \text{s.t.}\ \norm{\vu}{1} \le \lambda.
\end{equation}
This problem can be again easily solved by invoking GP, where the only non-trivial step is projection onto the $\ell_1$-ball. But the latter is an extremely well-studied operation (see e.g.,~\citep{liu09,kiwiel}), and so $O(n)$ time routines for this purpose are readily available. By integrating them in our GP framework an efficient prox solver is obtained.

\section{Prox operators for multidimensional TV}
\label{sec:proxMulti}
We now move onto discussing how use the efficient 1D-TV prox operators derived above within a prox-splitting framework to handle multidimensional TV~(\ref{eq.multid}) proximity. 

\subsection{Proximity stacking}
The basic composite objective~(\ref{eq.1}) is a special case of the more general class of models where one may have several regularizers, so that we now solve
\begin{equation}
  \label{eq:sumobj}
 \nlmin_{\vx} \quad f(\vx) + \nlsum_{i=1}^m r_i(\vx),
\end{equation}
where each $r_i$ (for $1\le i \le m$) is lsc and convex. 

Just like the basic problem~\eqref{eq.1}, the more complex problem~\eqref{eq:sumobj} can also be tackled via proximal methods. The key to doing so is to use \emph{inexact proximal methods} along with a technique we should call \textbf{proximity stacking}. Inexact proximal methods allow one to use approximately computed prox operators without impeding overall convergence, while proximity stacking allows one to compute the prox operator for the entire sum  $r(\vx) = \sum_{i=1}^m r_i(\vx)$ by ``stacking'' the individual $r_i$ prox operators. This stacking leads to a highly modular design; see Figure \ref{fig:proximalModulesStacking} for a visualization.  In other words, proximity stacking involves computing the prox operator
\begin{equation}
  \label{eq:proxstacking}
  \prox_r(\vy) := \argmin_{\vx}\quad \half\enorm{\vx - \vy}^2 + \nlsum_{i=1}^m r_i(\vx),
\end{equation}
by iteratively invoking the individual prox operators $\prox_{r_i}$ and then combining their outputs. This mixing is done by means of a combiner method, which guarantees convergence to the solution of the overall $\prox_r(\vy)$.

\begin{figure}[htbp]
  \centering
  \includegraphics[scale=0.25]{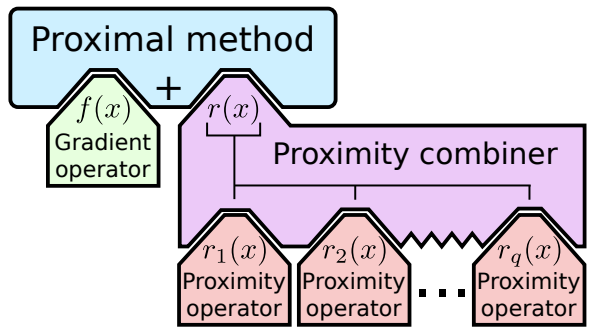}
  \caption{\small Design schema in proximal optimization for minimizing the function $f(\vx) + \sum_{i=1}^m r_i(\vx)$. Proximal stacking makes the sum of regularizers appear as a single one to the proximal method, while retaining modularity in the design of each proximity step through the use of a combiner method. For non-smooth $f$ the same schema applies by just replacing the $f$ gradient operator by its corresponding proximity operator.
   \label{fig:proximalModulesStacking}
   }
\end{figure}

Different proximal combiners can used for computing $\prox_r$~\eqref{eq:proxstacking}. In what follows we briefly describe some of the possibilities. The crux of all of them is that their key steps will be proximity steps over the individual $r_i$ terms. Thus, using proximal stacking and combination, any convex machine learning problem with multiple regularizers can be solved in a highly modular proximal framework. After this section we exemplify these ideas by applying them to two- and higher-dimensional TV proximity, which we then use within proximal solvers for addressing a wide array of applications.

\subsubsection{Proximal Dykstra (PD)}
\label{sec:PD}
The Proximal Dykstra method~\citep{Combettes09} solves problems of the form
\begin{equation*}
 \min_{\vx} \quad \half \enorm{\vx - \vy}^2 + r_1(\vx) + r_2(\vx),
\end{equation*}
which is a particular case of~\eqref{eq:proxstacking} for $m=2$. The method follows the procedure detailed in Algorithm~\ref{alg:PD}, which is guaranteed to converge to the desired solution. Using PD for proximal stacking for 2D Total-Variation was previously proposed in~\citep{Barbero11}.

It has also been shown that the application of this method is equivalent to performing alternating projections onto certain dual polytopes~\citep{jegBac13}, a procedure whose effectiveness varies depending on the relative orientation of such polytopes. A more efficient method based on reflections instead of projections is possible, as we will see below.

\begin{algorithm}
 \caption{
    \label{alg:PD}
    Proximal Dykstra
 }
 \begin{algorithmic}
  \State {\bf Inputs:} $r_1,r_2$, input signal $\vy \in \reals^n$.
  \State Initialize $\vx_0 = \vy$, $\vp_0 = \vq_0 = 0$, $t=0$.
  \While{stopping criteria not met}
   \State $r_2$ proximity operator: $\vz_t = \prox_{r_2} (\vx_t + \vp_t)$.
   \State $r_2$ step: $\vp_{t+1} = \vx_t + \vp_t - \vz_t$.
   \State $r_1$ proximity operator: $\vx_{t+1} = \prox_{r_1} (\vz_t + \vq_t)$.
   \State $r_1$ step: $\vq_{t+1} = \vz_t + \vq_t - \vx_{t+1}$.
   \State $t \gets t+1$.
  \EndWhile
  \State {\bf Return} $\vx_t$.
 \end{algorithmic}
\end{algorithm}

More generally, if more than two regularizers are present (i.e., $m > 2$), then it is more fitting to use \emph{Parallel-Proximal Dykstra} (PPD)~\citep{ParallelDykstra} (see Alg.~\ref{alg:PPXA}), a generalization obtained via the ``product-space trick'' of \citet{pierra84}. This parallel proximal method is attractive because it not only combines an arbitrary number of regularizers, but also allows parallelizing the calls to the individual prox operators. This feature allows us to develop a highly parallel implementation for multidimensional TV proximity (\S\ref{sec:multid}).

\begin{algorithm}[t]
  \caption{\small Parallel-Proximal Dykstra}
  \label{alg:PPXA}
  \begin{algorithmic}
    \State {\bf Inputs:} $r_1, \ldots, r_m$, input signal $\vy \in \reals^n$.
    \State Initialize $\vx_0 = \vy$, $\vz_{0}^i=0$, for $i = 1, \ldots, m$; $t=0$
    \While{stopping criterion not met}
    \For{$i=1$ to $m$ in \emph{parallel}}
    \State $\vp^i_t = \prox_{r_i} (\vz^i_t)$
    \EndFor
    \State $\vx_{t+1} = \frac{1}{m}\sum_i \vp^i_t$
    \For{$i=1$ to $m$ in \emph{parallel}}
    \State $\vz^i_{t+1} = \vx_{t+1} + \vz^i_t - \vp^i_t$
    \EndFor
    \State $t \gets t + 1$
    \EndWhile
    \State {\bf Return} $\vx_t$
  \end{algorithmic}
\end{algorithm}

\subsubsection{Alternating reflections -- Douglas-Rachford (DR)}
\label{sec:DR}
The Douglas-Rachford (DR) method was originally devised for minimizing the sum of two (nonsmooth) convex functions~\citep{Combettes09}, in the form:
\begin{equation}
 \label{eq:DRproblem}
 \min_{\vx} \quad f_1(\vx) + f_2(\vx),
\end{equation}
such that $(\text{ri}\, \dom f_1) \cap  (\text{ri} \, \dom f_2) \neq  \emptyset$. The method operates by iterating a series of reflections, and in its simplest form can be written as
\begin{equation}
 \label{eq:DRsequence}
 \vz_{k+1} = \half \left[ R_{f_1} R_{f_2} + I \right] \vz_k,
\end{equation}
where the \emph{reflection operator} $R_{\phi} := 2 \prox_{\phi} - I$. This method is not cleanly applicable to problem~\eqref{eq:proxstacking} because of the squared norm term. Nevertheless in \citep{jegBac13} a suitable transformation was proposed by making use of arguments from submodular optimization; a minimal background on this topic is given in Appendix \ref{app:mathbackground}. We summarize the key ideas from~\citep{jegBac13} below.

Assume $m=2$ and $r_1, r_2$ being Lovász extensions to some submodular functions (Total-Variation is the Lovász extension of a submodular graph-cut problem, see~\citep{Bach13}). Defining $\hat r_1(\vx) = r_1(\vx) - \vx^T \vy$, $\hat r_1$ is also a Lovász extension of some submodular function (see Appendix \ref{app:mathbackground}). Therefore, we may consider the problem
\begin{equation*}
 \prox_r(\vy) := \argmin_{\vx}\quad \half\enorm{\vx}^2 + \hat r_1(\vx) + r_2(\vx),
\end{equation*}
which can be rewritten (using Proposition \ref{pro:decompsubmodular}) as 
\begin{equation}
 \label{eq:tvdr}
 \min_{a, b} \enorm{a - b}, 
  \quad \text{s.t.} \quad a \in -B_{\hat r_1}, \, b \in B_{r_2},
\end{equation}
where $B_r$ denotes the base polytope of submodular function corresponding to $r$ (see Appendix \ref{app:mathbackground}). The original solution can be recovered through $\vx = \va - \vb$. Problem~\eqref{eq:tvdr} is still not in a form amenable to DR~\eqref{eq:DRproblem}---nevertheless, if we apply DR to the indicator functions of the sets $-B_{\hat r_1}, B_{r_2}$, that is, to the problem
\begin{equation*}
 \min_{\vx} \quad \delta_{-B_{\hat r_1}}(\vx) + \delta_{B_{r_2}}(\vx),
\end{equation*}
it can be shown~\citep{Bauschke04} that the sequence~\eqref{eq:DRsequence} generated by DR is divergent, but that after a correction through projection converges to the desired solution of~\eqref{eq:tvdr}. Such solution is given by the pair
\begin{equation}
 \label{eq:drprojection}
 \vb = \Pi_{B_{r_2}}(\vz_k), \quad 
 \va = \Pi_{-B_{\hat r_1}}(\vb).
\end{equation}
Although in this derivation many concepts have been introduced, suprisingly all the operations in the algorithm can be reduced to performing proximity steps. Note first that the projections onto a base polytope required to get a solution~\eqref{eq:drprojection} can be written in terms of proximity operators (Proposition~\ref{pro:projpolytopes}), which in this case implies
\begin{align*}
 \Pi_{B_{r_2}}(\vz) &= \vz - \prox_{r_2}(\vz), \\
 \Pi_{-B_{\hat r_1}}(\vz) &= \vz + \prox_{\hat r_2}(-\vz) = \vz + \prox_{r_2}(-\vz + \vy),
\end{align*}
where we use the fact that for $f(\vx) = \phi(\vx) + \vu^T \vx$, $\prox_f(\vx) = \prox_{\phi}(\vx - \vu)$. The reflection operations in which the DR iteration is based~\eqref{eq:DRsequence} can also be written in terms of proximity steps, as we are applying DR to the indicator functions $\delta_{-B_{\hat r_1}}, \delta_{B_{r_2}}$, and proximity for an indicator function equals projection. 

This alternating reflections variant of DR is presented in Algorithm~\ref{alg:DR}. Note that in contrast with the original DR method, this variant does not require tuning any hyperparameters, thus enhancing its practicality.

\begin{algorithm}
 \caption{
    \label{alg:DR}
    Alternating reflections -- Douglas Rachford (DR)
 }
 \begin{algorithmic}
  \State {\bf Inputs:} $r_1,r_2$ Lovász extensions of some submodular function, input signal $\vy \in \reals^n$.
  \State Initialize $\vz_0 \in \reals^n$, $t=0$.
  \State Define the following operations:
  \State $\quad \Pi_{-B_{\hat r_1}}(\vz) \defeq \vz + \prox_{r_1}(-\vz + \vy)$.
  \State $\quad \Pi_{B_{r_2}}(\vz) \defeq \vz - \prox_{r_2}(\vz)$.
  \State $\quad R_{-B_{\hat r_1}}(\vz) \defeq 2 \Pi_{-B_{\hat r_1}}(\vz) - \vz$.
  \State $\quad R_{B_{r_2}}(\vz) \defeq 2 \Pi_{B_{r_2}}(\vz) - \vz$.
  \While{stopping criteria not met}
   \State $\vz_{t+1} = \half \left[R_{-B_{\hat r_1}} R_{B_{r_2}} + I \right] \vz_k$
   \State $t \gets t+1$.
  \EndWhile
  \State $\vb = \Pi_{B_{r_2}}(z_t), \quad \va = \Pi_{-B_{\hat r_1}}(\vb)$.
  \State {\bf Return} $\vx^* = \va - \vb$.
 \end{algorithmic}
\end{algorithm}

\subsubsection{Alternating-Direction Method of Multipliers (ADMM)}
\label{sec:ADMM}

Although many times presented as a particular algorithm for solving problems involving the minimization of a certain objetive $f(x) + g(Lx)$ with $L$ a linear operator~\citep{Combettes09}, the Alternating-Direction Method of Multipliers can be thought as general splitting strategy for solving the unconstrained minimization of a sum of functions. This strategy boils down to transforming a problem in the form $\min_{\vx} \sum_{i=1}^m f_i(\vx)$ into a saddle-point problem by introducing consensus constraints and incorporating them into the objective through augmented Lagrange multipliers,
\begin{align*}
 \min_{\vx} \quad \sum_{i=1}^m f_i(\vx) &= \min_{\vx, \vz_1, \ldots, \vz_m} \quad \sum_{i=1}^m f_i(\vz_i) \quad \text{s.t.} \,\, \vz_1 = \vx, \ldots, \vz_m = \vx , \\
  &\equiv \min_{\vx, \vz_1, \ldots, \vz_m} \max_{\vu_1, \ldots, \vu_m} \quad \sum_{i=1}^m \left( f_i(\vz_i) + \vu_i^T (\vz_i - \vx) + \frac{\rho}{2} \enorm{\vz_i - \vx} \right).
\end{align*}
The method then proceeds to solve this problem by alternating steps of minimization on $\vx$, minimization on every $\vz_i$, and a gradient step on every $\vu_i$.

In~\citep{YangTV} a proposal using this method was presented to solve $m$--dimensional anisotropic TV~\eqref{eq.multid}. This approach applies equally to the more general proximal stacking framework under discussion here~\eqref{eq:proxstacking}, by the transformation
\begin{align*}
 \prox_r(\vy) &:= \argmin_{\vx}\quad \half\enorm{\vx - \vy}^2 + \nlsum_{i=1}^m r_i(\vx), \\
 &\equiv \min_{\vx, \vz_1, \ldots, \vz_m} \max_{\vu_1, \ldots, \vu_m} \quad \half\enorm{\vx - \vy}^2 +  \sum_{i=1}^m \left( f_i(\vz_i) + \vu_i^T (\vz_i - \vx) + \frac{\rho}{2} \enorm{\vz_i - \vx} \right).
\end{align*}
The steps for obtaining a solution then follow as Algorithm~\ref{alg:ADMM}. Similar to Parallel Proximal Dykstra, this approach allows computing the prox-operator of each function $r_i$ in parallel.

\begin{algorithm}[t]
  \caption{\small Alternating Direction Method of Multipliers (ADMM)}
  \label{alg:ADMM}
  \begin{algorithmic}
    \State {\bf Inputs:} $r_1, \ldots, r_m$, input signal $\vy \in \reals^n$.
    \State Initialize $\vx_0 = \vz^i_0 = \vy$ for $i = 1, \ldots, m$; $t=0$
    \While{stopping criterion not met}
    \State $\vx_{t+1} = \frac{\vy + \sum_{i=1}^m (\vu^i_t + \rho \vz^i_t)}{1 + m\rho}$.
    \For{$i=1$ to $m$ in \emph{parallel}}
    \State $\vz^i_t = \prox_{\frac{\lambda}{\rho} r_i} (-\frac{1}{\rho} \vu^i_t + \vx_{t+1})$
    \State $\vu^i_{t+1} = \vu_{t+1} + \rho(\vz^i_{t+1} - \vx_{t+1})$
    \EndFor
    \State $t \gets t + 1$
    \EndWhile
    \State {\bf Return} $\vx_t$
  \end{algorithmic}
\end{algorithm}

\subsubsection{Dual proximity methods}
\label{sec:dualproxmethods}
Another family of approaches to solve~\eqref{eq:proxstacking} is to compute the global proximity operator using the Fenchel duals $\prox_{r^*_i}$. This can be advantageous in settings where dual prox-operator is easier to compute than the primal operator; isotropic Total-Variation problems are an instance of such a setting, and thus investigating this approach for their anisotropic variants is worthwhile.

Indeed, in the context of image processing a popular splitting approach is given by~\citet{chambollePock}, which consider a problem in the form
\begin{align*}
 \min_{\vx} \quad F(\mk \vx) + G(\vx),
\end{align*}
for $\mk$ some linear operator, $F, G$ convex lower-semicontinuous functions. Through a strategy similar to ADMM an equivalent saddle point problem can be obtained,
\begin{align*}
 \min_{\vx} \max_{\vy} \quad (\mk \vx)^T \vy + G(\vx) - F^*(\vy),
\end{align*}
with $F^*$ convex conjugate of $F$. This problem is then solved by alternating maximization on $\vy$ and minimization on $\vx$ through proximity steps, as
\begin{align*}
 \vy_{t+1} &= \prox_{\sigma F^*}(\vy_t + \sigma \mk \bar \vx_t) \\
 \vx_{t+1} &= \prox_{\tau G}(\vx_t - \tau \mk^* \vy_{t+1}) \\
 \bar \vx_{t+1} &= \vx_{t+1} + \theta (\vx_{t+1} - \vx_t) ,
\end{align*}
where $\mk^*$ is the conjugate transpose of $\mk$. $\sigma$, $\tau$ and $\theta$ are algorithm parameters that should be either selected under some bounds \citep[Algorithm 1]{chambollePock} or readjusted every iteration making use of Lipschitz convexity of $G$ \citep[Algorithm 2]{chambollePock}, resulting in an accelerating scheme much in the style of FISTA~\citep{fista}. The overall procedure can also be shown to be an instance of preconditioned ADMM, where the preconditioning is given by the application of a proximity step for the maximization of $\vy$ (instead of the usal dual gradient step of ADMM) and the auxiliary point $\bar \vx$. Note also how proximity is computed over the dual $F^*$ instead of the primal $\prox_F$.

Now, this decomposition strategy can be applied for some instances of proximal stacking~\eqref{eq:proxstacking} when the $r_i$ terms allow the particular composition
\begin{align*}
 \sum_{i=1}^m r_i(\vx) = F \left( \left[ \begin{array}{c}
                                     \mk_1 \\
                                     \vdots \\
                                     \mk_m
                                    \end{array} \right] \vx \right)
		 = F(\mk \vx) ,
\end{align*}
which does not hold in general but holds for 2D TV~\eqref{eq.twod} when taking the identities
\begin{align*}
 F(\vx) &= \norm{\vx}{1} , G(\vx) = \half \enorm{\vx - \vy}^2, \\
 \mk &=  \left[ \begin{array}{c}
	      \mi \otimes \md \\
	      \md \otimes \mi
	    \end{array} \right],
\end{align*}
with $\md$ the differencing matrix as before, $\otimes$ denotes Kronecker product, and $\vx$ a vectorization of the 2D input. The iterates above can then be applied easily: proximity over $G$ is trivial and proximity over $F^*$ is also easy upon realizing that $\prox_{\norm{\cdot}{1}^*} = \prox_{\delta_{\norm{\cdot}{\infty}  \leq 1}} = \Pi_{\norm{\cdot}{\infty}  \leq 1}$, which is solved through thresholding.

A generalization of this approach is presented by~\citet{condatGenProx}, who considers
\begin{equation*}
 \min_{\vx} \quad f(\vx) + g(\vx) + \sum_{i=1}^m r_i(\ml_i \vx),
\end{equation*}
a problem that cleanly fits into~\eqref{eq:proxstacking} with $f(\vx) = \half \enorm{\vx - \vy}^2$, $g(\vx) = 0$, $\ml = \mi$. The procedure to find a solution is proposed as
\begin{align*}
 \bar \vx^{t+1} &= \prox_{\tau g^*} \left( \vx^t - \tau \nabla f(\vx^t) - \tau \sum_{i=1}^m \ml_i^* \vu^t_i \right) \\
 \vx_{n+1} &= \rho \bar \vx^{t+1} + (1 - \rho) \vx^t \\
 \bar \vu^{t+1}_i &= \prox_{\sigma h^*_i} (\vu^t_i + \sigma \ml_i (2 \bar \vx_{t+1} - \vx_t)) \quad \forall i=1,\ldots,m \,, \\
 \vu^{t+1}_i &= \rho \bar \vu^{t+1}_i + (1 - \rho) \vu^t_i \quad \forall i=1,\ldots,m \,, 
\end{align*}
for $\tau, \rho$ parameters of the algorithm. When applying this procedure to 2D TV ($m=2$, $r_1(\vx) =$ proximity over rows, $r_2(\vx) =$ proximity over columns) an algorithm almost equivalent to~\citet{chambollePock} is obtained, the only difference being that here the gradient of $f$ is used, instead of the $\prox_G$ operation.

Finally, another related method is the splitting approach of~\citet{Kolmogorov16}, which for $m=2$ performs the following splitting:
\begin{align*}
 & \min_{\vx} \quad \half \enorm{\vx -\vy}^2 + r_1(\vx) + r_2(\vx), \\
 \equiv & \min_{\vx, \vx'} \quad \enorm{\vx -\vy}^2 + r_1(\vx) + r_2(\vx') \quad \text{s.t.} \; \vx = \vx', \\
 \equiv & \min_{\vx, \vx'} \max_{\vz} \quad \enorm{\vx -\vy}^2 + r_1(\vx) + r_2(\vx') + \vz^T(\vx - \vx'), \\
 \equiv & \min_{\vx} \max_{\vz} \quad \enorm{\vx -\vy}^2 + r_1(\vx) - r^*_2(\vz) + \vx^T \vz.
\end{align*}
where we have made use of the Fenchel dual $r^*_2(\vz) = \max_{\vx'} \vz^T \vx' - r_2(\vx')$. 
This problem can be solved through a primal-dual minimization:
\begin{align*}
 \vz^{t+1} &= \prox_{\sigma^t r^*_2} \left( \vz^t + \sigma^t (\vx^t + \theta^t (\vx^t - \vx^{t-1})) \right), \\
 \vx^{t+1} &= \prox_{\tau^t (\enorm{\cdot -\vy}^2 + r_1)} \left( \vx^t - \tau^t \vz^{t+1} \right).
\end{align*}
The primal proximity operator over the squared norm term plus $r_1$ can be rewritten in terms of $\prox_{r_1}$ as
\begin{align*}
\prox_{\tau (r_1 + \half \enorm{\cdot - \vy}^2)}(\vw) &= \argmin_{\vx} r_1(\vx) + \frac{1 + \tau^{-1}}{2} \enorm{\vx - (1+\tau^{-1})^{-1} (\vy + \tau^{-1} \vw)}^2, \\
  &= \prox_{(1+\tau^{-1})^{-1} r_1} \left( (1+\tau^{-1})^{-1} (\vy + \tau^{-1} \vw) \right).
\end{align*}
Regarding the dual step, in the previously presented methods the decompositions allowed to disentangle the effect of a linear operator $L_i$ from each $r_i$. The present decomposition, however, does not take into account this possibility, thus increasing the complexity of computing $r_2^*$. To address this difficulty the Moreau decomposition \eqref{prop.decomp} is helpful, as
\begin{align*}
 \prox_{\sigma r^*_2} \left( \vw \right) &= \vw - \sigma \left( \argmin_{\vx} r_2(\vx) + \frac{\sigma}{2} \enorm{\vx - \sigma^{-1} \vw}^2 \right), \\
  &= \vw - \sigma \prox_{\sigma^{-1} r_2}(\sigma^{-1} \vw),
\end{align*}
thus solving the dual proximity operator in terms of the primal $\prox_{r_2}$.
Regarding the algorithm parameters $\theta$, $\tau$ and $\sigma$, they can be adjusted at every iteration for greater performance making use of Lipschitz convexity~\citep{ChambolleErgodic14}.

\subsection{Two-dimensional TV}
\label{sec:twod}
Recall that for a matrix $\mx \in \reals^{n_1 \times n_2}$, the anisotropic 2D-TV regularizer takes the form
\begin{equation}
  \label{eq.twod.again}
    \tvell_{p,q}^{2}(\mx) :=   
    \nlsum_{i=1}^{n_1}\Bigl(\nlsum_{j=1}^{n_2-1}|x_{i,j+1}-x_{i,j}|^p\Bigr)^{1/p}
    + \nlsum_{j=1}^{n_2}\Bigl(\nlsum_{i=1}^{n_1-1}|x_{i+1,j}-x_{i,j}|^q\Bigr)^{1/q}.
\end{equation}
This regularizer applies a $\tvell_p^{\oned}$ regularization over each row of $\mx$, and a $\tvell_q^{\oned}$ regularization over each column. Introducing differencing matrices $\md_n$ and $\md_m$ for the row and column dimensions, the regularizer~\eqref{eq.twod.again} can be rewritten as
\begin{equation}
  \label{eq:13}
  \tvell_{p,q}^{\twod}(\mx) = \nlsum_{i=1}^n \norm{\md_n \vx_{i,:}}{p} + \nlsum_{j=1}^m \norm{\md_m \vx_{:,j}}{q},
\end{equation}
where $\vx_{i,:}$ denotes the $i$-th row of $\mx$, and $\vx_{:,j}$ its $j$-th column.  The corresponding $\tvell_{p,q}^{\twod}$-proximity problem is
\begin{equation}
  \label{eq:14}
  \nlmin_{\mx}\quad \half\frob{\mx-\my}^2 + \lambda\,\tvell^{\twod}_{p,q}(\mx),
\end{equation}
where we use the Frobenius norm $\frob{\mx} = \sqrt{\sum_{ij}x_{i,j}^2} = \enorm{\vect(\mx)}$,  where $\vect(\mx)$ is the vectorization of $\mx$. Using~\eqref{eq:13}, problem~\eqref{eq:14} becomes
\begin{equation}
  \label{eq:16}
  \nlmin_{\mx}\quad \half\frob{\mx-\my}^2 + \lambda \left( \nlsum_i \norm{\md_n x_{i,:}}{p} \right) + \lambda \left( \nlsum_j \norm{\md_m x_{:,j}}{q} \right),
\end{equation}
where the parentheses make explicit that $\tvell^{\twod}_{p,q}$ is a combination of two regularizers: one acting over the rows and the other over the columns. Formulation~\eqref{eq:16} fits the model solvable by the strategies presented above, though with an important difference: each of the two regularizers that make up $\tvell^{\twod}_{p,q}$ is itself composed of a sum of several ($n$ or $m$) 1D-TV regularizers. Moreover, each of the 1D row (column) regularizers operates on a different row (columns), and can thus be solved independently.



\subsection{Higher-dimensional TV}
\label{sec:multid}
Going even beyond $\tvell_{p,q}^{\twod}$ is the general multidimensional TV~\eqref{eq.multid}, which we recall below. 

Let $\tx$ be an order-$m$ tensor in $\reals^{\prod_{j=1}^m n_j}$, whose components are indexed as $\tx_{i_1,i_2,\ldots,i_m}$ (\fromto{1}{i_j}{n_j} for \fromto{1}{j}{m}); we define TV for $\tx$ as
\begin{equation}
  \label{eq.multid.again}
  \tvell^m_{\vp}(\tx) \defeq \sum_{k=1}^m \sum_{\set{i_1,\ldots,i_m} \setminus i_k}\Bigl(\sum_{j=1}^{n_k-1}
  |\tx_{i_1,\ldots,i_{k-1},j+1,i_{k+1},\ldots,i_m}-\tx_{i_1,\ldots,i_{k-1},j,i_{k+1},\ldots,i_m}|^{p_k}\Bigr)^{1/p_k},
\end{equation}
where $\vp = [p_1,\ldots,p_m]$ is a vector of scalars $p_k \ge 1$. This corresponds to applying a 1D-TV to each of the 1D fibers of $\tx$ along each of the dimensions. 

Introducing the \emph{multi-index} $\vi(k) = (i_1,\ldots,i_{k-1},i_{k+1},\ldots,i_m)$, which iterates over every 1-dimensional fiber of $\tx$ along the $k$-th dimension, the regularizer~\eqref{eq.multid.again} can be written more compactly as
\begin{equation}
  \label{eq:17}
  \tvell^m_{\vp}(\tx) = \nlsum_{k=1}^m \nlsum_{\vi(k)}\norm{\md_{n_k} \vx_{\vi(k)}}{p_k},
\end{equation}
where $\vx_{\vi(k)}$ denotes a row of $\tx$ along the $k$-th dimension, and $\md_{n_k}$ is a differencing matrix of appropriate size for the 1D-fibers along dimension $k$ (of size $n_k$).
The corresponding $m$-dimensional-TV proximity problem is 
\begin{equation}
  \label{eq:18}
  \nlmin_{\tx}\ \half\frob{\tx-\ty}^2 + \lambda\,\tvell^m_{\vp}(\tx),
\end{equation}
where $\lambda > 0$ is a penalty parameter, and the Frobenius norm for a tensor just denotes the ordinary sum-of-squares norm over the vectorization of such tensor. 


Problem~\eqref{eq:18} looks very challenging, but it enjoys decomposability as suggested by ~\eqref{eq:17} and made more explicit by writing it as a sum of $\tvell^{\oned}$ terms
\begin{equation}
  \label{eq:19}
  \nlmin_{\tx}\ \half\frob{\tx-\ty}^2 + \nlsum_{k=1}^m\nlsum_{\vi(k)} \tvell^{\oned}_{p_k}\bigl(\vx_{\vi(k)}\bigr).
\end{equation}
The proximity task~\eqref{eq:19} can be regarded as the sum of $m$ proximity terms, each of which further decomposes into a number of inner $\tvell^{\oned}$ terms. These inner terms are trivial to address since, as in the 2D-TV case, each of the $\tvell^{\oned}$ terms operates on different entries of $\tx$. Regarding the $m$ major terms, we can handle them by applying any of the combiner strategies presented above for $m>2$, which ultimately yield the prox operator for $\tvell^m_{\vp}$ by just repeatedly calling $\tvell^{\oned}$ prox operators. Most importantly, both proximal stacking and the natural decomposition of the problem provide a vast potential for parallel multithreaded computing, which is valuable when dealing with such complex and high-dimensional data.

\section{Experiments and Applications}
\label{sec:exps}
We will now demostrate the effectiveness of the various solvers covered in a wide array of experiments, as well as showing many of their practical applications. We will start by focusing on the $\tvell_1^{\oned}$ methods, moving then to other 1D-TV variants, and then to multidimensional TV.

All the solvers implemented for this paper were coded in C++ for efficiency. Our publicy available library {\bf proxTV} includes all these implementations, plus bindings for easy usage in Matlab or Python: \href{https://github.com/albarji/proxTV}{https://github.com/albarji/proxTV}. Matrix operations have been implented by exploiting the LAPACK (\textsc{Fortran}) library~\citep{LAPACK}.

\subsection{$\tvell_1^{\oned}$ experiments and Applications}
\label{sec:TVproxExp}
Since the most important components of the presented modular framework are the efficient $\tvell_1^{\oned}$ prox operators, let us begin by highlighting their empirical performance. We will do so both on synthetic and natural images data.

\subsubsection{Running time results for synthetic data}

We test the solvers under two scenarios of synthetic signals:\vspace*{-5pt}
\begin{enumerate}[I)]
  \setlength{\itemsep}{1pt}
 \item Increasing input size ranging from $n=10^1$ to $n=10^7$. A penalty $\lambda \in [0,50]$ is chosen at random for each run, and the data vector $\vy$ with uniformly random entries $y_i \in [-2 \lambda, 2 \lambda]$ (proportionally scaled to $\lambda$).
 \item Varying penalty parameter $\lambda$ ranging from $10^{-3}$ (negligible regularization) to $10^3$ (the TV term dominates); here $n$ is set to $1000$ and $y_i$ is randomly generated in the range $[-2, 2]$ (uniformly).
\end{enumerate}

\begin{figure}[htbp]
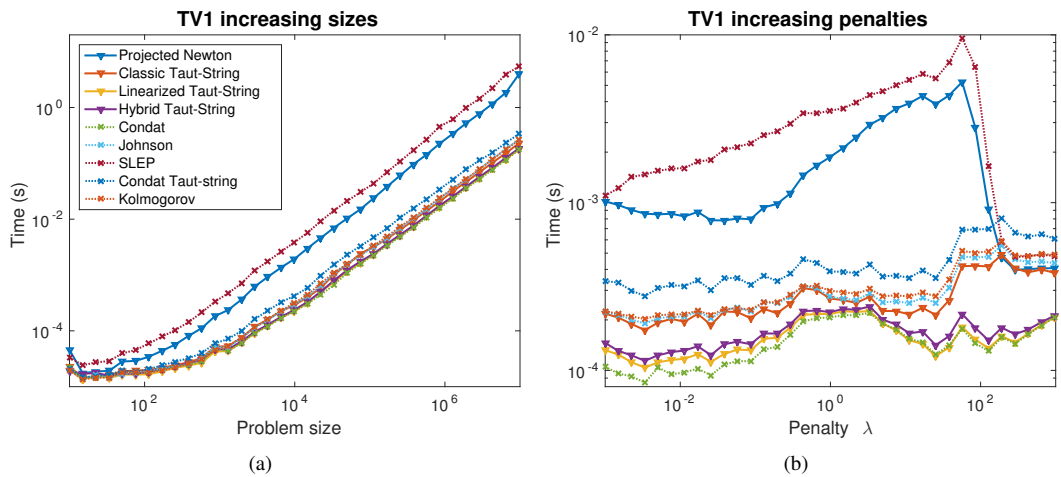

  \centering
  \subfigure[]{
       \includegraphics[width = 0.45\textwidth]{speedTV1}
  }
  \subfigure[]{
       \includegraphics[width = 0.45\textwidth]{speedTV1lambda}
       \label{fig:tv1runlambda}
  }
  \caption{Running times (in secs) for proposed and state of the art solvers for $\tvell_1^{\oned}$-proximity with increasing a)  input sizes, b) penalties. Both axes are on a log-scale.
   }   \label{fig:tv1run}
\end{figure}

We benchmark the performance of the following methods, including both our proposals and state of the art methods found in the literature:
\begin{itemize}
\setlength{\itemsep}{0pt}
 \item Our proposed Projected Newton method (Appendix \ref{app:projNewton}).
 \item Our efficient implementation of the classic taut string method.
 \item Another implementation of the classic taut string method by~\citet{fastTV}.
 \item An implementation of the linearized taut string method.
 \item Our proposed hybrid taut string approach.
 \item The {\bf FLSA} function (C implementation) of the SLEP library of \citet{slep} for $\tvell_1^{\oned}$-proximity~\citep{liuYe10}.
 \item The state-of-the-art method of~\citet{fastTV}, which we have seen to be equivalent to a linearized taut-string method.
 \item The dynamic programming method of~\citet{dpTV}, which guarantees linear running time.
 \item The message passing method of~\citet{Kolmogorov16}, which allows generalization for computing a Total Variation regularizer on a tree.
\end{itemize}

Another implementation of the classic taut string method, found in the literature, has been added to the benchmark to test whether the implementation we have proposed is on par with the state of the art. We would like to note the surprising lack of widely available implementations of this method: the only working and efficient code we could find was part of the same paper where Condat's method was proposed.

For Projected Newton and SLEP a duality gap of $10^{-5}$ is used as the stopping criterion. For the hybrid taut-string method the switch parameter is set as $S=1.05$. The rest of algorithms do not have parameters.

Timing results are presented in Figure~\ref{fig:tv1run} for both experimental scenarios. The following interesting facts are drawn from these results
\begin{itemize}
 \item Direct methods (Taut string methods, Condat, Johnson, Kolmogorov) prove to be much faster than iterative methods (Projected Newton, SLEP).
 \item Although Condat's (and hence linearized taut string) method, has a theoretical worst-case performance of $O(n^2)$, the practical performance seems to follow an $O(n)$ behavior, at least for these synthetic signals.
 \item Even if Johnson and Kolmogorov methods have a guaranteed running time of $O(n)$, they turn out to be slower than the linearized taut string and Condat's methods. This is in line with our previous observations of the cache-friendly properties of in-memory methods; in contrast Johnson's method requires an extra $\sim 8n$ memory storage. Kolmogorov's method has less memory requirements but nevetheless shows similar behavior.
 \item The same performance observation applies to the classic taut string method. It is also noticeable that our implementation of this method turns out to be faster than previously available implementations (Condat's Taut-string), even becoming slightly faster than the state of the art Johnson and Kolmogorov methods. This result is surprising, and shows that the full potential of the classic taut-string method has been largely unexploited by the research community, or at least that proper efficient implementations of this method have not been made readily available so far.
\end{itemize}

\subsubsection{Worst case scenario}

The point about comparing $O(n)$ and $O(n^2)$ algorithms deserves more attention. As an illustrative experiment we have generated a signal following the worst case description in ~\citet{fastTV}, and tested again the methods above on it, for increasing signal lengths. Figure \ref{fig:tv1worst} plots the results. Condat's method and consequently the linearized taut string method shows much worse performance than the rest of the direct methods. It is also remarkable how the hybrid method manages to avoid quadratic runtimes in this case.

\begin{figure}[htbp]
  \centering
  \includegraphics[width = \textwidth]{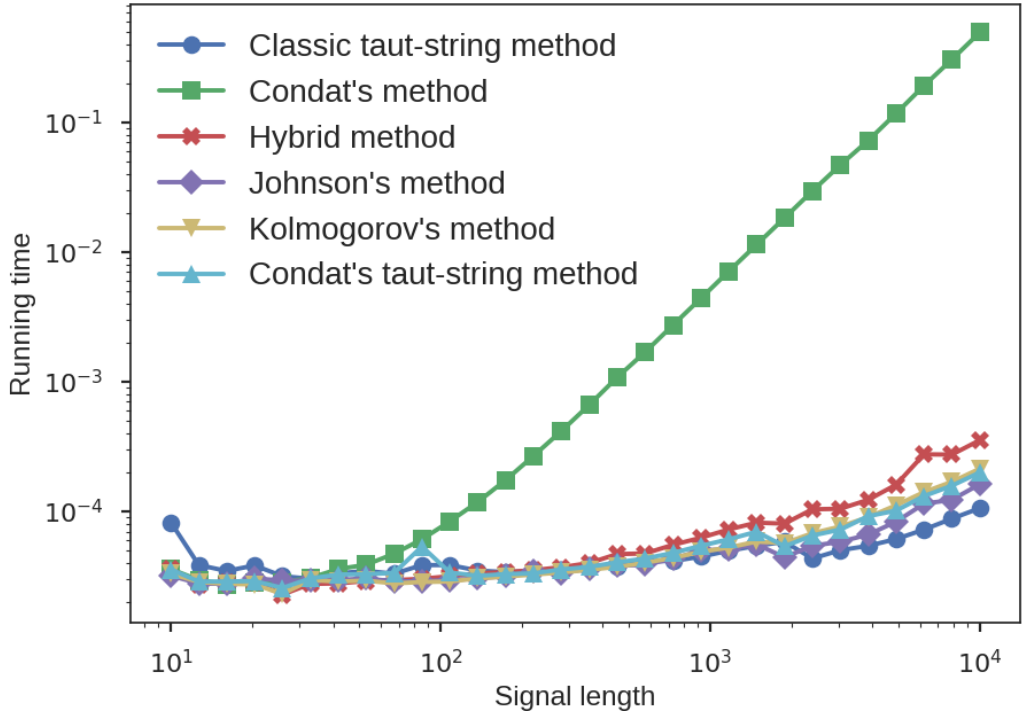} 
  \caption{Running times (in secs) for proposed and state of the art solvers for $\tvell_1^{\oned}$-proximity in the worst-case scenario for Condat's method, for increasing input sizes. Both axes are on a log-scale.
   }   \label{fig:tv1worst}
\end{figure}

\subsubsection{Running times on natural images}

In the light of the previous results the following question arises: in practical settings, are the problems to be solved closer to the worst or the average runtime scenario? This fact will determine whether the guaranteed linear time or the more risky quadratic methods are more apt for practical use. To test this we devise the following experiment: we take a large benchmark of natural images and run each solver over all the rows and columns of all the images in the set, counting total running times, for different regularization values. The benchmark is made from images obtained from the datasets detailed in Table \ref{tab:imagedatasetsTVL1}. We run this benchmark for the methods showing better performance in the experiments above: our implementation of the classic taut-string method, Condat's method ($\equiv$ linearized taut-string method), our proposed Hybrid taut-string method, Johnson's method and Kolmogorov et~al's method.

\begin{table}[t]
\caption{Detail of image datasets used for large-scale $\tvell_1^{\oned}$ experiments.
\label{tab:imagedatasetsTVL1}
}
\centering
\begin{tabular}{lcccc}
\hline
\abovespace\belowspace
 Dataset & Images & Average image size \\
\hline
\abovespace
INRIA holidays~\citep{INRIAholidays} & 812 & 1817 $\times$ 2233 $\times$ 3 px\\
\belowspace
LSVRC 2010 val set~\citep{Imagenet} & 50000 & 391 $\times$ 450 $\times$ 3 px\\
\hline
\end{tabular}
\end{table}

Figure~\ref{fig:tv1inria} shows runtime results for different penalty values over the whole INRIA holidays dataset~\citep{INRIAholidays}, while Figure~\ref{fig:tv1imagenet} shows similar results for the whole Large Scale Visual Recognition Challenge 2010 validation dataset~\citep{Imagenet}. The following facts of interest can be observed:

\begin{itemize}
 \item Condat's method (linearized taut-string) shows top performance for low penalty values, but bad scaling when moving to higher penalties. This can be explained using the geometric intuition developed above: for large penalty values the width of the tube is very large, and thus the taut-string will be composed of very long segments. This is troublesome for a linearized taut-string method, as each backtrack will require recomputing a large number of steps. On the contrary for smaller penalties the tube will be narrow, and the taut-string composed of many small segments, thus resulting in very cheap backtracking costs.
 \item The performance of Classic taut-string, Johnson and Kolmogorov becomes slightly worse for large penalties, but suffers significantly less than the linearized taut-string. Surprisingly, the best performing approach tends to be the classic taut-string method.
 \item The proposed hybrid strategy closely follows the performance of Condat's method for the low penalty regime, while adapting to a behaviour akin to Kolmogorov for large penalties, thus resulting in very good performances over the whole regularization spectrum.
\end{itemize}

\begin{figure}[htbp]
  \centering
  \includegraphics[width = \textwidth]{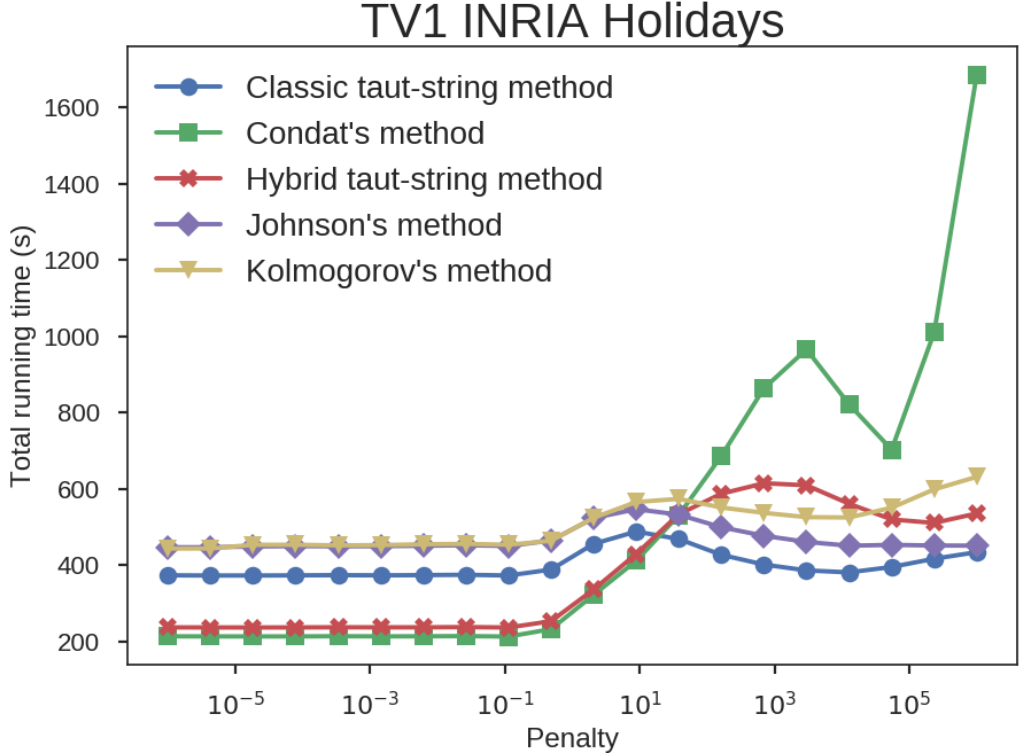} 
  \caption{Running times (in secs) for the top performing proposed and state of the art solvers for $\tvell_1^{\oned}$-proximity over the whole INRIA Holidays dataset, for increasing penalties.
   }   \label{fig:tv1inria}
\end{figure}

\begin{figure}[htbp]
  \centering
  \includegraphics[width = \textwidth]{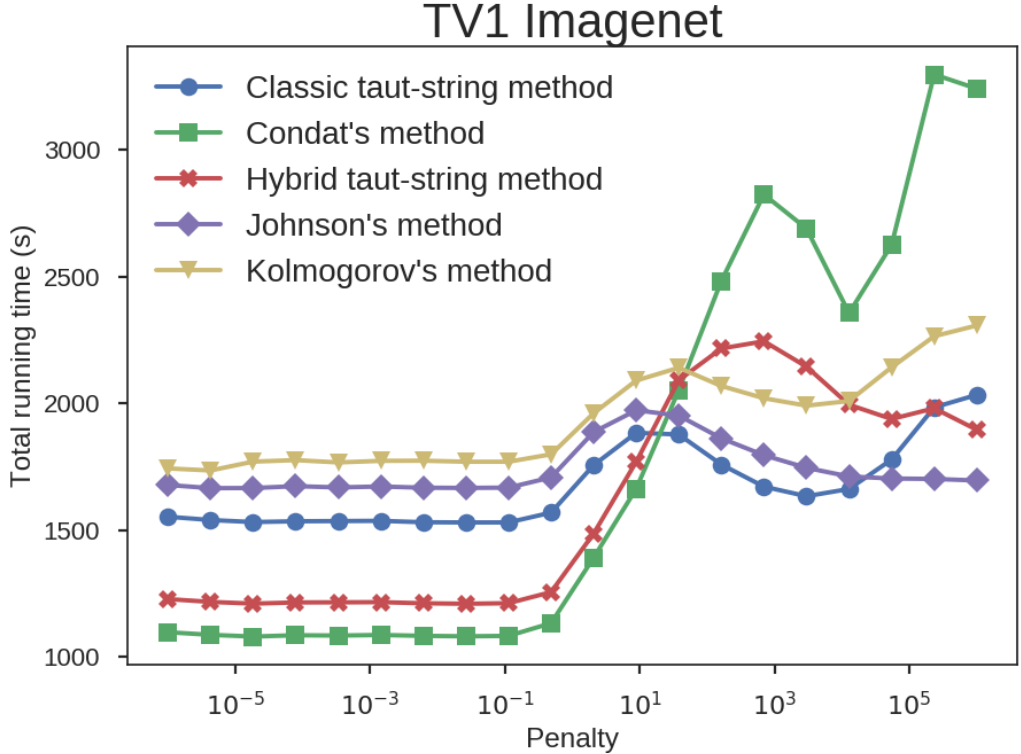} 
  \caption{Running times (in secs) for the top performing proposed and state of the art solvers for $\tvell_1^{\oned}$-proximity over the whole Large Scale Visual Recognition Challenge 2010 validation dataset, for increasing penalties.
   }   \label{fig:tv1imagenet}
\end{figure}

\subsubsection{Running time results for weighted TV-L1}

\begin{figure}[htbp]
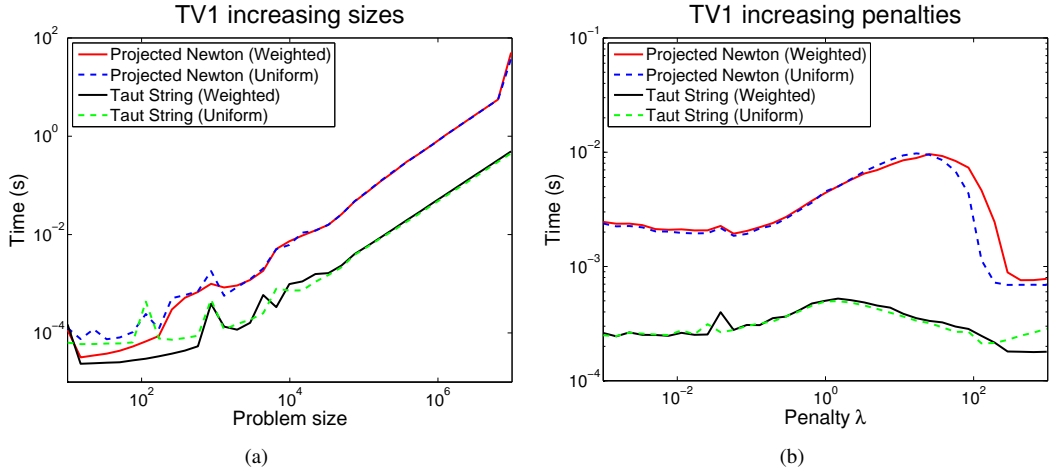

  \centering
  \subfigure[]{
       \includegraphics[width = 0.45\textwidth]{speedTV1w}
  }
  \subfigure[]{
       \includegraphics[width = 0.45\textwidth]{speedTV1wlambda}
  }
  \caption{Running times (in secs) for Projected Newton and Taut String solvers for weighted and uniform $\tvell_1^{\oned}$-proximity with increasing a)  input sizes, b) penalties. Both axes are on a log-scale.
   }   \label{fig:tv1wrun}
\end{figure}

An advantage of the solvers proposed in this paper is their flexibility to easily deal with the more difficult, weighted version of the TV-L1 proximity problem. To illustrate this, Figure~\ref{fig:tv1wrun} shows the running times of the Projected Newton and (linearized) Taut String methods when solving both the standard and weighted TV-L1 prox operators.

Since for this set of experiments a whole vector of weights $\vw$ is needed, we have adjusted the experimental scenarios as follows:\vspace*{-5pt}
\begin{enumerate}[I)]
  \setlength{\itemsep}{1pt}
 \item $n$ is generated as in the general setting, penalties $\vw \in [0,100]$ are chosen at random for each run, and the data vector $\vy$ with uniformly random entries $y_i \in [-2 \lambda, 2 \lambda]$, with $\lambda$ the mean of $\vw$, using also this $\lambda$ choice for the uniform (unweighted) case.
 \item $\lambda$ and $n$ are generated as in the general setting, and the weights vector $\vw$ is drawn randomly from the uniform distribution $\vw_i \in [0.5 \lambda, 1.5 \lambda]$.
\end{enumerate}

As can be readily observed, performance for both versions of the problem is almost identical, even if the weighted problem is conceptually harder. Conversely, adapting the other reviewed algorithms to address this problem while keeping up with performance is not a straightforward task.

\begin{figure}[htbp]
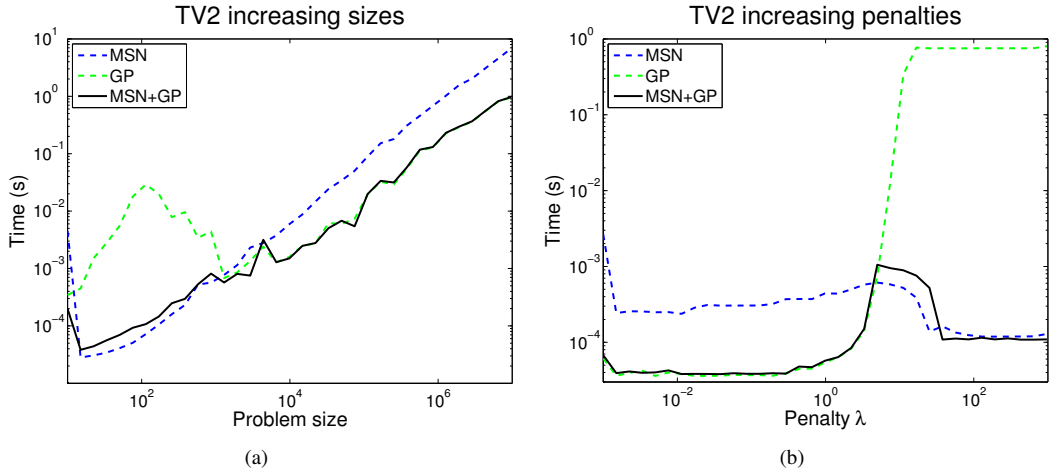

  \centering
  \subfigure[]{
       \includegraphics[width = 0.45\textwidth]{speedTV2}
       \label{fig:tv2runN}
  }
  \subfigure[]{
       \includegraphics[width = 0.45\textwidth]{speedTV2lambda}
       \label{fig:tv2runlambda}
  }
  \caption{Running times (in secs) for MSN, GP and a hybrid MSN+GP approach for $\tvell_2^{\oned}$-proximity with increasing a) input sizes, b) penalties. Both axes are on a log-scale.
   \label{fig:tv2run}
   }
\end{figure}

We would also like to point out that in the paper \cite{Kumar15} a practical application of this method for energy minimization in computer vision is presented, where exactly the code behind this paper has been put to use.

\subsection{Experiments for other 1D-TV variants}
\label{sec:expothertv1d}

\subsubsection{Running time results for TV-L2}
\label{sec:exptvl2}

Next we show results for $\tvell_2^{\oned}$ proximity. To our knowledge, this version of TV has not been explicitly treated before, so there do not exist highly-tuned solvers for it. Thus, we show running time results only for the MSN and GP methods. We use a duality gap of $10^{-5}$ as the stopping criterion; we also add an extra boundary check for MSN with tolerance $10^{-6}$ to avoid early stopping due to potentially infeasible intermediate iterates. Figure \ref{fig:tv2run} shows results for the two experimental scenarios under test. 

The results indicate that the performance of MSN and GP differs noticeably in the two experimental scenarios. While the results for the first scenario (Figure \ref{fig:tv2runN}) might suggest that GP converges faster than MSN for large inputs, it actually does so depending on the size of $\lambda$ relative to $\enorm{\vy}$. Indeed, the second scenario (Figure \ref{fig:tv2runlambda}) shows that although for small values of $\lambda$, GP runs faster than MSN, as $\lambda$ increases, GP's performance worsens dramatically, so much that for moderately large $\lambda$, it is unable to find an acceptable solution even after 10,000 iterations (an upper limit imposed in our implementation). Conversely, MSN finds a solution satisfying the stopping criterion under every situation, thus showing a more robust behavior.

These results suggest that it is preferable to employ a hybrid approach that combines the strengths of MSN and GP. Such a hybrid approach is guided using the following (empirically determined) rule of thumb: if $\lambda < \enorm{\vy}$ use GP, otherwise use MSN. Further, as a safeguard, if GP is invoked but fails to find a solution within 50 iterations, the hybrid should switch to MSN. This combination guarantees rapid convergence in practice. Results for this hybrid approach are also included in the plots in Figure \ref{fig:tv2run}, and show how it successfully mimics the behavior of the better algorithm amongst MSN and GP.

\subsubsection{Running time results for TV-Lp}
\label{sec:exptvlp}

Now we show results for $\tvell_p^{\oned}$ proximity. Again, to our knowledge efficient solvers for this version of TV are not available; still proposals for solving the $\ell_q$-ball projection problem do exist, such as the {\it epp} function in SLEP library \citep{slep}, based on a zero finding approach. Consequently, we present here a comparison between this reference projection subroutine and our PN--based projection when embedded in our proposed Gradient Projection solver of \S\ref{sec:tvp}. The alternative proposal given by the Frank--Wolfe algorithm of \S\ref{sec:FW} is also present in the comparison. We use a duality gap of $10^{-5}$ as stopping criterion both for GP and FW. Figure \ref{fig:tvpfixedrun} shows results for the two experimental scenarios under test, for $p$ values of $1.5$, $1.9$ and $3$.

\begin{figure}[htbp]
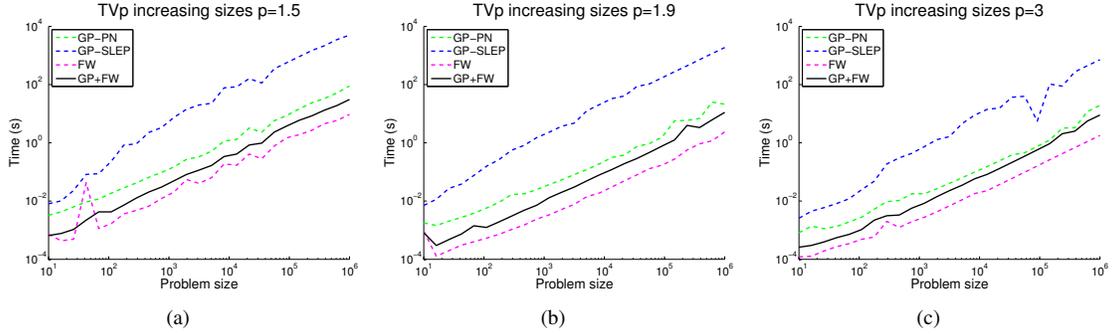

  \centering
  \subfigure[]{
       \includegraphics[width = 0.31\textwidth]{speedTVpFixed1,5}
  }
  \subfigure[]{
       \includegraphics[width = 0.31\textwidth]{speedTVpFixed1,9}
  }
  \subfigure[]{
       \includegraphics[width = 0.31\textwidth]{speedTVpFixed3}
  }
  \caption{Running times (in secs) for GP with PN projection, GP with SLEP's {\it epp} projection, FW and a hybrid GP+FW algorithm, for $\tvell_p^{\oned}$-proximity with increasing input sizes and three different choices of $p$. Both axes are on a log-scale.
   \label{fig:tvpfixedrun}
   }
\end{figure}

\begin{figure}[htbp]
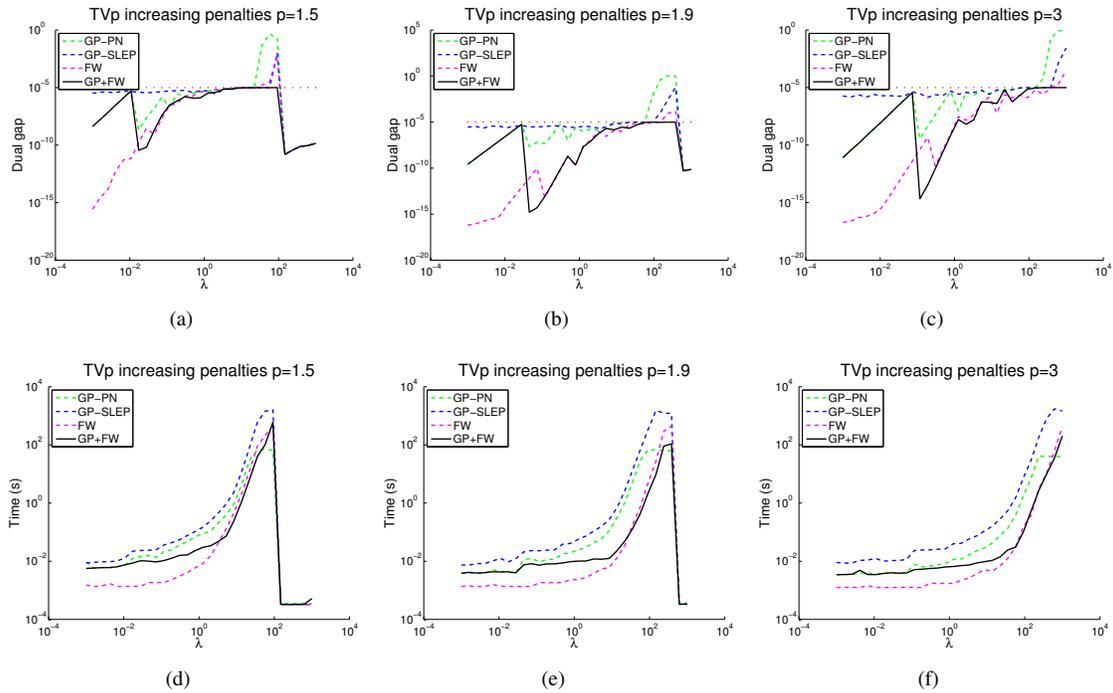

  \centering
  \subfigure[]{
       \includegraphics[width = 0.31\textwidth]{gapTVpFixedLambda1,5}
  }
  \subfigure[]{
       \includegraphics[width = 0.31\textwidth]{gapTVpFixedLambda1,9}
  }
  \subfigure[]{
       \includegraphics[width = 0.31\textwidth]{gapTVpFixedLambda3}
  }
  \\
  \subfigure[]{
       \includegraphics[width = 0.31\textwidth]{speedTVpFixedLambda1,5}
  }
  \subfigure[]{
       \includegraphics[width = 0.31\textwidth]{speedTVpFixedLambda1,9}
  }
  \subfigure[]{
       \includegraphics[width = 0.31\textwidth]{speedTVpFixedLambda3}
  }
  \caption{Attained duality gaps (a-c) and running times (d-f, in secs) for GP with PN projection, GP with SLEP's {\it epp} projection, FW and a hybrid GP+FW algorithm, for $\tvell_p^{\oned}$-proximity with increasing penalties and three different choices of $p$. Both axes are on a log-scale.
   \label{fig:tvplambdafixedrun}
   }
\end{figure}

A number of interesting conclusions can be drawn from the results. First, our Projected Newton $\ell_q$-ball subroutine is far more efficient than {\it epp} when in the context of the GP solver. Two factors seem to be the cause of this: in the first place our Projected Newton approach proves to be faster than the zero finding method used by {\it epp}. Secondly, in order for the GP solver to find a solution within the desired duality gap, the projection subroutine must provide very accurate results (about $10^{-12}$ in terms of duality gap). Given its Newton nature, our $\ell_q$-ball subroutine scales better in term of running times as a factor of the desired accuracy, which explains he observed differences in performance.

It is also of relevance noting that Frank--Wolfe is significantly faster than Projected Newton. This should discourage the use of Projected Newton, but we find it to be extremely useful in the range of $\lambda$ penalties where $\lambda$ is large, but not enough to render the problem trivial ($\vw = 0$ solution). In this range the two variants of PN and also FW are unable to find a solution within the desired duality gap ($10^{-5}$), getting stuck at suboptimal solutions. We solve this issue by means of a hybrid GP+FW algorithm, in which updates from both methods are interleaved at a ratio of 10 FW updates per 1 GP update, as FW updates are faster. As both algorithms guarantee improvement in each iteration but follow different procedures for doing so, they complement each other nicely, resulting a superior method attaining the objective duality gap and performing faster than GP.

\subsubsection{Running time results for TV-L$\infty$}

For completeness we also include results for our $\tvell_{\infty}^{\oned}$ solver based on GP + a standard $\ell_1$-projection subroutine. Figure \ref{fig:tvinfrun} presents running times for the two experimental scenarios under test. Since $\ell_1$-projection is an easier problem than the general $\ell_q$-projection the resultant algorithm converges faster to the solution than the general GP $\tvell_{p}^{\oned}$ prox solver, as expected.

\begin{figure}[htbp]
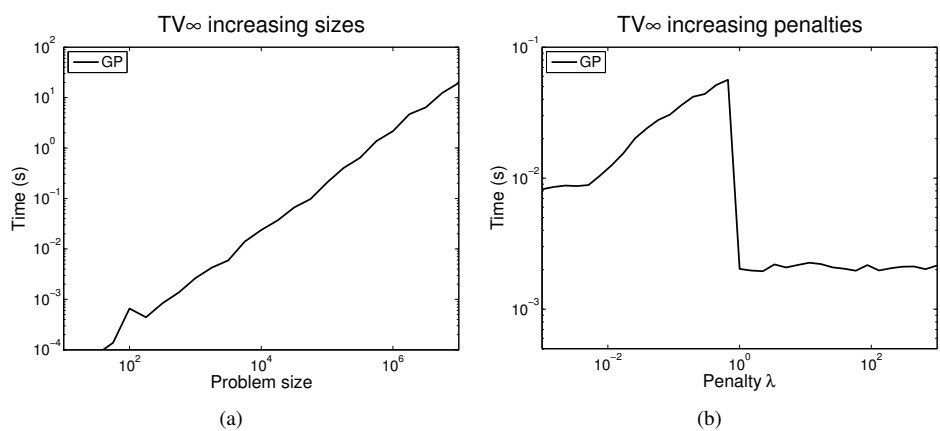

  \centering
  \subfigure[]{
       \includegraphics[width = 0.4\textwidth]{speedTVinf}
       \label{fig:tvinfrunN}
  }
  \subfigure[]{
       \includegraphics[width = 0.4\textwidth]{speedTVinflambda}
       \label{fig:tvinfrunlambda}
  }
  \caption{Running times (in secs) for GP for $\tvell_{\infty}^{\oned}$-proximity with increasing a) input sizes, b) penalties. Both axes are on a log-scale.
   \label{fig:tvinfrun}
   }
\end{figure}

\begin{figure}[htbp]
 \centering
 \includegraphics[width = 0.8\textwidth]{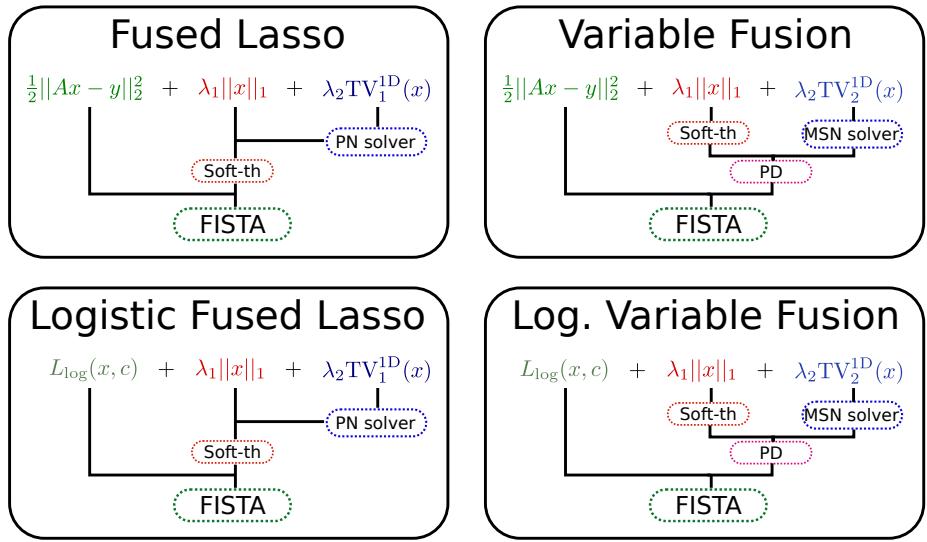}
 \caption{
  \label{fig:FLmodels}
  Fused-Lasso models addressed by proximal splitting.
 }
\end{figure}

\subsubsection{Application: Proximal optimization for Fused-Lasso}
\label{sec:FLasso}
We now present a key application that benefits from our TV prox operators:  \textbf{Fused-Lasso} (FL)~\citep{fl}, a model that takes the form
\begin{equation}
  \label{eq:24}
 \min_{\vx} \quad \half \enorm{\ma \vx - \vy}^2 + \lambda_1 \norm{\vx}{1} + \lambda_2 \tvell_1^{\oned}(\vx).
\end{equation}
 The $\ell_1$-norm in~\eqref{eq:24} forces many $x_i$ to be zero, while $\tvell_1^{\oned}$ favors nonzero components to appear in blocks of equal values $x_{i-1} = x_i = x_{i+1} = \ldots$. The FL model has been successfully applied in several bioinformatics applications \citep{cgh,cghsvm,frHaHoTi07}, as it encodes prior knowledge about consecutive elements in microarrays becoming active at once.

Following the ideas presented in Sec.~\ref{sec:proxMulti}, since the FL model uses two regularizers, we can use Proximal Dykstra as the combiner to handle the prox operator. To illustrate the benefits of this framework in terms of reusability, we apply it to several variants of FL. 
\begin{itemize}
  \setlength{\itemsep}{0pt}
\item \textbf{Fused-Lasso (FL):} Least-squares loss $ + \ell_1 + \tvell_1^{\oned}$ as in~\eqref{eq:24}

\item \textbf{$\ell_p$-Variable Fusion (VF):} Least-squares loss  $ + \ell_1 + \tvell_p^{\oned}$. Though Variable Fusion was already studied by \citet{land}, their approach proposed an $\ell_p^p$-like regularizer in the sense that $r(\vx) = \sum_{i=1}^{n-1} |x_{i+1}-x_i|^p$ is used instead of the TV regularizer $\tvell_p^{\oned}(x) = \left( \sum_{i=1}^{n-1} |x_{i+1}-x_i|^p \right)^{1/p}$. Using $\tvell_p$ leads to a more conservative penalty that does not oversmooth the estimates. This FL variant seems to be new.

\item \textbf{Logistic-fused lasso (LFL):} Logistic-loss $ + \ell_1 + \tvell_1^{\oned}$, where the loss takes the form  $\ell(\vx,c) = \sum_i \log \left( 1 + e^{-y_i (\va_i^T\vx + c)} \right)$, and can be used in a FL formulation to obtain models more appropriate for classification on a dataset $\left\lbrace (\va_i,y_i) \right\rbrace$ \citep{kolar}.

\item \textbf{Logistic + $\ell_p$-fusion (LVF):} Logistic loss $ + \ell_1 + \tvell_p^{\oned}$.
\end{itemize}

To solve these variants of FL, all that remains is to compute the gradients of the loss functions, but this task is trivial. 
Each of these four models can be then solved easily by invoking any proximal splitting method by appropriately plugging in gradient and prox operators. Incidentally, the {\bf SLEP} library~\citep{liuYe10} includes an implementation of FISTA~\citep{fista} carefully tuned for Fused Lasso, which we base our experiments on. Figure \ref{fig:FLmodels} shows a schematic of the algorithmic modules for solving each FL model.

\textbf{Remark:} A further algorithmic improvement can be obtained by realizing that for $r(\vx) = \lambda_1 \norm{\vx}{1} + \lambda_2 \tvell_1^{\oned}(\vx)$ the prox operator $\prox_{r} \equiv \prox_{\lambda_1 \norm{\cdot}{1}} \circ \prox_{\lambda_2 \tvell_1^{\oned}(\cdot)}$. Such a decomposition does not usually hold, but it can be shown to hold for this particular case~\citep{yaoLiang,rinaldo,fl}. Therefore, for FL and LFL we can compute the proximal operator for the combined regularizer $r$ directly, thus removing the need for a combiner algorithm. This is also shown in Figure~\ref{fig:FLmodels}.

\subsubsection{Fused-Lasso experiments: simulation}

The standard FL model has been well-studied in the literature, so a number of practical algorithms addressing it have already been proposed. The aforementioned Fused-Lasso algorithm in the {\bf SLEP} library can be regarded as the state of the art, making extensive use of an efficient proximity subroutine (FLSA). Our experiments on $\tvell_1^{\oned}$-proximity (\S\ref{sec:TVproxExp}) have already shown superiority of our prox solvers over FLSA; what remains to be checked is whether this benefit has a significant impact on the overall FL solver. To do so, we compare running times with synthetic data.

We generate random matrices $\ma \in \reals^{n \times m}$ with i.i.d.\ entries drawn from a zero mean, unit variance gaussian. We set the penalties to $\lambda_1 = \lambda_2 = 10$. We select the vector of responses $\vy$ using the formula $\vy=\sgn(\ma \vx_t + \vv)$, where $\vx_t$, and $\vv$ are random vectors whose entries have variances $1$ and $0.01$, respectively. The numerical results are summarized in Figure \ref{fig:FL-obj}, which compares out of the box SLEP (version 4.0) \citep{slep} against the very same algorithm employing our fast taut--string $\tvell_1^{\oned}$ solver instead of the default FLSA subroutine of SLEP. Comparison is done by showing the relative distance to the problem's optimum versus time. The optimal values in each setting were estimated by running both algorithms for a very large number of iterations.

\begin{figure}[htbp]
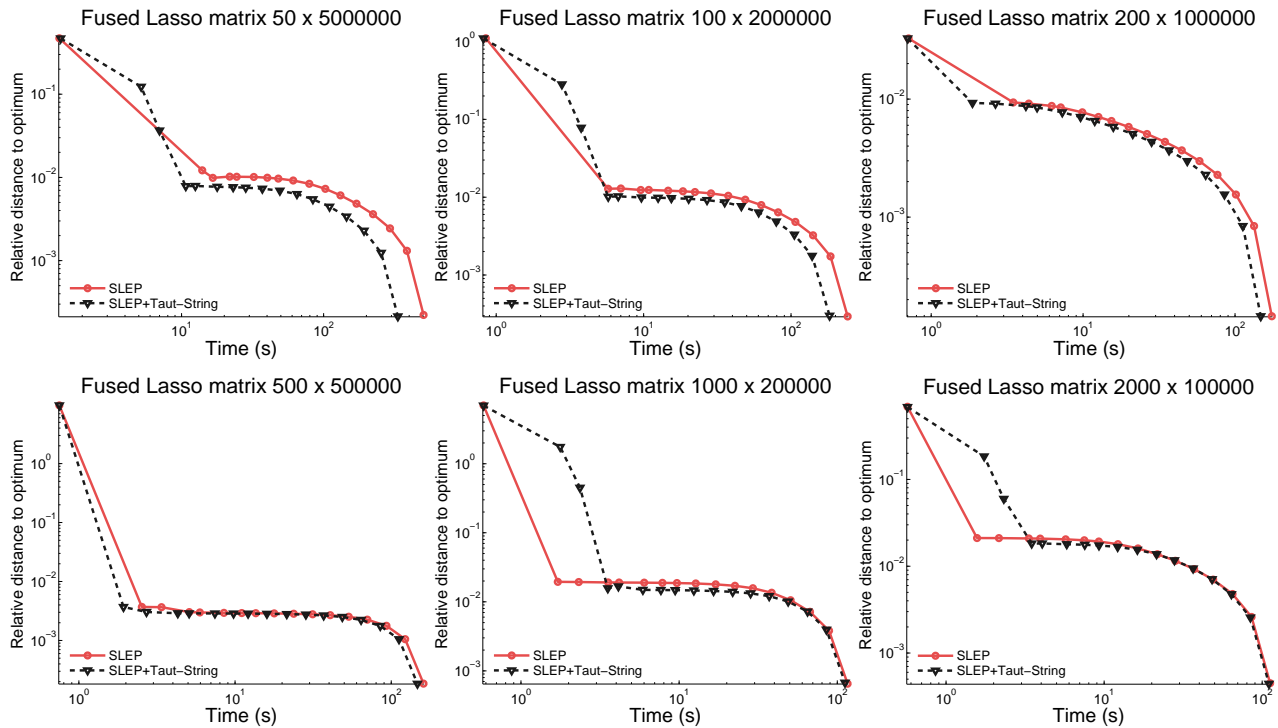

  \centering
  \begin{tabular}{llll}
    \hskip-25pt\includegraphics[width=.37\linewidth]{BMFL-obj-50x5000000}&\hskip-12pt
  \includegraphics[width=.37\linewidth]{BMFL-obj-100x2000000}&\hskip-12pt
  \includegraphics[width=.37\linewidth]{BMFL-obj-200x1000000}\\
  \hskip-25pt\includegraphics[width=.37\linewidth]{BMFL-obj-500x500000}&\hskip-12pt
  \includegraphics[width=.37\linewidth]{BMFL-obj-1000x200000}&\hskip-12pt
  \includegraphics[width=.37\linewidth]{BMFL-obj-2000x100000}&\hskip-12pt
  \end{tabular}
  \caption{Relative distance to optimum vs time of the Fused Lasso optimizers under comparison, for the different layouts of synthetic matrices.}
  \label{fig:FL-obj}
\end{figure}

The plots show a clear trend: when the input matrices feature a very large column dimension the use of our taut-string $\tvell_1^{\oned}$ solver turns into speedups in optimization times, which however become negligible for matrices with a more balanced rows/columns ratio. This result is reasonable, as the vector $x$ under optimization has size equal to the number of columns of the data matrix $A$. If $A$ has a large number of columns the cost of solving $\tvell_1^{\oned}$ is significant, and thus any improvement in this step has a noticeable impact on the overall algorithm. Conversely, when the number of rows in $A$ is large the cost of computing the gradient of the loss function ($\nabla \half \enorm{\ma \vx - \vy}^2 = \ma^T \left( \ma \vx - \vy \right)$) dominates, getting limited benefits from such improvements in prox computations. Therefore, it is for data with a very large number of features where our proposed method can provide a useful speedup.

\subsubsection{Fused-Lasso Experiments: Microarray classification}
Now we report results of applying the four FL models on a series of problems from bioinformatics. We test the FL models on binary classification tasks for the following real microarray datasets: ArrayCGH \citep{Stransky06}, Leukemias \citep{Golub99}, Colon \citep{Alon99}, Ovarian \citep{RogersOvarian05} and Rat \citep{HuaFS09}. Each dataset was split into three equal parts (ensuring similar proportion of classes in every split) for training, validation and test. The penalty parameters were found by exhaustive grid search in the range $\lambda_1, \lambda_2 \in [10^{-4},10^2]$ to maximize classification accuracy on the validation splits.

\begin{table}[t]
\caption{Classification accuracies for the presented Fused--Lasso models on microarray data. For the Variable Fusion models an $\ell_2$ version of TV was employed.
\label{tab:FLreal}
}
\centering
\begin{tabular}{lcccc}
\hline
\abovespace\belowspace
 Dataset & FL & VF-$\ell_2$ & LFL & LVF-$\ell_2$ \\
\hline
\abovespace 
ArrayCGH &    73.6\%  &  73.6\% &  \bf{84.2\%} &  73.6\% \\
Leukemias &    \bf{92.0\%} &  88.0\% &  \bf{92.0\%} &  88.0\% \\
Colon &    \bf{77.2\%} &  \bf{77.2\%}  & \bf{77.2\%}  & \bf{77.2\%} \\
Ovarian &    \bf{88.8\%}  & 83.3\%  & 83.3\% &  83.3\% \\
\belowspace 
Rat &    68.8\%  & 65.5\%  & \bf{72.1\%}  & \bf{72.1\%}
\\ \hline
\end{tabular}
\end{table}

Table \ref{tab:FLreal} shows test accuracies. In general, as expected the logistic-loss based FL models yield better classification accuracies than those based on least-squares, as such loss function tends to be more appropriate for classification problems. However the Ovarian dataset proves to be an exception, showing better performance under a squared loss. Regarding the TV-regularizer, the classic $\tvell_1^{\oned}$-penalty seems to perform better in general, with the $\tvell_2^{\oned}$-penalty showing competitive results in some settings.

\subsection{2D-TV: Experiments and Applications}
\label{sec:2d.expt}
We address now several practical applications that benefit from two-dimensional TV regularization; our results show again how the presented $\tvell_{p,q}^{\twod}$ prox operators fits in seamlessly into our modular framework to produce efficient proximal splitting solvers.

\subsubsection{Image denoising through anisotropic filtering}
\label{sec:aniso2D}
Our first example is related to the classic problem of image denoising, but with the twist that we deal with noise of an anisotropic character. More specifically, suppose that the true image $\mu \in \reals^{n \times m}$ is contaminated by additive noise $\mn$, so that only $\mu_0 = \mu + \mn$ is observed. The denoising problem estimates $\mu$ given just the noisy version $\mu_0$. This problem is highly ill-posed and as such not approachable unless additional assumptions on the noise (or on the underlying image) are made. 

\textbf{Isotropic and anisotropic models}: an extremely common choice is to simply assume the noise to be gaussian, or some other zero-mean distribution. Under these conditions, a classic method to perform such denoising task is the {\bf Rudin-Osher-Fatemi} (ROF) model~\citep{RudinTV92}, which finds an approximation $\mx$ to the original image by solving
\begin{equation}
  \label{eq.rof}
 \min_{\mx} \quad \frob{\mx - \mu_0}^2 + \lambda \sum_{i=2}^{n} \sum_{j=2}^{m} \enorm{\partial x_{i,j}} ,
\end{equation}
where $\partial x_{i,j}$ is the \emph{discrete gradient} 
\begin{equation*}
 \partial x_{i,j} = \left[ \begin{array}{c} x_{i,j} - x_{i-1,j} \\ x_{i,j} - x_{i,j-1} \end{array} \right].
\end{equation*}
That is, it is the vector of differences of $\mx_{i,j}$ and its neighbors along both axes. 

The objective of the first term in the ROF model is to penalize any deviation of $\mx$ from the observed image $\mu_0$, while the second term can be readily recognized as a mixed $(2,1)$-norm over the discrete gradient of $\mx$. This regularizer models caters to some prior knowledge: in natural images sharp discontinuities in intensity between neighboring points only appear in borders of objects, while the rest of the pixels usually show smooth variations in intensity. It makes sense, therefore, to penalize large values of the gradient, as sharp changes have a higher probability of having being produced by noise. Conversely, as the mean of the noise is zero, it is also sensible to maintain the denoised image $\mx$ close to the observed $\mu_0$. Merging these two goals produces the ROF model~\eqref{eq.rof}.

A closer look at the ROF regularizer reveals that it follows the spirit of the reviewed 2D-TV regularizer which also penalizes sharp variations between neighboring pixels. Indeed, all such regularizers are broadly categorized as TV regularizers within the image processing community. It is clear, though, that the ROF regularizer~\eqref{eq.rof} does not coincide with the $\tvell_{p,q}^{\twod}$ regularizer used in this paper. Some authors~\citep{TwIST} differentiate between these regularizers by naming the ROF approach as {\bf isotropic TV} and the $\tvell_{p,q}^{\twod}$-style approach as {\bf anisotropic TV}. This naming comes from the fact that isotropic TV penalizes each component of the discrete gradient $\partial x_{i,j}$ following an $\ell_2$ norm, whereas the anisotropic $\tvell_{p,q}^{\twod}$-norm  and in particular $\tvell_{1,1}^{\twod}$-norm, penalize rows and columns independently.

While image filtering using isotropic TV is generally preferred for natural images denoising \citep{BioucasTV06}, in some settings anisotropic filtering can produce better results, and in fact has been favored by some authors in the past \citep{Choksi2DBarcode,LiL1TV}. This is specially true on those images that present a ``blocky'' structure, and thus are better suited to the structure modeled by the $\tvell_{p,q}^{\twod}$-norm. Therefore, efficient methods to perform anisotropic filtering are also important.

\textbf{Anisotropic denoising experiments}: denoising using the anisotropic $\tvell_{p,q}^{\twod}$-norm reduces to solving
\begin{equation}
  \label{eq:26}
  \min_{\mx} \quad \frob{\mx - \mu_0}^2 + \lambda \tvell_{p,q}^{\twod}(\mx).
\end{equation}
But~\eqref{eq:26} is nothing but the $\tvell_{p,q}^{\twod}$-proximity problem, and hence can be directly solved by applying the 2D-TV prox operators described above. We solve~\eqref{eq:26} below for the choice $p=q=1$ (which is common in practice), for the following selection of algorithms:

\begin{itemize}
  \setlength{\itemsep}{0pt}
 \item Proximal Dykstra (§~\ref{sec:PD})
 \item The Douglas-Rachford variant based on alternating projections (§~\ref{sec:DR})
 \item The Split Bregman method of~\citet{splitbreg}, which follows an ADMM--like approach to split the $\ell_1$ norm apart from the discrete gradient operator, thus not requiring the use of a 1D-TV prox operator.
 \item Chambolle-Pock's method applied to 2D TV (§~\ref{sec:dualproxmethods}).
 \item Condat's general splitting method (§~\ref{sec:dualproxmethods}).
 \item Kolmogorov et al primal-dual method (§~\ref{sec:dualproxmethods}).
 \item Yang's method (ADMM) (§~\ref{sec:ADMM})
 \item The maximum flow approach by~\citet{goldfarb2009parametric}, which shows the relationship between the 2D-TV proximity minimization and the maximum flow problem over a grid, and thus applies an efficient maximum flow method to solve a discrete-valued version of 2D-TV.
\end{itemize}
In Proximal Dykstra, Douglas-Rachford and ADMM we use the linearized taut--string strategy presented before as solver for the base proximity operators. All algorithm parameters were set as recommended in their corresponding papers or public implementations, except for Proximal Dykstra and Douglas-Rachford, which are parameter free. For Chambolle-Pock we tried both the scheme with fixed algorithm parameters~\citep[Algorithm 1]{chambollePock} and the scheme with acceleration~\citep[Algorithm 2]{chambollePock}; however the accelerated version did not converge to the desired solution within enough accuracy (relative difference of $10^{-5}$), therefore only the results for the fixed version are reported. For Kolmogorov we follow the recommendations in~\citet{ChambolleErgodic14}, taking into account the Lipschitz constants of the optimized functions and selecting the parameter updating strategy that produced faster performance in the experiments: $\theta^{t+1} = \frac{1}{\sqrt{1 + \tau^t}}, \tau^{t+1} = \theta^{t+1} \tau^t, \sigma^{t+1} = \frac{\sigma^t}{\theta^{t+1}}, \theta^0 = 1, \tau^0 = \half, \sigma^0 = 1$.

\begin{table}[htbp]
\caption{\small Types of noise and parameters for each test image. A $\varnothing$ indicates that such noise was not applied for the image. {\it Gaussian} and {\it Speckle} correspond to gaussian additive and multiplicative (respectively) noises with zero mean and the indicated variance. {\it Salt \& Pepper} noise turns into black or white the indicated fraction of image pixels. {\it Poisson} regenerates each pixel by drawing a random value from a Poisson distribution with mean equal to the original pixel value, thus producing a more realistic noise.
\label{tab:noises}
}
\centering
\begin{tabular}{lcccc}
\hline
\abovespace\belowspace
      Image & Gaussian & Speckle & Poisson & Salt \& Pepper \\
\hline
\abovespace 
randomQR & 0.2 & 0.3 & $\varnothing$ & $\varnothing$ \\
shape & 0.05 & $\varnothing$ & $\varnothing$ & $\varnothing$ \\
trollface & $\varnothing$ & 1 & $\varnothing$ & $\varnothing$ \\
diagram & $\varnothing$ & $\varnothing$ & $\checkmark$ & $\varnothing$  \\
text & $\varnothing$ & $\varnothing$ & $\varnothing$ & 0.1 \\
comic & 0.05 & $\varnothing$ & $\checkmark$ & $\varnothing$ \\
contour & $\varnothing$ & $\varnothing$ & $\checkmark$ & 0.4 \\
\belowspace
phantom & $\varnothing$ & 2 & $\checkmark$ & $\varnothing$
\\ \hline
\end{tabular}
\end{table}

The images used in the experiments are displayed in Appendix \ref{app:images} as Figure \ref{fig:imagesExp}. To test the filters under a variety of scenarios, different kinds of noise were introduced for each image. Table \ref{tab:noises} gives details on this, while the noisy images are shown in Figure \ref{fig:imagesNoised}. All QR barcode images used the same kind and parameters of noise. Noise was introduced using Matlab's {\it imnoise} function.

Values for the regularization parameter $\lambda$ were found by maximizing the quality of the reconstruction, measured using {\bf Improved Signal-to-Noise Ratio} (ISNR) \citep{SALSA}. ISNR is defined as
\begin{equation*}
 \text{ISNR}(\mx,\mu,\mu_0) = 10 \log_{10} \frac{\frob{\mu_0 - \mx}^2}{\frob{\mx - \mu}^2} ,
\end{equation*}
where $\mu$ is the original image, $\mu_0$ its noisy variant, and $\mx$ the reconstruction. 

\begin{figure}[htbp]
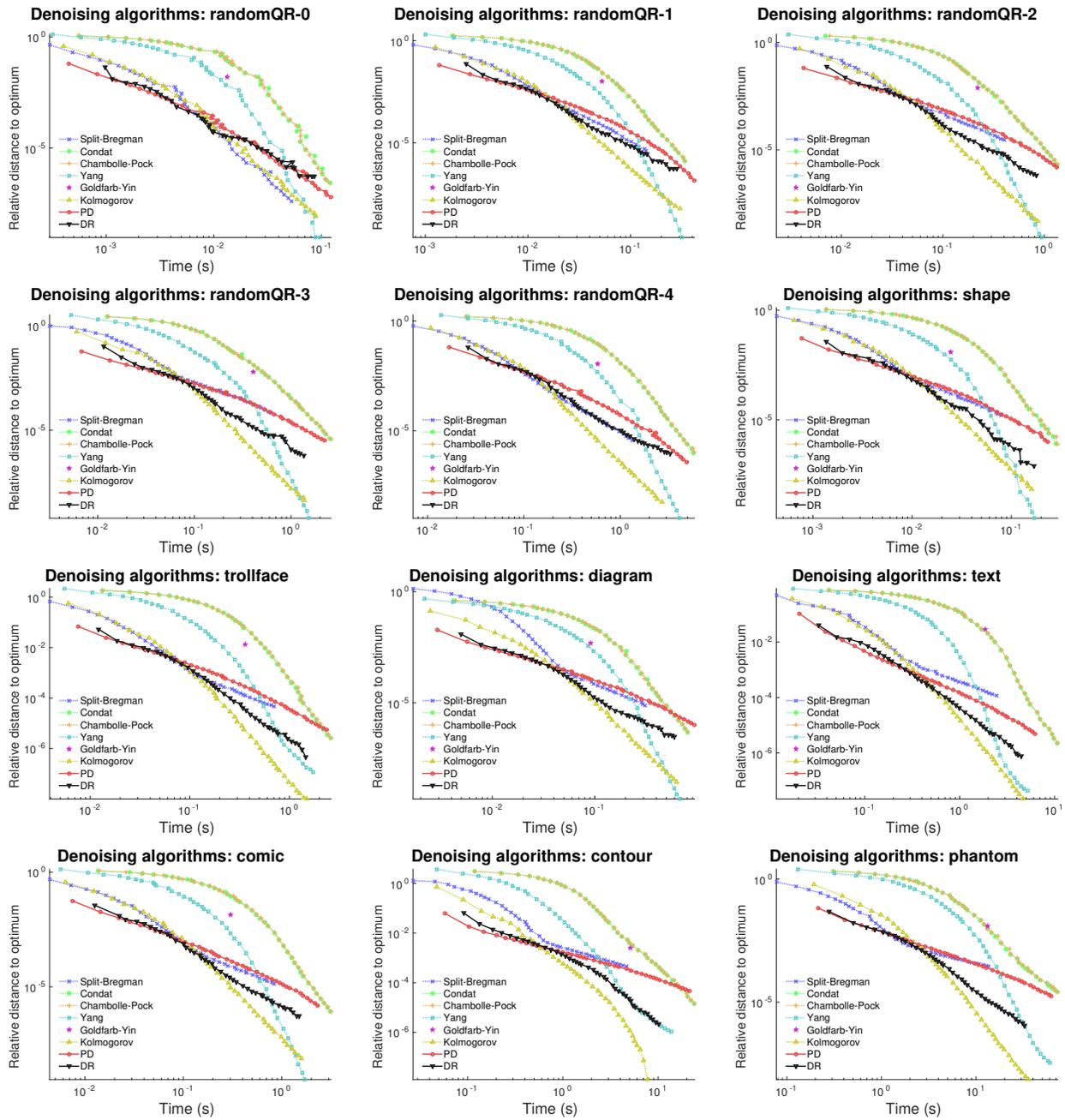

  \centering
  \begin{tabular}{llll}
    \hskip-25pt\includegraphics[width=.37\linewidth]{BMdenoising-obj-randomQR_0}&\hskip-12pt
  \includegraphics[width=.37\linewidth]{BMdenoising-obj-randomQR_1}&\hskip-12pt
  \includegraphics[width=.37\linewidth]{BMdenoising-obj-randomQR_2}\\
  \hskip-25pt\includegraphics[width=.37\linewidth]{BMdenoising-obj-randomQR_3}&\hskip-12pt
  \includegraphics[width=.37\linewidth]{BMdenoising-obj-randomQR_4}&\hskip-12pt
  \includegraphics[width=.37\linewidth]{BMdenoising-obj-shape}\\
  \hskip-25pt\includegraphics[width=.37\linewidth]{BMdenoising-obj-trollface}&\hskip-12pt
  \includegraphics[width=.37\linewidth]{BMdenoising-obj-diagram}&\hskip-12pt
  \includegraphics[width=.37\linewidth]{BMdenoising-obj-text}\\
  \hskip-25pt\includegraphics[width=.37\linewidth]{BMdenoising-obj-comic}&\hskip-12pt
  \includegraphics[width=.37\linewidth]{BMdenoising-obj-contour}&\hskip-12pt
  \includegraphics[width=.37\linewidth]{BMdenoising-obj-phantom}\\
  \end{tabular}
  \caption{Relative distance to optimum vs time of the denoising 2D-TV algorithms under comparison, for the different images considered in the experiments.}
  \label{fig:2DTV-obj}
\end{figure}

\begin{figure}[htbp]
  \centering
  \begin{tabular}{llll}
    \hskip-25pt\includegraphics[width=.37\linewidth]{BMdenoising-ISNR-randomQR_0}&\hskip-12pt
  \includegraphics[width=.37\linewidth]{BMdenoising-ISNR-randomQR_1}&\hskip-12pt
  \includegraphics[width=.37\linewidth]{BMdenoising-ISNR-randomQR_2}\\
  \hskip-25pt\includegraphics[width=.37\linewidth]{BMdenoising-ISNR-randomQR_3}&\hskip-12pt
  \includegraphics[width=.37\linewidth]{BMdenoising-ISNR-randomQR_4}&\hskip-12pt
  \includegraphics[width=.37\linewidth]{BMdenoising-ISNR-shape}\\
  \hskip-25pt\includegraphics[width=.37\linewidth]{BMdenoising-ISNR-trollface}&\hskip-12pt
  \includegraphics[width=.37\linewidth]{BMdenoising-ISNR-diagram}&\hskip-12pt
  \includegraphics[width=.37\linewidth]{BMdenoising-ISNR-text}\\
  \hskip-25pt\includegraphics[width=.37\linewidth]{BMdenoising-ISNR-comic}&\hskip-12pt
  \includegraphics[width=.37\linewidth]{BMdenoising-ISNR-contour}&\hskip-12pt
  \includegraphics[width=.37\linewidth]{BMdenoising-ISNR-phantom}\\
  \end{tabular}
  \vspace{-0.5cm}
  \caption{Increased Signal to Noise Ratio (ISNR) vs time of the denoising 2D-TV algorithms under comparison, for the different images considered in the experiments.}
  \label{fig:2DTV-ISNR}
\end{figure}

To compare the algorithms we run all of them for each image and measured its ISNR and relative distance to the optimal objective value of the current solution at each iteration through their execution. The only exception to this procedure is the method of Goldfarb and Yin, which is non--iterative and thus always returns an exact solution, and so we just measure the time required to finish. The optimal objective value was estimated by running all methods for a very large number of iterations and taking the minimum value of them all. This produced the plots shown in Figures \ref{fig:2DTV-obj}--\ref{fig:2DTV-ISNR}. From them the following observations are of relevance:
\begin{itemize}
  \setlength{\itemsep}{1pt}
 \item Condat's method and Chambolle-Pock's method are reduced to essentially the same algorithm when applied to the particular case of anisotropic 2D TV denoising. Furthermore, they seem to perform slowly when compared to other methods.
 \item ADMM (Yang's method) exhibits slow performance at the beginning, but when run for sufficient time is able to achieve a good approximation to the optimum.
 \item The Split Bregman method, in spite of being an ADMM--like method much like Condat's or Chambolle-Pock, performs significantly better than those. We attribute this to the very efficient implementation provided by its authors, and to the fact that a fast approximate method is employed to compute the required matrix inversions throughout the method.
 \item The method by Goldfarb and Yin is slower than other approaches and seems to provide suboptimal solutions. We attribute this to the fact that this method solves a discrete (integer--rounded) approximation to the problem. We acknowledge that other methods exploiting the Total Variation - Minimum-cut relationship have been proposed with varying speed results, e.g.~\citep{DuanTV}, however the suboptimality issues still apply.
 \item The method by Kolmogorov et al, when properly accelerated by a suitable choice of adaptive stepsizes, seems to be the best choice for finding very accurate solutions, though it is very closely followed by ADMM.
 \item The parameter free methods PD and DR are the fastest to achieve a mid-quality solution, with Douglas-Rachford performing better than Proximal Dykstra.
\end{itemize}

Considering these facts, the method of choice among the ones considered depends on the desired accuracy. We argue, however, that for the purpose of image processing a mid-quality solution is sufficient. The ISNR plots of Figure \ref{fig:2DTV-ISNR} certainly seem to support this, as the perceived quality of the reconstruction, roughly approximated by the ISNR, saturates rapidly and no significant improvements are obtained through further optimization. Given this, the proposed methods seem to be the best suited for the considered task.

For quick reference, Table \ref{tab:2DmethodsComparison} presents a summary of key points of the compared methods, along with some recommendations about when to put them to use.

\begin{table}[tbp]
\caption{\small Summary of key points of the compared $\tvell_{1,1}^{\twod}$ proximity (denoising) methods.
\label{tab:2DmethodsComparison}
}
\centering
\begin{tabular}{lcp{0.7\textwidth}}
\hline
\abovespace\belowspace
      Method & & Key points \\
\hline
\abovespace 
{\bf Douglas Rachford} & \color{red}$+$ & Fast convergence to medium-quality \\
                       & \color{red}$+$ & Embarrassingly parallel \\
                       & \color{blue}$-$ & Slow for higher accuracies \\
                       & $\Rightarrow$ & Ideal for standard denoising tasks \belowspace\\
\hline
\abovespace 
Proximal Dykstra & \color{red}$+$ & Attainable accuracies similar to DR \\
                 & \color{blue}$-$ & But slower than DR \\
                 & $\Rightarrow$ & Use DR instead \belowspace\\
\hline
\abovespace 
Split Bregman & \color{red}$+$ & Eventually performs similarly to DR \\
              & \color{blue}$-$ & Slow convergence at first iterations \\
              & $\Rightarrow$ & Use DR instead \belowspace\\
\hline
\abovespace 
Chambolle--Pock & \color{blue}$-$ & Slow \\
                & $\Rightarrow$ & Use other method instead \belowspace\\
\hline
\abovespace 
Condat & \color{red}$+$ & Solves objectives involving a sum of smooth/non--smooth functions with linear operators \\
       & \color{blue}$-$ & Reduces to Chambolle--Pock when solving basic image denoising \\
       & $\Rightarrow$ & Use only when dealing with more complex functionals \belowspace\\
\hline
\abovespace 
ADMM (Yang) & \color{red}$+$ & More accurate \\
     & \color{blue}$-$ & Slightly slower than Kolmogorov \\
     & \color{blue}$-$ & Bad behavior for mid-quality solutions \\
     & $\Rightarrow$ & Use Kolmogorov instead \belowspace\\
\hline
\abovespace 
Kolmogorov & \color{red}$+$ & More accurate \\
     & \color{blue}$-$ & Slower than DR for low accuracies \\
     & $\Rightarrow$ & Useful when extremely accurate solutions are required \belowspace\\
\hline
\abovespace 
Goldfarb-Yin & \color{red}$+$ & Solves the discrete version of the problem \\
             & \color{blue}$-$ & Slow \\ 
             & \color{blue}$-$ & Poor accuracy for the continuous version \\
             & $\Rightarrow$ & Apply only when solving the discrete problem \belowspace\\
\hline
\end{tabular}
\end{table}

\subsubsection{Parallelization experiments}
In addition to the previous experiments and to illustrate the parallelization potential of the presented anisotropic filtering method, Figure~\ref{fig.multicore} plots running times for the PD algorithm as the number of processor core ranges from 1 through 16. We see that for the smaller images, the gains due to more processors essentially flatten out by 8 cores, where synchronization and memory contention offsets potential computational gains (first row). For the larger images, there is steadier speedup as the number of cores increase (in each plot there seems to be a ``bump'' at 14 processors; we attribute this to a quirk of the multicore machine that we used). From all the plots, however, the message is clear: our TV prox operators exploit parallelization well, and show substantial speedups as more processor cores become available. 

We should also note in passing that the Split Bregman method, which in the previous experiments showed a reasonable performance, turns out to be much harder to parallelize. This fact was already observed by~\cite{WangTV14} in the context of isotropic TV. Therefore when several processor cores are available the proposed modular strategy seems to be even more suitable to the task.

\begin{figure}[htbp]
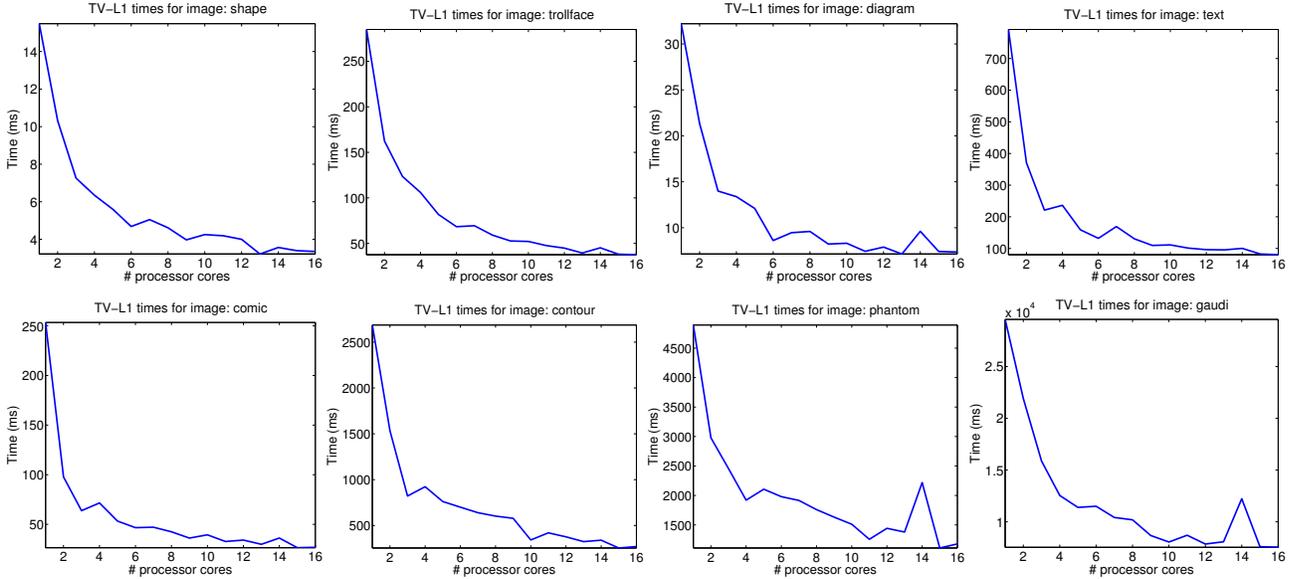

  \centering
  \begin{tabular}{llll}
    \hskip-25pt\includegraphics[width=.28\linewidth]{multicore_plot1}&\hskip-12pt
  \includegraphics[width=.28\linewidth]{multicore_plot2}&\hskip-12pt
  \includegraphics[width=.28\linewidth]{multicore_plot3}&\hskip-12pt
  \includegraphics[width=.28\linewidth]{multicore_plot4}\\
  \hskip-25pt\includegraphics[width=.28\linewidth]{multicore_plot5}&\hskip-12pt
  \includegraphics[width=.28\linewidth]{multicore_plot6}&\hskip-12pt
  \includegraphics[width=.28\linewidth]{multicore_plot7}&\hskip-12pt
  \includegraphics[width=.28\linewidth]{multicore_plot8}\\
  \end{tabular}
  \caption{Multicore speedups on different images (see also Table~\ref{tab:convolutions})}
  \label{fig.multicore}
\end{figure}

\subsubsection{Anisotropic image deconvolution}
Taking a step forward we now confront the problem of {\bf image deconvolution} (or image deblurring). This setting is more complex since the task of image recovery is made harder by the presence of a {\bf convolution kernel} $K$ that distorts the image as
\begin{equation*}
 \mu_0 = \mk \ast \mu + \mn,
\end{equation*}
where $\mn$ is noise as before and $\ast$ denotes convolution. To recover the original image $\mu$ from the observed $\mu_0$, it is common to solve the following deconvolution problem 
\begin{equation}
  \label{eq:27}
  \min_{\mx}\quad\half \frob{\mk \ast \mx- \mu}^2 + \lambda r(\mx).
\end{equation}
As before, the regularizer $r(\mx)$ can be isotropic or anisotropic TV, among others. Here we focus again on the anisotropic TV case to show how the presented solvers can also be used for this image task.

Problem~\eqref{eq:27} also fits the proximal splitting framework, and so we employ the popular FISTA~\citep{fista} method for image processing. The gradient of the loss can be dealt efficiently by exploiting $\mk$ being a convolution operator, which through the well--known convolution theorem is equivalent to a dot product in the frequencies space, and so the computation is done by means of fast Fourier transforms and products. Several other solvers that explicitly deal with convolution operators are also available~\citep{SALSA,TwIST}. A notable solver specific for the isotropic case is given by the work of \citet{Krishnan09}, that handles even nonconvex isotropic TV-norms ($0 < p < 1$). But this approach does not extend to the anisotropic case, so we focus on general proximal splitting.

\begin{table}[htbp]
\begin{center}
\begin{tabular}{lcccc}
\hline
\abovespace\belowspace
      Image & Convolution & Parameters \\
\hline
\abovespace 
randomQR & Motion & Length 5, Angle 35$^o$ \\
shape & Average & Size 3 $\times$ 3 \\
trollface & Disk & Radius 5 \\
diagram & Motion & Length 5, Angle 0$^o$ \\
text & Average & Size 1 $\times$ 10 \\
comic & Gaussian & Size 15, Deviation 2 \\
contour & Disk & Radius 5 \\
\belowspace
phantom & Motion & Length 100, Angle 240$^o$
\\ \hline
\end{tabular}
\end{center}
\caption{\small Convolution kernels used for each test image. {\it Average} substitutes each pixel with the average of its surrounding $n \times m$ neighbors. {\it Disk} performs the same operation within a disk-shaped neighborhood of the shown radius. {\it Gaussian} uses a $n \times n$ neighborhood and assigns different weights to each neighbor following the value of a gaussian distribution of the indicated deviation centered at the current pixel. {\it Motion} emulates the distortions produced when taking a picture in motion, defining a neighborhood following a vector of the indicated length and angle. 
\label{tab:convolutions}
}
\end{table}

We use the same test images as for our denoising experiments (Figure \ref{fig:imagesExp}), with identical noise patterns (Table \ref{tab:noises}) for the QR images, and gaussian noise with variance 0.05 for the rest. In addition, we convolve each image with a different type of kernel to assess the behavior for a variety of convolutions; Table \ref{tab:convolutions} shows the kernels applied. We constructed these kernels using Matlab's {\it fspecial} function; the convolved images are shown in Figure \ref{fig:imagesBlurred}.

The values for the regularizer $\lambda$ were determined by maximizing the reconstruction quality measured in ISNR. Since deconvolution is much more expensive than denoising, instead of performing an exhaustive search for the best $\lambda$, we used a Focused Grid Search strategy~\citep{HAIS07,NeurocomputingRejilla} to find the best performing values.

Any denoising subroutine can be plugged into the aforementioned deconvolution methods, however for comparison purposes we run our experiments with the best proposed method, Douglas Rachford (Alternating Reflections), and the best competing method among those reviewed from the literature, Kolmogorov et al. A key parameter in deconvolution performance is for how long should these methods be run at each FISTA iteration. To select this, we first run FISTA with 100 iterations of Douglas Rachford per step, for a large number of FISTA steps, and take the final objective value as an estimate of the optimum. Then we find the minimum number of Douglas Rachford and Kolmogorov iterations for which FISTA can achieve a relative distance to such optimum below $10^{-3}$. The reason for doing this is that for larger distances the attained ISNR values are still far from convergence. This turned to be 5 iterations for Douglas Rachford an 10 for Kolmogorov. We then run FISTA for such configurations of the inner solvers, and others with a larger number of inner iterations, for comparison 
purposes.

Figures \ref{fig:2DTVdeconv-obj}-\ref{fig:2DTVdeconv-ISNR} show the evolution of objective values and ISNR for all the tested configurations. In general, Douglas Rachford seems to be slightly better at finding more accurate solutions, and also faster at converging to the final ISNR value. We explain this by the fact that the major advantage of Douglas Rachford is its aforementioned ability to find medium--quality solutions in a very small number of iterations: this is why with a small number of inner DR iterates we can converge to good ISRN levels.

\begin{figure}[htbp]
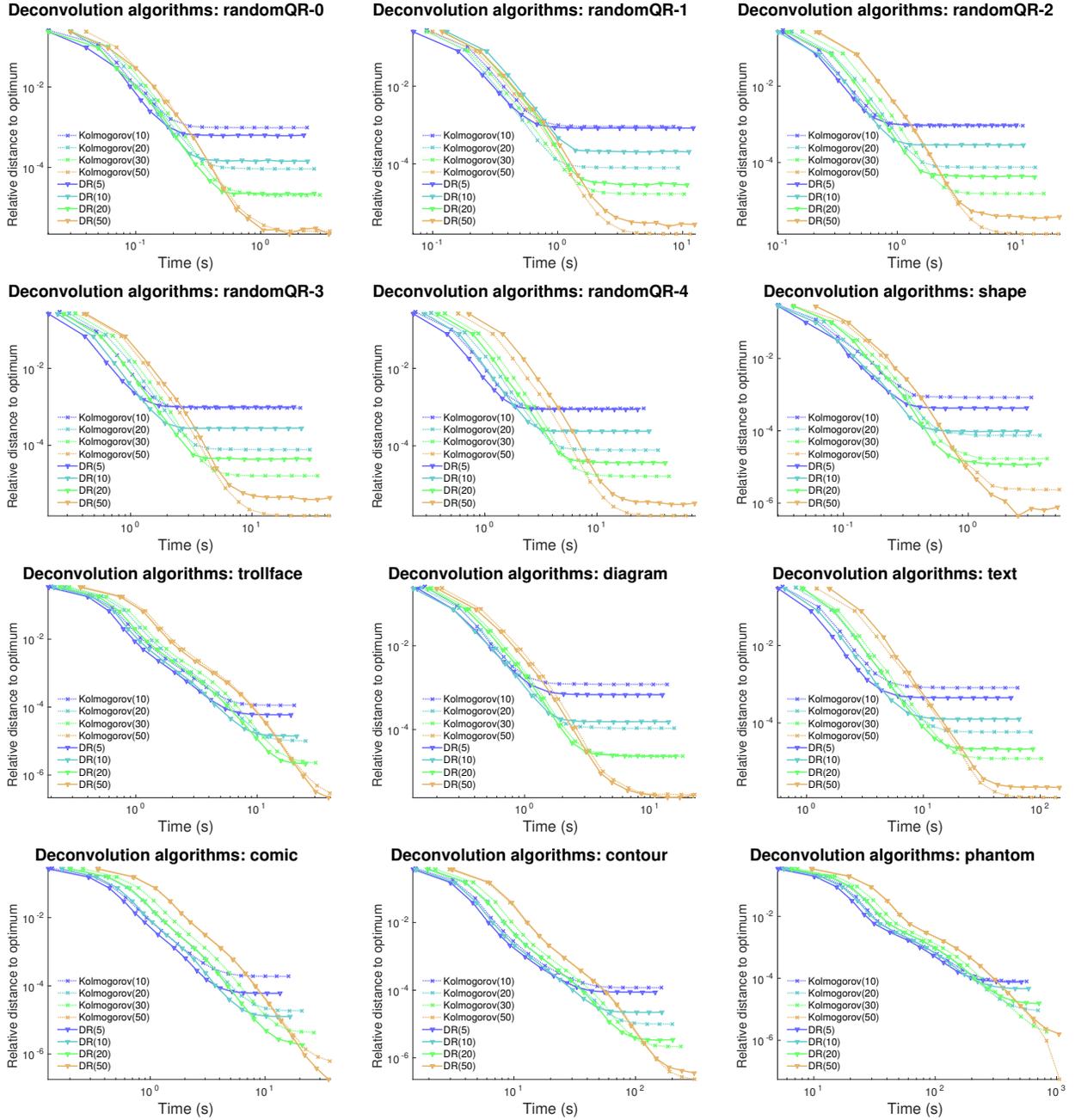

  \centering
  \begin{tabular}{llll}
    \hskip-25pt\includegraphics[width=.37\linewidth]{BMdeconvolution-obj-randomQR_0}&\hskip-12pt
  \includegraphics[width=.37\linewidth]{BMdeconvolution-obj-randomQR_1}&\hskip-12pt
  \includegraphics[width=.37\linewidth]{BMdeconvolution-obj-randomQR_2}\\
  \hskip-25pt\includegraphics[width=.37\linewidth]{BMdeconvolution-obj-randomQR_3}&\hskip-12pt
  \includegraphics[width=.37\linewidth]{BMdeconvolution-obj-randomQR_4}&\hskip-12pt
  \includegraphics[width=.37\linewidth]{BMdeconvolution-obj-shape}\\
  \hskip-25pt\includegraphics[width=.37\linewidth]{BMdeconvolution-obj-trollface}&\hskip-12pt
  \includegraphics[width=.37\linewidth]{BMdeconvolution-obj-diagram}&\hskip-12pt
  \includegraphics[width=.37\linewidth]{BMdeconvolution-obj-text}\\
  \hskip-25pt\includegraphics[width=.37\linewidth]{BMdeconvolution-obj-comic}&\hskip-12pt
  \includegraphics[width=.37\linewidth]{BMdeconvolution-obj-contour}&\hskip-12pt
  \includegraphics[width=.37\linewidth]{BMdeconvolution-obj-phantom}\\
  \end{tabular}
  \caption{Relative distance to optimum vs time of the deconvolution 2D-TV algorithms under comparison, for the different images considered in the experiments.}
  \label{fig:2DTVdeconv-obj}
\end{figure}

\begin{figure}[htbp]
  \centering
  \begin{tabular}{llll}
    \hskip-25pt\includegraphics[width=.37\linewidth]{BMdeconvolution-ISNR-randomQR_0}&\hskip-12pt
  \includegraphics[width=.37\linewidth]{BMdeconvolution-ISNR-randomQR_1}&\hskip-12pt
  \includegraphics[width=.37\linewidth]{BMdeconvolution-ISNR-randomQR_2}\\
  \hskip-25pt\includegraphics[width=.37\linewidth]{BMdeconvolution-ISNR-randomQR_3}&\hskip-12pt
  \includegraphics[width=.37\linewidth]{BMdeconvolution-ISNR-randomQR_4}&\hskip-12pt
  \includegraphics[width=.37\linewidth]{BMdeconvolution-ISNR-shape}\\
  \hskip-25pt\includegraphics[width=.37\linewidth]{BMdeconvolution-ISNR-trollface}&\hskip-12pt
  \includegraphics[width=.37\linewidth]{BMdeconvolution-ISNR-diagram}&\hskip-12pt
  \includegraphics[width=.37\linewidth]{BMdeconvolution-ISNR-text}\\
  \hskip-25pt\includegraphics[width=.37\linewidth]{BMdeconvolution-ISNR-comic}&\hskip-12pt
  \includegraphics[width=.37\linewidth]{BMdeconvolution-ISNR-contour}&\hskip-12pt
  \includegraphics[width=.37\linewidth]{BMdeconvolution-ISNR-phantom}\\
  \end{tabular}
  \caption{Increased Signal to Noise Ratio (ISNR) vs time of the deconvolution 2D-TV algorithms under comparison, for the different images considered in the experiments.}
  \label{fig:2DTVdeconv-ISNR}
\end{figure}

For reference we also provide the resultant deconvoluted images as Figure~\ref{fig:imagesDeblurred}.

\subsubsection{2D Fused-Lasso Signal Approximator}
\label{sec:TV2FLSA}

The {\bf Fused--Lasso Signal Approximator} (FLSA)  \citep{frHaHoTi07} can be regarded as a particular case of Fused-Lasso where the input matrix $\ma$ is the identity matrix $\mi$, i.e.,
\begin{equation*}
   \nlmin_{\vx} \quad \half \enorm{\vx - \vy}^2 + \lambda_1 \norm{\vx}{1} + \lambda_2 \tvell_1^{\oned}(\vx).
\end{equation*}

This problem can be solved immediately using the methods presented in \S\ref{sec:FLasso}. A slightly less trivial problem is the one posed by the 2D variant of FLSA:
\begin{equation}
  \label{eq:29}
 \nlmin_{\mx} \quad \half\frob{\mx - \my}^2 + \lambda_1 \norm{\vect(\mx)}{1} + \lambda_2 \tvell_{1,1}^{\twod}(\mx).
\end{equation}
\citet{frHaHoTi07} used this model for denoising images where a large number of pixels are known to be completely black (intensity $0$), which aligns well with the structure imposed by the $\ell_1$ regularizer.

Akin to the 1D-case, 2D-FLSA~\eqref{eq:29} can also be solved 
by decomposing its computation into two prox operators~\citep{frHaHoTi07}; formally,
\begin{equation*}
 \prox_{\lambda_1 \norm{\cdot}{1} + \lambda_2 \tvell_{1,1}^{\twod}(\cdot)}(\my) = \prox_{\lambda_1 \norm{\cdot}{1}} \bigl( \prox_{\lambda_2 \tvell_{1,1}^{\twod}(\cdot)} (\my)\bigr).
\end{equation*}

Thus, to solve~\eqref{eq:29} we merely invoke one of the presented $\tvell_{1,1}^{\twod}$ prox operators and then apply soft-thresholding to the results. Since soft-thresholding is done in closed form, the performance of a 2D-FLSA solver depends only on its ability to compute $\tvell_{1,1}^{\twod}$-proximity efficiently. We can then safely claim that the results summarized in table \ref{tab:2DmethodsComparison} apply equivalently to 2D-FLSA, and so the proposed Douglas Rachford method performs best when reconstruction ISNR is the primary concern.


\subsection{Application of higher-dimensional TV}
\label{sec:appl.multi}
We now apply the presented multidimensional TV regularizer to anisotropic filtering for {\bf video denoising}. The extension to videos from images is natural. Say a video contains $f$ frames of size $n \times m$ pixels; this video can be viewed as a 3D-tensor $\tx \in \reals^{n \times m \times f}$, on which a 3D-TV based filter can be effected by
\begin{equation}
  \label{eq:30}
 \nlmin_{\tx} \quad \half\frob{\tx - \tu_0}^2 + \lambda \tvell_{p_1,p_2,p_3}^{\threed}(\tx),
\end{equation}
where $\tu_0$ is the observed noisy video, and $\tvell_{p_1,p_2,p_3}^{\threed} = \tvell^3_{\vp}$ with $\vp = [p_1, p_2, p_3]$. Application of the filter~\eqref{eq:30} is nothing but computation of the prox operator, which can be done using the Parallel-Proximal Dykstra (PPD) algorithm presented in Sec.~\ref{sec:proxMulti}.

\begin{table}
\begin{center}
\begin{tabular}{lcccc}
\hline
\abovespace\belowspace
      Sequence & Frame resolution & Number of frames & Total number of pixels \\
\hline
\abovespace 
{\it salesman} & 288 $\times$ 352 & 50 & 5 million \\
{\it coastguard} & 176 $\times$ 144 & 300 & 7.6 million \\
\belowspace
{\it bicycle} & 720 $\times$ 576 & 30 & 12.4 million
\\ \hline
\end{tabular}
\end{center}
\caption{\small Size details of video sequences used in the video denoising experiments.
\label{tab:videoSequences}
}
\end{table}

We apply this idea to the video sequences detailed in Table \ref{tab:videoSequences}. All of the sequences are made of grayscale pixels. Figure \ref{fig:videoClean} in the Appendix shows some of the  frames of the {\it salesman} sequence. We noise every frame of these sequences by applying gaussian noise with zero mean and variance $0.01$, using Matlab's {\it imnoise} function. Then we solve problem \ref{eq:30} for each sequence, adjusting the regularization value so as to maximize ISNR of the reconstructed signal. We test the following algorithms, which have been previously applied in the literature for solving 3D-TV, with the only exception Parallel Proximal Dykstra:
\begin{itemize}
  \setlength{\itemsep}{0pt}
 \item Parallel Proximal Dykstra (§~\ref{sec:PD}).
 \item Yang's method, which is based on ADMM (§~\ref{sec:PD})
 \item The maximum flow approach by~\citet{goldfarb2009parametric}, which features an implementation for 3D grids, thus solving a discrete-valued version of 3D-TV.
\end{itemize}
For both PPD and ADMM we again make use of linearized taut-string 1D TV solver. We must also point out that other image denoising methods seem amenable for extension into the multidimensional setting, such as Condat's and Chambolle-Pock methods. However in the light of our image denoising results we do not deem them as good choices for this problem. A more reasonable choice might be to extend Split-Bregman to multiple dimensions, but such an extension has not been implemented or proposed as far as we know. We would also like to note that we have considered extending the Douglas Rachford method to a multidimensional setting, however such task is complex and thus we decided to focus on Parallel Proximal Dykstra.

\begin{figure}[htbp]
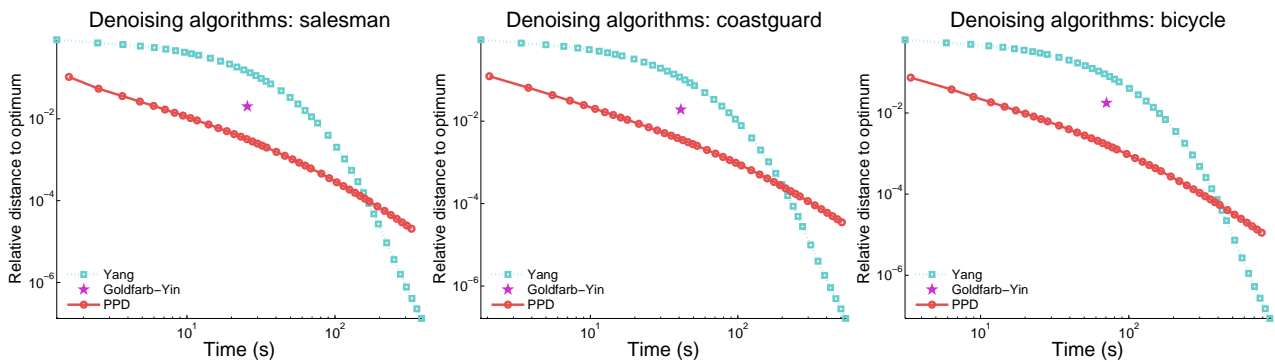

  \centering
  \begin{tabular}{llll}
    \hskip-25pt\includegraphics[width=.37\linewidth]{BMdenoising-obj-salesman}&\hskip-12pt
  \includegraphics[width=.37\linewidth]{BMdenoising-obj-coastguard}&\hskip-12pt
  \includegraphics[width=.37\linewidth]{BMdenoising-obj-bicycle}
  \end{tabular}
  \caption{Relative distance to optimum vs time of the denoising 3D-TV algorithms under comparison, for the different video sequences considered in the experiments.}
  \label{fig:3DTV-obj}
\end{figure}

\begin{figure}[htbp]
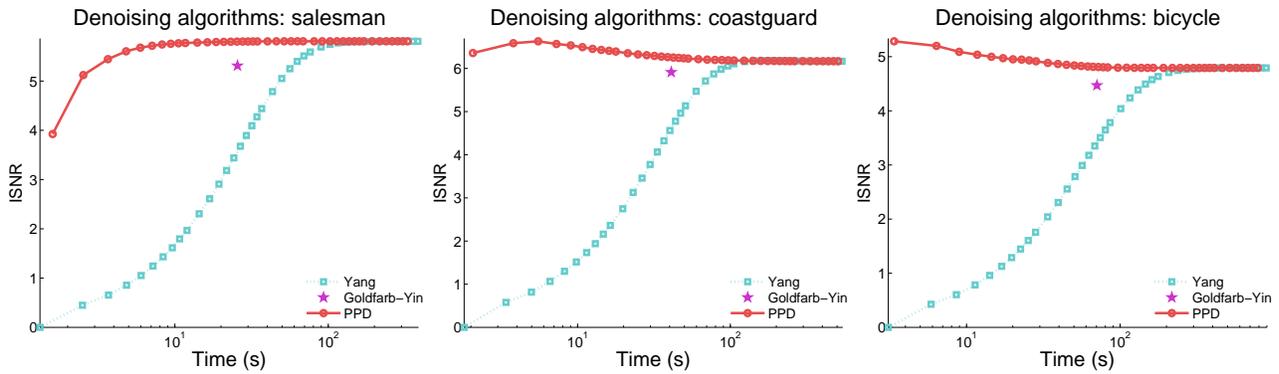

  \centering
  \begin{tabular}{llll}
    \hskip-25pt\includegraphics[width=.37\linewidth]{BMdenoising-ISNR-salesman}&\hskip-12pt
  \includegraphics[width=.37\linewidth]{BMdenoising-ISNR-coastguard}&\hskip-12pt
  \includegraphics[width=.37\linewidth]{BMdenoising-ISNR-bicycle}
  \end{tabular}
  \vspace{-0.5cm}
  \caption{Increased Signal to Noise Ratio (ISNR) vs time of the denoising 3D-TV algorithms under comparison, for the different video sequences considered in the experiments.}
  \label{fig:3DTV-ISNR}
\end{figure}

Similarly to our previous image denoising experiments, we ran the algorithms under comparison for each video sequence and measured its ISNR and relative distance to the optimal objective value of the current solution at each iteration through their execution. Again the exception is the Goldfarb-Yin method, which is non--iterative and so we only report the time required for its termination. The optimal objective value was estimated by running all methods for a very large number of iterations and taking the minimum value of them all. This produced the plots shown in Figures \ref{fig:3DTV-obj}--\ref{fig:3DTV-ISNR}. From them the following observations are of relevance:
\begin{itemize}
  \setlength{\itemsep}{1pt}
 \item Following the pattern observed in the image denoising experiments, ADMM (Yang's method) is best suited for finding very accurate solutions.
 \item The method by Goldfarb and Yin again provides suboptimal solutions, due to the discrete approximation it uses.
 \item Parallel Proximal Dykstra is the fastest to achieve a mid-quality solution.
 \item Intermediate solutions prior to convergence of the PPD run result in better ISNR values for the {\it coastguard} and {\it bicycle} datasets. This hints that the denoising model used in this experiment may not be optimal for these kind of signals; indeed, more advanced denoising models abound in the signal processing literature. Hence we do not claim novel results in terms of ISNR quality, but just in solving this classic denoising model more efficiently.
\end{itemize}

The ISNR plots in Figure \ref{fig:3DTV-ISNR} also show how both Parallel Proximal Dykstra and ADMM (Yang's method) converge to equivalent solutions in practice. Therefore, for the purpose of video denoising PPD seems to be the best choice, unless for some reason a high degree of accuracy is required, for which ADMM should be preferred.

%

\subsection*{Acknowledgments}
ÁB acknowledges partial financial support from Spain’s grants TIN2010-21575-C02-01, TIN2013-42351-P, S2013/ICE-2845 CASI-CAM-CM, TIN2016-76406-P and TIN2015-70308-REDT (MINECO/FEDER EU) during the long period leading to the writing of this manuscript. We thank R. Tibshirani for bringing~\citep{dpTV} to our attention, and S. Jegelka for alerting us to the importance of weighted total-variation problems.

\bibliographystyle{abbrvnat} 
\setlength{\bibsep}{1pt}

\begin{thebibliography}{86}
\providecommand{\natexlab}[1]{#1}
\providecommand{\url}[1]{\texttt{#1}}
\expandafter\ifx\csname urlstyle\endcsname\relax
  \providecommand{\doi}[1]{doi: #1}\else
  \providecommand{\doi}{doi: \begingroup \urlstyle{rm}\Url}\fi

\bibitem[Afonso et~al.(2010)Afonso, Bioucas-Dias, and Figueiredo]{SALSA}
M.~V. Afonso, J.~M. Bioucas-Dias, and M.~A.~T. Figueiredo.
\newblock Fast image recovery using variable splitting and constrained
  optimization.
\newblock \emph{IEEE Transactions on Image Processing}, 19\penalty0 (9),
  September 2010.

\bibitem[Ala{\i}z et~al.(2013)Ala{\i}z, Barbero, and
  Dorronsoro]{alaiz2013group}
C.~M. Ala{\i}z, {\'A}.~Barbero, and J.~R. Dorronsoro.
\newblock Group fused lasso.
\newblock \emph{Artificial Neural Networks and Machine Learning--ICANN 2013},
  page~66, 2013.

\bibitem[Anderson et~al.(1999)]{LAPACK}
E.~Anderson et~al.
\newblock \emph{{LAPACK} Users' Guide}.
\newblock Society for Industrial and Applied Mathematics, Philadelphia, PA,
  third edition, 1999.
\newblock ISBN 0-89871-447-8 (paperback).

\bibitem[Bach(2010)]{bach10}
F.~Bach.
\newblock Structured sparsity-inducing norms through submodular functions.
\newblock In \emph{NIPS}, 2010.

\bibitem[Bach(2013)]{Bach13}
Bach, Francis
\newblock Learning with Submodular Functions: A Convex Optimization Perspective
\newblock \emph{arXiv preprint arXiv:1111.6453}

\bibitem[Bach et~al.(2011)Bach, Jenatton, Mairal, and Obozinski]{bach11}
F.~Bach, R.~Jenatton, J.~Mairal, and G.~Obozinski.
\newblock Convex optimization with sparsity-inducing norms.
\newblock In S.~Sra, S.~Nowozin, and S.~J. Wright, editors, \emph{Optimization
  for Machine Learning}. MIT Press, 2011.

\bibitem[Barbero et~al.(2008)Barbero, L{\'o}pez, and Dorronsoro]{HAIS07}
{\'A}.~Barbero, J.~L{\'o}pez, and J.~R. Dorronsoro.
\newblock Finding {O}ptimal {M}odel {P}arameters by {D}iscrete {G}rid {S}earch.
\newblock In \emph{Advances in Soft Computing: Innovations in Hybrid
  Intelligent Systems 44}, pages 120--127. Springer, 2008.

\bibitem[Barbero and Sra(2011)]{Barbero11}
Barbero, A., Sra, S. 
\newblock Fast Newton-type methods for total variation regularization.
\newblock \emph{In Proceedings of the 28th International Conference on Machine Learning (ICML-11) (pp. 313-320).}

\bibitem[Barbero et~al.(2009)Barbero, L{\'o}pez, and
  Dorronsoro]{NeurocomputingRejilla}
{\'A}.~Barbero, J.~L{\'o}pez, and J.~R. Dorronsoro.
\newblock Finding {O}ptimal {M}odel {P}arameters by {D}eterministic and
  {A}nnealed {F}ocused {G}rid {S}earch.
\newblock \emph{Neurocomputing}, 72\penalty0 (13-15):\penalty0 2824--2832,
  2009.
\newblock ISSN 0925-2312.
\newblock \doi{DOI: 10.1016/j.neucom.2008.09.024}.

\bibitem[Barlow(1972)]{Barlow}
Barlow, R. E., Bartholomew, D. J., Bremner, J. M., Brunk, H. D.
\newblock Statistical inference under order restrictions: The theory and application of isotonic regression
\newblock \emph{New York: Wiley, 1972}

\bibitem[Bauschke and Combettes(2011)]{bauCom}
H.~H. Bauschke and P.~L. Combettes.
\newblock \emph{Convex analysis and monotone operator theory in {H}ilbert
  spaces}.
\newblock CMS Books in Mathematics. Springer, 2011.

\bibitem[Bauschke(2004)]{Bauschke04}
Heinz H. Bauschke, Patrick L. Combettes, D. Russell Luke
\newblock Finding best approximation pairs relative to two closed convex sets in Hilbert spaces
\newblock \emph{Journal of Approximation Theory 127 (2004) 178–192}


\bibitem[Beck and Teboulle(2009)]{fista}
A.~Beck and M.~Teboulle.
\newblock {A Fast Iterative Shrinkage-Thresholding Algorithm for Linear Inverse
  Problems}.
\newblock \emph{SIAM Journal of Imgaging Sciences}, 2\penalty0 (1):\penalty0
  183--202, 2009.

\bibitem[Bertsekas(1982)]{ProjNewton}
D.~P. Bertsekas.
\newblock Projected newton methods for optimization problems with simple
  constraints.
\newblock \emph{SIAM Journal on Control and Optimization}, 20\penalty0 (2),
  March 1982.

\bibitem[Bertsekas(1999)]{Bertsekas}
D.~P. Bertsekas.
\newblock \emph{Nonlinear Programming}.
\newblock Athena Scientific, 2nd edition, September 1999.

\bibitem[Bioucas-Dias and Figueiredo(2007)]{TwIST}
J.~M. Bioucas-Dias and M.~A.~T. Figueiredo.
\newblock A new twist: Two-step iterative shrinkage/thresholding algorithms for
  image restoration.
\newblock \emph{IEEE Transactions on Image Processing}, 16\penalty0
  (12):\penalty0 2992--3004, December 2007.

\bibitem[Bioucas-Dias et~al.(2006)Bioucas-Dias, Figueiredo, and
  Oliveira]{BioucasTV06}
J.~M. Bioucas-Dias, M.~A.~T. Figueiredo, and J.~P. Oliveira.
\newblock Total variation-based image deconvolution: A
  majorization-minimization approach.
\newblock In \emph{ICASSP Proceedings}, 2006.

\bibitem[BM3D(2013)]{BM3Dvideos}
BM3D.
\newblock Bm3d software and test sequences, 2013.
\newblock URL \url{http://www.cs.tut.fi/~foi/GCF-BM3D/}.

\bibitem[Byrd et~al.(1994)Byrd, Lu, Nocedal, and Zhu]{L-BFGS-B}
R.~H. Byrd, P.~Lu, J.~Nocedal, and C.~Zhu.
\newblock A limited memory algorithm for bound constrained optimization.
\newblock Technical report, Northwestern University, 1994.

\bibitem[Candès and Tao(2004)]{candesTao04}
E.~J. Candès and T.~Tao.
\newblock Near-optimal signal recovery from random projections: universal
  encoding strategies.
\newblock \emph{IEEE Trans. Info. Theory}, 52:\penalty0 5406--5425, 2004.

\bibitem[Chambolle and Darbon(2009)]{chaDar09}
A.~Chambolle and J.~Darbon.
\newblock On total variation minimization and surface evolution using
  parametric maximum flows.
\newblock \emph{International Journal of Computer Vision}, 84\penalty0 (3),
  2009.

\bibitem[Chambolle and Pock(2011)]{chambollePock}
A.~Chambolle and T.~Pock.
\newblock A first-order primal-dual algorithm for convex problems with
  applications to imaging.
\newblock \emph{Journal of Mathematical Imaging and Vision}, 40\penalty0
  (1):\penalty0 120--145, 2011.

\bibitem[Chambolle and Pock(2014)]{ChambolleErgodic14}
Chambolle, A., Pock, T.
\newblock On the ergodic convergence rates of a first-order primal-dual algorithm
\newblock \emph{Mathematical Programming. September 2016, Volume 159, Issue 1, pp 253–287}
  
\bibitem[Chandrasekaran et~al.(2012)Chandrasekaran, Recht, Parrilo, and
  Willsky]{atomic}
V.~Chandrasekaran, B.~Recht, P.~A. Parrilo, and A.~S. Willsky.
\newblock {The Convex Geometry of Linear Inverse Problems}.
\newblock \emph{Foundations of Computational Mathematics}, 12\penalty0 (6),
  2012.

\bibitem[Choksi et~al.(2010)Choksi, van Gennip, and Oberman]{Choksi2DBarcode}
R.~Choksi, Y.~van Gennip, and A.~Oberman.
\newblock {Anisotropic Total Variation Regularized L1-Approximation and
  Denoising/Deblurring of 2D Bar Codes}.
\newblock Technical report, McGill University, July 2010.

\bibitem[Combettes(2009)]{ParallelDykstra}
P.~L. Combettes.
\newblock Iterative construction of the resolvent of a sum of maximal monotone
  operators.
\newblock \emph{Journal of Convex Analysis}, 16:\penalty0 727--748, 2009.

\bibitem[Combettes and Pesquet(2009)]{Combettes09}
P.~L. Combettes and J.-C. Pesquet.
\newblock Proximal splitting methods in signal processing.
\newblock \emph{arXiv:0912.3522}, 2009.

\bibitem[Condat(2012)]{fastTV}
L.~Condat.
\newblock A direct algorithm for 1d total variation denoising.
\newblock Technical report, GREYC laboratory, CNRS-ENSICAEN-Univ. of Caen,
  2012.

\bibitem[Condat(2014)]{condatGenProx}
L.~Condat.
\newblock A generic proximal algorithm for convex optimization - application to
  total variation minimization.
\newblock \emph{IEEE SIGNAL PROC. LETTERS}, 21\penalty0 (8):\penalty0 985--989,
  2014.

\bibitem[Conn et~al.(2000)Conn, Gould, and Toint]{Conn}
A.~R. Conn, N.~I.~M. Gould, and P.~L. Toint.
\newblock \emph{{Trust-Region Methods}}.
\newblock SIAM, 2000.

\bibitem[Dahl et~al.(2010)Dahl, Hansen, Jensen, and Jensen]{DahlTV10}
J.~Dahl, P.~C. Hansen, S.~H. Jensen, and T.~L. Jensen.
\newblock Algorithms and software for total variation image reconstruction via
  first-order methods.
\newblock \emph{Numer Algor}, 53:\penalty0 67--92, 2010.

\bibitem[Davies and Kovac(2001)]{daviesTautString}
P.~L. Davies and A.~Kovac.
\newblock Local extremes, runs, strings and multiresolution.
\newblock \emph{The Annals of Statistics}, 29\penalty0 (1):\penalty0 1--65,
  2001.

\bibitem[Duan and Tai(2012)]{DuanTV}
Y.~Duan and X.-C. Tai.
\newblock Domain decomposition methods with graph cuts algorithms for total
  variation minimization.
\newblock \emph{Adv Comput Math}, 36:\penalty0 175--199, 2012.
\newblock \doi{10.1007/s10444-011-9213-4}.

\bibitem[{Friedman} et~al.(2007){Friedman}, {Hastie}, {H{\"o}fling}, and
  {Tibshirani}]{frHaHoTi07}
J.~{Friedman}, T.~{Hastie}, H.~{H{\"o}fling}, and R.~{Tibshirani}.
\newblock {Pathwise coordinate optimization}.
\newblock \emph{Annals of Applied Statistics}, 1\penalty0 (2):\penalty0
  302--332, Aug. 2007.

\bibitem[Goldfarb and Yin(2009)]{goldfarb2009parametric}
D.~Goldfarb and W.~Yin.
\newblock Parametric maximum flow algorithms for fast total variation
  minimization.
\newblock \emph{SIAM Journal on Scientific Computing}, 31\penalty0
  (5):\penalty0 3712--3743, 2009.

\bibitem[{Goldstein T.}(2009)]{splitbreg}
O.~S. {Goldstein T.}
\newblock {The Split Bregman Method for L1 Regularized Problems}.
\newblock \emph{SIAM Journal on Imaging Sciences}, 2\penalty0 (2):\penalty0
  323--343, 2009.

\bibitem[Golub et~al.(1999)]{Golub99}
T.~R. Golub et~al.
\newblock Molecular classification of cancer.
\newblock \emph{Science}, 286\penalty0 (5439):\penalty0 531--537, October 1999.

\bibitem[Grasmair(2007)]{grasmairTV07}
M.~Grasmair.
\newblock The equivalence of the taut string algorithm and bv-regularization.
\newblock \emph{Journal of Mathematical Imaging and Vision}, 27\penalty0
  (1):\penalty0 59--66, 2007.
\newblock ISSN 0924-9907.
\newblock \doi{10.1007/s10851-006-9796-4}.
\newblock URL \url{http://dx.doi.org/10.1007/s10851-006-9796-4}.

\bibitem[Harchaoui and L{\'e}vy-Leduc(2010)]{harLev10}
Z.~Harchaoui and C.~L{\'e}vy-Leduc.
\newblock {Multiple Change-Point Estimation With a Total Variation Penalty}.
\newblock \emph{Journal of the American Statistical Association}, 105\penalty0
  (492):\penalty0 1480--1493, 2010.

\bibitem[Hua et~al.(2009)Hua, Tembe, and Dougherty]{HuaFS09}
J.~Hua, W.~D. Tembe, and E.~R. Dougherty.
\newblock Performance of feature-selection methods in the classification of
  high-dimension data.
\newblock \emph{Pattern Recognition}, 42:\penalty0 409--424, 2009.

\bibitem[Ito and Kunisch(1999)]{Ito1999}
K.~Ito and K.~Kunisch.
\newblock An active set strategy based on the augmented lagrangian formulation
  for image restoration.
\newblock \emph{ESAIM: Mathematical Modelling and Numerical Analysis},
  33\penalty0 (1):\penalty0 1--21, 1999.
\newblock URL \url{http://eudml.org/doc/193911}.

\bibitem[Jaggi(2013)]{JaggiFW}
M.~Jaggi.
\newblock Revisiting frank-wolfe: Projection-free sparse convex optimization.
\newblock In \emph{Proceedings of the 30th International Conference on Machine
  Learning,}, 2013.

\bibitem[Jegelka et~al.(2013)Jegelka, Bach, and Sra]{jegBac13}
S.~Jegelka, F.~Bach, and S.~Sra.
\newblock Reflection methods for user-friendly submodular optimization.
\newblock \emph{Advances in Neural Information Processing Systems} 2013: 1313--1321.

\bibitem[Jegou et~al(2008)]{INRIAholidays}
Jegou, H., Douze, M., Schmid, C.
\newblock Hamming Embedding and Weak geometry consistency for large scale image search
\newblock \emph{Proceedings of the 10th European conference on Computer vision, October, 2008}
\newblock \url{http://lear.inrialpes.fr/~jegou/data.php#holidays}

\bibitem[Johnson(2013)]{dpTV}
N.~A. Johnson.
\newblock A dynamic programming algorithm for the fused {L}asso and
  $l_0$-segmentation.
\newblock \emph{J. Computational and Graphical Statistics}, 2013.

\bibitem[Kim et~al.(2010)Kim, Sra, and Dhillon]{KimICML10}
D.~Kim, S.~Sra, and I.~Dhillon.
\newblock A scalable trust-region algorithm with application to mixed-norm
  regression.
\newblock In \emph{International Conference on Machine Learning}, 2010.

\bibitem[Kim et~al.(2009)Kim, Koh, Boyd, and Gorinevsky]{boyd.kim}
S.~Kim, K.~Koh, S.~Boyd, and D.~Gorinevsky.
\newblock $\ell_1$ trend filtering.
\newblock \emph{SIAM Review}, 51\penalty0 (2):\penalty0 339--360, 2009.
\newblock \doi{10.1137/070690274}.

\bibitem[Kiwiel(2008)]{kiwiel}
K.~C. Kiwiel.
\newblock Variable fixing algorithms for the continuous quadratic knapsack
  problem.
\newblock \emph{J. Optim. Theory Appl.}, 136:\penalty0 445--458, 2008.

\bibitem[Knuth(1997)]{Knuth97}
Knuth, Donald E.
\newblock The art of computer programming, volume 1: fundamental algorithms. 
\newblock \emph{CA, USA: Addison Wesley Longman Publishing Co., Inc}

\bibitem[Kolar et~al.(2010)Kolar, Song, Ahmed, and Xing]{kolar}
M.~Kolar, L.~Song, A.~Ahmed, and E.~Xing.
\newblock Estimaging time-varying networks.
\newblock \emph{The Annals of Applied Statistics}, 4\penalty0 (1):\penalty0
  94--123, 2010.

\bibitem[Kolmogorov et~al(2015)]{Kolmogorov16}
Kolmogorov, V., Pock, T., Rolinek, M. 
\newblock Total variation on a tree
\newblock \emph{SIAM J. Imaging Sci., 9(2), 605–636.}
  
\bibitem[Krishnan and Fergus(2009)]{Krishnan09}
D.~Krishnan and R.~Fergus.
\newblock Fast image deconvolution using hyper-laplacian priors.
\newblock In \emph{Advances in Neural Information Processing Systems}, 2009.

\bibitem[Kumar et~al(2015)Kumar, KS and Barbero, Alvaro and Jegelka, Stefanie and Sra, Suvrit and Bach, Francis]{Kumar15}
Kumar, K.S., Barbero, A., Jegelka, S., Sra, S., and Bach, F.
\newblock Convex optimization for parallel energy minimization.
\newblock \emph{arXiv preprint arXiv:1503.01563.}

\bibitem[Land and Friedman(1997)]{land}
S.~R. Land and J.~H. Friedman.
\newblock Variable fusion: A new adaptive signal regression method.
\newblock Technical Report 656, Department of Statistics, Carnegie Mellon
  University Pittsburgh, 1997.

\bibitem[Li and Santosa(1996)]{LiL1TV}
Y.~Li and F.~Santosa.
\newblock A computational algorithm for minimizing total variation in image
  restoration.
\newblock \emph{IEEE Transactions on Image Processing}, 5\penalty0
  (6):\penalty0 987--995, 1996.
\newblock URL \url{http://dblp.uni-trier.de/db/journals/tip/tip5.html#LiS96}.

\bibitem[Lin and Mor{\'e}(1999)]{TRON}
C.-J. Lin and J.~J. Mor{\'e}.
\newblock Newton's method for large bound-constrained optimization problems.
\newblock \emph{SIAM Journal on Optimization}, 9\penalty0 (4):\penalty0
  1100--1127, 1999.

\bibitem[Liu and Zhang(2009)]{liu09b}
H.~Liu and J.~Zhang.
\newblock {Estimation Consistency of the Group Lasso and its Applications}.
\newblock In \emph{Int. Conf. Mach. Learning (ICML)}, 2009.

\bibitem[Liu and Ye(2009)]{liu09}
J.~Liu and J.~Ye.
\newblock {Efficient Euclidean projections in linear time}.
\newblock In \emph{ICML}, Jun. 2009.

\bibitem[Liu et~al.(2009)Liu, Ji, and Ye]{slep}
J.~Liu, S.~Ji, and J.~Ye.
\newblock \emph{{SLEP: Sparse Learning with Efficient Projections}}.
\newblock Arizona State University, 2009.
\newblock {\it http://www.public.asu.edu/~jye02/Software/SLEP}.

\bibitem[Liu et~al.(2010)Liu, Yuan, and Ye]{liuYe10}
J.~Liu, L.~Yuan, and J.~Ye.
\newblock An efficient algorithm for a class of fused lasso problems.
\newblock In \emph{ACM SIGKDD Conference on Knowledge Discovery and Data
  Mining}, 2010.

\bibitem[Mairal et~al.(2010)Mairal, Jenatton, Obozinski, and Bach]{Mairal10}
J.~Mairal, R.~Jenatton, G.~Obozinski, and F.~Bach.
\newblock {Network Flow Algorithms for Structured Sparsity}.
\newblock In \emph{NIPS}, 2010.
\newblock To appear.

\bibitem[Martinet(1970)]{martinet70}
B.~Martinet.
\newblock {Régularisation d'inéquations variationnelles par approximations
  successives}.
\newblock \emph{Modélisation Mathématique et Analyse Numérique}, 4\penalty0
  (R3):\penalty0 154--158, 1970.

\bibitem[Meier et~al.(2008)Meier, {van de Geer}, and B{\"u}hlmann]{Meier08}
L.~Meier, S.~{van de Geer}, and P.~B{\"u}hlmann.
\newblock The group lasso for logistic regression.
\newblock \emph{J. R. Statist. Soc.}, 70:\penalty0 53--71, 2008.

\bibitem[Mor{\'e} and Sorensen(1983)]{More83}
J.~J. Mor{\'e} and D.~C. Sorensen.
\newblock Computing a trust region step.
\newblock \emph{SIAM Journal of Scientific Computing}, 4\penalty0 (3),
  September 1983.

\bibitem[Moreau(1962)]{moreau62}
J.~J. Moreau.
\newblock Fonctions convexes duales et points proximaux dans un espace
  hilbertien.
\newblock \emph{C. R. Acad. Sci. Paris S{\'e}r. A Math.}, 255:\penalty0
  2897--2899, 1962.

\bibitem[Nesterov(2007)]{nest07}
Y.~Nesterov.
\newblock Gradient methods for minimizing composite objective function.
\newblock Technical Report~76, Catholic University of Louvain, CORE, 2007.

\bibitem[Nocedal and Wright(2000)]{NumOptNocedal}
J.~Nocedal and S.~J. Wright.
\newblock \emph{Numerical Optimization}.
\newblock Springer Verlag, 2000.

\bibitem[Parikh et~al.(2014)Parikh, Boyd, et~al.]{parikh2014proximal}
N.~Parikh, S.~Boyd, et~al.
\newblock Proximal algorithms.
\newblock \emph{Foundations and Trends{\textregistered} in Optimization},
  1\penalty0 (3):\penalty0 127--239, 2014.

\bibitem[Pierra(1984)]{pierra84}
G.~Pierra.
\newblock Decomposition through formalization in a product space.
\newblock \emph{Mathematical Programming}, 28\penalty0 (1):\penalty0 96--115,
  1984.

\bibitem[Pontow and Scherzer(2009)]{pontow09}
C.~Pontow and O.~Scherzer.
\newblock A derivative free approach for total variation regularization.
\newblock \emph{arXiv:0911.1293}, 2009.
\newblock URL \url{http://arxiv.org/abs/0911.1293}.

\bibitem[Ramdas and Tibshirani(2014)]{ramdas}
A.~Ramdas and R.~J. Tibshirani.
\newblock Fast and flexible admm algorithms for trend filtering.
\newblock \emph{arXiv:1406.2082}, 2014.

\bibitem[Rapaport and Vert(2008)]{cghsvm}
F.~Rapaport and E.~B. J.-P. Vert.
\newblock {Classification of arrayCGH data using fused SVM}.
\newblock \emph{Bioinformatics}, 24\penalty0 (13):\penalty0 i375--i382, 2008.

\bibitem[Rinaldo(2009)]{rinaldo}
A.~Rinaldo.
\newblock Properties and refinements of the fused lasso.
\newblock \emph{Annals of Statistics}, 37\penalty0 (5B):\penalty0 2922--2952,
  2009.

\bibitem[Rockafellar(1976)]{rock76}
R.~T. Rockafellar.
\newblock Monotone operators and hte proximal point algorithm.
\newblock \emph{SIAM J. Control and Opt.}, 14\penalty0 (5):\penalty0 877--898,
  1976.

\bibitem[Rogers et~al.(2005)Rogers, Girolami, Campbell, and
  Breitling]{RogersOvarian05}
S.~Rogers, M.~Girolami, C.~Campbell, and R.~Breitling.
\newblock The latent process decomposition of cdna microarray data sets.
\newblock \emph{IEEE/ACM Trans. Comp. Bio. and Bioinformatics}, 2\penalty0 (2),
  April-June 2005.

\bibitem[Rudin et~al.(1992)Rudin, Osher, and Fatemi]{RudinTV92}
L.~I. Rudin, S.~Osher, and E.~Fatemi.
\newblock Nonlinear total variation based noise removal algorithms.
\newblock \emph{Physica D}, 60:\penalty0 259--268, 1992.

\bibitem[Russakovsky et~al(2015)]{Imagenet}
Russakovsky, O., Deng, J., Su, H., Krause, J., Satheesh, S., Ma, S., Huang, Z., Karpathy A., Khosla, A., Bernstein, M., Berg, A.C., Fei-Fei, L.
\newblock ImageNet Large Scale Visual Recognition Challenge
\newblock \emph{International Journal of Computer Vision (IJCV), Year 2015, Volume 115, Number 3, pages 211-252}
\newblock \url{http://image-net.org/challenges/LSVRC/2010/download-public}

\bibitem[Salzo and Villa(2012)]{salzo}
S.~Salzo and S.~Villa.
\newblock Inexact and accelerated proximal point algorithms.
\newblock \emph{J. Convex Analysis}, 19\penalty0 (4), 2012.

\bibitem[Schmidt et~al.(2011)Schmidt, Roux, and Bach]{schmidt11}
M.~Schmidt, N.~L. Roux, and F.~Bach.
\newblock {Convergence Rates of Inexact Proximal-Gradient Methods for Convex
  Optimization}.
\newblock In \emph{Advances in Neural Information Processing Systems (NIPS)},
  2011.

\bibitem[Sra(2012)]{nocops}
S.~Sra.
\newblock Scalable nonconvex inexact proximal splitting.
\newblock In \emph{Advances in Neural Information Processing Systems}, 2012.

\bibitem[Sra et~al.(2011)Sra, Nowozin, and Wright]{sraBook}
S.~Sra, S.~Nowozin, and S.~Wright, editors.
\newblock \emph{Optimization for machine learning}.
\newblock MIT Press, 2011.

\bibitem[Steidl et~al.(2005)Steidl, Didas, and Neumann]{SteidlTautString}
G.~Steidl, S.~Didas, and J.~Neumann.
\newblock Relations between higher order tv regularization and support vector
  regression.
\newblock In \emph{Scale-Space}, pages 515--527, 2005.

\bibitem[Stransky et~al.(2006)]{Stransky06}
N.~Stransky et~al.
\newblock Regional copy number-independent deregulation of transcription in
  cancer.
\newblock \emph{Nature Genetics}, 38\penalty0 (12):\penalty0 1386--1396,
  December 2006.

\bibitem[Tibshirani(1996)]{LASSO}
R.~Tibshirani.
\newblock Regression shrinkage and selection via the lasso.
\newblock \emph{J. R. Statist. Soc.}, 58\penalty0 (1):\penalty0 267--288, 1996.

\bibitem[Tibshirani and Wang(2008)]{cgh}
R.~Tibshirani and P.~Wang.
\newblock {Spatial smoothing and hot spot detection for CGH data using the
  fused lasso}.
\newblock \emph{Biostatistics}, 9\penalty0 (1):\penalty0 18--29, 2008.

\bibitem[Tibshirani et~al.(2005)Tibshirani, Saunders, Rosset, Zhu, and
  Knight]{fl}
R.~Tibshirani, M.~Saunders, S.~Rosset, J.~Zhu, and K.~Knight.
\newblock Sparsity and smoothness via the fused lasso.
\newblock \emph{J. Royal Stat. Soc.: Series B}, 67\penalty0 (1):\penalty0
  91--108, 2005.

\bibitem[Tibshirani(2014)]{trendf}
R.~J. Tibshirani.
\newblock Adaptive piecewise polynomial estimation via trend filtering.
\newblock \emph{The Annals of Statistics}, 42\penalty0 (1):\penalty0 285--323,
  02 2014.
\newblock \doi{10.1214/13-AOS1189}.

\bibitem[{U. Alon} et~al.(1999)]{Alon99}
{U. Alon} et~al.
\newblock Broad patterns of gene expression revealed by clustering analysis of
  tumor and normal colon tissues probed by oligonucleotide arrays.
\newblock \emph{Proc. Natl. Acad. Sci. USA}, 96:\penalty0 6745--6750, June
  1999.

\bibitem[Vert and Bleakley(2010)]{vert}
J.-P. Vert and K.~Bleakley.
\newblock Fast detection of multiple change-points shared by many signals using
  group {LARS}.
\newblock In \emph{Advances in Neural Information Processing Systems}, 2010.

\bibitem[Vogel and Oman(1996)]{vogel}
C.~R. Vogel and M.~E. Oman.
\newblock Iterative methods for total variation denoising.
\newblock \emph{SIAM Journal on Scientific Computing}, 17\penalty0
  (1):\penalty0 227--238, 1996.

\bibitem[Wahlberg et~al.(2012)Wahlberg, Boyd, Annergren, and Wang]{BoydTV}
B.~Wahlberg, S.~Boyd, M.~Annergren, and Y.~Wang.
\newblock {An ADMM Algorithm for a Class of Total Variation Regularized
  Estimation Problems}.
\newblock In \emph{Proceedings 16th IFAC Symposium on System Identification},
  volume~16, 2012.
  
\bibitem[Jie Wang et~al.(2014)]{WangTV14}
J. Wang and Q. Li and S. Yang and W. Fan and P. Wonka and J. Ye.
\newblock {A Highly Scalable Parallel Algorithm for Isotropic Total Variation Models}
\newblock In \emph{Proceedings of the 31st International Conference on Machine Learning (ICML-14)}, pages 235-243, 2014.

\bibitem[Wright et~al.(2009)Wright, Nowak, and Figueiredo]{sparsa}
S.~J. Wright, R.~D. Nowak, and M.~A.~T. Figueiredo.
\newblock Sparse reconstruction by separable approximation.
\newblock \emph{IEEE Trans. Sig. Proc.}, 57\penalty0 (7):\penalty0 2479--2493,
  2009.

\bibitem[Wytock et~al.(2014)Wytock, Sra, and Kolter]{wytock}
M.~Wytock, S.~Sra, and J.~Z. Kolter.
\newblock {Fast Newton Methods for the Group Fused Lasso}.
\newblock In \emph{Conference on Uncertainty in Artificial Intelligence}, 2014.

\bibitem[Yang et~al.(2013)Yang, Wang, Fan, Zhang, Wonka, and Ye]{YangTV}
S.~Yang, J.~Wang, W.~Fan, X.~Zhang, P.~Wonka, and J.~Ye.
\newblock {An Efficient ADMM Algorithm for Multidimensional Anisotropic Total
  Variation Regularization Problems}.
\newblock In \emph{ACM Knowledge Discovery and Data Mining (KDD)}, Chicago,
  Illinois, USA, August 2013.

\bibitem[Yu(2013)]{yaoLiang}
Y.~Yu.
\newblock On decomposing the proximal map.
\newblock In \emph{Advances in Neural Information Processing Systems}, 2013.

\bibitem[Yuan and Lin(2006)]{yuan06}
M.~Yuan and Y.~Lin.
\newblock {Model Selection and Estimation in Regression with Grouped
  Variables}.
\newblock \emph{J. R. Statist. Soc. B}, 68\penalty0 (1):\penalty0 49--67, 2006.

\bibitem[Zhu and Chan(2008)]{ZhuTVAlg08}
M.~Zhu and T.~Chan.
\newblock An efficient primal-dual hybrid gradient algorithm for total
  variation image restoration.
  \newblock Technical report, UCLA CAM, 2008.
  
\end{thebibliography}


\appendix

\section{Mathematical background}
\label{app:mathbackground}
We begin by recalling a few basic ideas from convex analysis; we recommend the recent book~\citep{bauCom} for more details.

Let $\Xc \subset \reals^n$ be any set. A function $r : \Xc \to \reals \cup \set{-\infty,+\infty}$ is called \emph{lower semicontinuous} if for every $\vx \in \Xc$ and a sequence $(\vx_k)$ that converges to $\vx$, it holds that 
\begin{equation}
\label{eq.19}
  \vx_k \to \vx \implies r(\vx) \le \lim\inf\nolimits_k r(\vx_k).
\end{equation}
The set of proper lsc convex functions on $\Xc$ is denoted by $\Gamma_0(\Xc)$ (such functions are also called \emph{closed convex functions}). The \emph{indicator function} of a set $C$ is  defined as
\begin{equation}
  \label{eq.3}
  \ind_{C} : \Xc \to [0,\infty] : \vx \mapsto 
  \begin{cases}
    0,& \text{if}\ \vx \in C;\\
    \infty,& \text{if}\ \vx\not\in C,
  \end{cases}
\end{equation}
which is lsc if and only if $C$ is closed. 

The \emph{convex conjugate} of $r$ is given by $r^*(\vz):=\sup_{\vx \in \dom r}\ \ip{\vx}{\vz} - r(\vx)$, and a particularly important example is the Fenchel conjugate of a norm $\norm{\cdot}{}$
\begin{equation}
  \label{eq:7}
  \text{if}\ r=\norm{\cdot}{},\quad\text{then}\ \ r^* = \ind_{\norm{\cdot}{*} \le 1},
\end{equation}
where the norm $\norm{\cdot}{*}$ is dual to $\norm{\cdot}{}$. Let $r$ and $h$ be proper convex functions. The \emph{infimal convolution} of $r$ with $h$ is the convex function given by $(r\infconv h)(\vx) := \inf_{\vy \in \Xc}\bigl(r(\vy)+ h(\vx-\vy))$.
For our purposes, the most important special case is infimal convolution of a convex function with the squared euclidean norm, which yields the \emph{Moreau envelope}~\citep{moreau62}.
\begin{proposition}
  Let $r \in \Gamma_0(\Xc)$ and let $\gamma > 0$. The \emph{Moreau envelope} of $r$ indexed by $\gamma$ is
  \begin{equation}
    \label{eq.5}
    E_r^\gamma(\cdot):= r\infconv (\tfrac{1}{2\gamma}\enorm{\cdot}^2).
  \end{equation}
  The Moreau envelope~\eqref{eq.5} is convex, real-valued, and continuous.
\end{proposition}
\begin{proof}
  See e.g.~\citep[Prop.~12.15]{bauCom}. 
\end{proof}

\noindent Using the Moreau envelope~\eqref{eq.5}, we now formally introduce prox operators.
\begin{defn}[Prox operator]
  Let $r \in \Gamma_0(\Xc)$, and let $\vy \in \Xc$. Then $\prox_r \vy$ is the unique point in $\Xc$ that satisfies $E_r^1(\vy) = \min_{\vx \in \Xc}(r(\vx) + \half\enorm{\vx-\vy}^2)$, i.e.,
  \begin{equation}
    \label{eq.6}
    \prox_r(\vy) := \argmin_{\vx \in \Xc} r(\vx) + \half\enorm{\vx-\vy}^2,
  \end{equation}
  and the nonlinear map $\prox_r : \Xc \to \Xc$ is called the \emph{prox operator} of $r$.
\end{defn}

Sometimes the Fenchel conjugate $r^*$ is easier to use than $r$; similarly, sometimes the operator $\prox_{r^*}$ is easier to compute than $\prox_r$. The result below shows the connection.
\begin{proposition}[Moreau decomposition]
  \label{prop.decomp}
  Let $r \in \Gamma_0(\Xc)$, $\gamma > 0$, and $\vy \in \Xc$. Then,
  \begin{equation}
    \label{eq.10}
    \vy = \prox_{\gamma r}\vy + \gamma\prox_{r^*/\gamma}(\gamma^{-1}\vy).
  \end{equation}
\end{proposition}
\begin{proof}
  A brief exercise; see e.g.,~\citep[Thm.~14.3]{bauCom}.
\end{proof}


\noindent This decomposition provides the necessary tools to exploit useful primal--dual relations. For the sake of clarity we also present an additional result regarding a particular primal-dual relation that plays a key role in our algorithms.
\begin{proposition}
  \label{prop.pd}
  Let $f \in \Gamma_0(\Xc)$ and $r \in \Gamma_0(\Zc)$. The problems below form a primal-dual pair.
  \begin{align}
    \label{eq.12}
    \inf_{\vx \in \Xc}\quad &f(\vx) + r(\bm{Bx})\quad \text{s.t.}\ \ \bm{Bx} \in \Zc\\
    \label{eq.13}
    \inf_{\vu \in \Zc}\quad &f^*(-\bm{B}^T\vu) + r^*(\vu).
  \end{align}
\end{proposition}
\begin{proof}
  Introduce an extra variable $\vz = \mb\vx$, 
  so that the 
  dual function is
  \begin{align*}
    g(\vu) &=\inf_{\vx \in \Xc}\ f(\vx) + \vu^T\mb \vx + \inf_{\vz \in \Zc}\ r(\vz) - \vu^T\vz,
  \end{align*}
  which upon rewriting using Fenchel conjugates yields~\eqref{eq.13}. 
\end{proof}
  
Notions on submodular optimization are also required to introduce some of the decomposition techniques for 2D-TV in this paper. For a more thorough read on this topic we recommend the monograph~\citet{Bach13}.

\begin{defn}[Submodular function]
  A set-function $F: 2^V \rightarrow \mathbb{R}$, for $2^V$ the power set of some set $V$, is submodular if and only if it fulfills the diminishing returns property, that is, for $A \subseteq B \subseteq V$ and $k \in V$, $k \notin B$ we have
  \begin{equation*}
     F(A \cup \left\lbrace k \right\rbrace) - F(A) \geq F(B \cup \left\lbrace k \right\rbrace) - F(B) .
  \end{equation*}
\end{defn}
Intuitively, a set-function is submodular if adding a new element to the set results in less value as the set grows in size.

\begin{defn}[Modular function]
  A set-function $F: 2^V \rightarrow \mathbb{R}$, for $2^V$ the power set of some set $V$, $F(\emptyset) = 0$ is modular (and also submodular) if and only if there exists $\vs \in \mathbb{R}^p$ such that $F(A) = \sum_{k \in A} \vs_k$.
\end{defn}
That is, a function is modular if it always assigns the same value for each element added to the set, regardless of the other elements in the set. A common shorthand for modular functions is $s(A) = \sum_{k \in A} \vs_k$.

Submodular functions can be thought as convex functions in the realm of discrete optimization, in the sense that they feature useful properties that allow for efficient optimization. Similarly, modular functions are connected to linear functions. To make such connections explicit we require of the following geometric concepts.
 
\begin{defn}[Base polytope]
  The base polytope $B_F$ of a submodular function $F$ is the polyhedron given by
  \begin{equation*}
   B_F = \left\lbrace y \in \mathbb{R}^n: y(A) \leq F(A) \,\, \forall A \subseteq V, \quad y(V) = F(V) \right\rbrace .
  \end{equation*}
\end{defn}
That is, the base polytope is a polyhedron defined through linear inequality constraints on the values of $F$ for every one of the $n$ elements of the powerset $2^V$, and an equality constraint for the complete set. This results in a combinatorial number of contraints, but fortunately this polytope will not be used directly.

\begin{defn}[Support function]
  The support function $h_A$ for some non-empty closed convex set $A \in \mathbb{R}^n$ is given by
  \begin{equation*}
   h_A(\vx) = \sup \left\lbrace \vx^T \va : \vx \in A \right\rbrace .
  \end{equation*}
\end{defn}
The support function is useful when connected with the following definition.

\begin{defn}[Lovász extension]
  Suppose a set-function $F$ such that $F(\emptyset) = 0$. 
  Its Lovász extension $f: \mathbb{R}^p \rightarrow \mathbb{R}$ is defined through the following mechanism. 
  Take $\vw \in \mathbb{R}^p$ input to $f$, and order its components in decreasing order $\vw_{j_1} \geq \ldots \geq \vw_{j_p}$, then
  \begin{equation*}
   f(\vw) = \sum_{k=1}^p [F(\left\lbrace j_1, \ldots, j_k \right\rbrace) -  F(\left\lbrace j_1, \ldots, j_{k-1} \right\rbrace)].
  \end{equation*}
\end{defn}
Other equivalent definitions are possible: see~\citet{Bach13} for details. The following result links all the definitions so far.
\begin{proposition}
  For $F$ submodular function such that $F(\emptyset) = 0$ we have
  \begin{itemize}
   \item Its Lovász extension $f$ is a convex function.
   \item The support function of its base polytope is equal to its Lovász extension, that is, $h_{B_F}(\vx) = f(\vx)$.
   \item The problem $\min_{S \subseteq V} F(S)$ is dual to $\min_{\vx} f(\vx) + \half \enorm{\vx}^2$, with $S^* = \left\lbrace k | \vx_k^* \geq 0 \right\rbrace$.
  \end{itemize}
\end{proposition}
For proofs on these points we refer to~\citet{Bach13}. The takeaway from them is that any minimization on a submodular function can be cast into a convex optimization problem. Furthermore, for those convex minimization problems whose objective turns out to be the Lovász extension of some other function, we can trace the steps the other way round, obtaining the minimization of a submodular function.

Consider now a composite problem $\min_{S \subseteq V} \sum_j F_j(S)$. The following results hold
\begin{proposition}
 \label{pro:decompsubmodular}
 The problem $\min_{S \subseteq V} \sum_j F_j(S)$ is equivalent to $\min_x \sum_j f_j(x) + \half \enorm{x}^2$, with $S^* = \left\lbrace k | x_k^* \geq 0 \right\rbrace$. Furthermore it is also equivalent to $\min_{y_j \in B_{F_j} \, \forall j} \half \enorm{\sum_j y_j}^2$, with $x^* = -\sum_j y^*_j$.
\end{proposition}
\begin{proof}
  The first equivalence is a direct result of the properties of Lovász extensions~\citep{Bach13}, in particular that for $F, G$ set-functions with Lovász extensions $f, g$, the Lovász extension of $F + G$ is $f + g$. For the second equivalence we have:
  \begin{align*}
   \min_{\vx} \sum_j f_j(\vx) + \half \enorm{\vx}^2 &= \min_{\vx} \sum_j h_{B_{F_j}} + \half \enorm{\vx}^2, \\
   &= \min_{\vx} \sum_j \max_{y_j \in B_{F_j}} \vy_j^T \vx + \half \enorm{\vx}^2, \\
   &= \max_{\vy_j \in B_{F_j} \, \forall j} \min_{\vx} \left( \sum_j \vy_j^T \vx + \half \enorm{\vx}^2 \right), \\
   &= \min_{\vy_j \in B_{F_j} \, \forall j} \half \enorm{\sum_j \vy_j}^2,
  \end{align*}
  and the dual relationship $\vx^* = -\sum_j \vy^*_j$ comes from solving the inner $\min_{\vx}$ problem for $\vx$.
\end{proof}
Therefore any decomposable submodular minimization, or sum of Lovász extensions plus $\ell_2$ term, can be casted into a geometric problem in terms of the base polytopes. For two functions the resultant problem is of special interest if rewritten as
\begin{align*}
 \min_{\substack{\vy_1 \in B_{F_1} \\ \vy_2 \in B_{F_2}}} \half \enorm{\vy_1 + \vy_2}^2 \quad = \quad
 \min_{\substack{\vy_1 \in B_{F_1} \\ -\vy_2 \in -B_{F_2}}} \half \enorm{\vy_1 - (-\vy_2)}^2 \quad = \quad
 \min_{\substack{\va \in B_{F_1} \\ \vb \in -B_{F_2}}} \half \enorm{\va -\vb}^2
\end{align*}
with $\va = \vy_1$, $\vb = -\vy_2$, as this results in the classic geometric problem of finding the closest points between two convex sets. Many algorithms have been proposed to tackle problems in this form, most of them making use of alternating projection operations onto the two sets. Thus, a legitimate concern is how easy it is to compute such projections for $B_{F_1}$ and $-B_{F_2}$.
\begin{proposition}
 \label{pro:projpolytopes}
 Given a submodular function $F$ and its base polytope $B_F$, the projections $\Pi_{B_F}(\vz)$ and $\Pi_{-B_F}(\vz)$ of a point $\vz$ onto $B_F$ or its negated counterpart can be computed as
 \begin{align*}
  \Pi_{B_F}(\vz) &= \vz - \prox_f(\vz) , \\
  \Pi_{-B_F}(\vz) &= \vz + \prox_f(-\vz) ,
 \end{align*}
 with $\prox$ proximity operator of a function, $f$ the Lovász extension of $F$.
\end{proposition}
\begin{proof}
 We start with the proximity of $f$ and work our way to a relationship with the projection operator,
 \begin{align*}
  \prox_f(\vz) &\equiv \min_{\vx} f(\vx) + \half \enorm{\vx - \vz}^2, \\
   &= \max_{\vy \in B_F} \min_{\vx} \vy^T \vx + \half \enorm{\vx - \vz}^2, \\
   &= \max_{\vy \in B_F} \vy^T (\vz - \vy) + \half \enorm{(\vz - \vy) - \vz}^2, \\
   &= \min_{\vy \in B_F} \half \enorm{\vy}^2 - \vy^T \vz, \\
   &\equiv \min_{\vy \in B_F} \half \enorm{\vy - \vz}^2 = \Pi_{B_F}(\vz),
 \end{align*}
 where solving the inner minimization problem for $\vx$ gives the primal--dual relationship $\vx^* = \vz - \vy^*$. Using this we can obtain the solution for the projection problem from the proximity problem, as $\Pi_{B_F}(\vz) = \vz - \prox_f(\vz)$. Projection onto the negated base polytope follows from the basic geometric argument $\Pi_{-B_F}(\vz) = -\Pi_{B_F}(-\vz)$.
\end{proof}

\section{proxTV toolbox}

All the Total--Variation proximity solvers in this paper have been implemented as the {\bf proxTV} toolbox for C++, Matlab and Python, available at \href{https://github.com/albarji/proxTV}{https://github.com/albarji/proxTV}. The toolbox has been designed to be used out of the box in a user friendly way; for instance, the top--level Matlab function \texttt{TV} solves Total--Variation proximity for a given signal under a variety of settings. For instance
\begin{lstlisting}
>> TV(X,lambda)
\end{lstlisting}
solves $\tvell_1$ proximity for a signal \texttt{X} of any dimension and a regularization value \texttt{lambda}. The weighted version of this problem  is also seamlessly tackled by just providing a vector of weights of the appropriate length as the \texttt{lambda} parameter.

If a third parameter \texttt{p} is provided as
\begin{lstlisting}
>> TV(X,lambda,p)
\end{lstlisting}
the general $\tvell_p$ proximity problem is addressed, whereupon an adequate solver is chosen by the library.

More advanced uses of the library are possible, allowing to specify which norm \texttt{p} and regularizer \texttt{lambda} values to use for each dimension of the signal, and even applying combinations of several different $\tvell_p$ regularizers along the same dimension. Please refer to the documentation within the toolbox for further information.

\section{Proof on the equality of taut-string problems}
\label{app:tautStringProof}

\begin{theorem}[Equality of taut-string problems]
 \label{the:tautStringEq}
 Given the problems 
 \begin{equation}
  \label{eq:tautStringEq1}
  \min_{\vs} \sum_{i=1}^n \left( \vs_i - \vs_{i-1} \right)^2,\ \text{s.t. } \left| \vs_i - \vr_i \right| \le \vw_i \, \forall i=1,\ldots,n-1 \, , \vs_0 = 0, \vs_n = \vr_n ,
 \end{equation}
 and
 \begin{equation}
  \label{eq:tautStringEq2}
  \min_{\hat \vs} \sum_{i=1}^n \sqrt{1 + \left( \hat \vs_i - \hat \vs_{i-1} \right)^2},\ \text{s.t. } \left| \hat \vs_i - \vr_i \right| \le \vw_i \, \forall i=1,\ldots,n-1 \, , \hat \vs_0 = 0, \hat \vs_n = \vr_n ,
 \end{equation}
 for a non-zero vector $\vw$, both problems share the same minimum $\vs^* = \hat \vs^*$.
\end{theorem}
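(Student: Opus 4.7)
The plan is to show that any strictly convex objective of the form $\sum_{i=1}^n \phi(s_i-s_{i-1})$, subject to the common tube and endpoint constraints of~\eqref{eq:tautStringEq1} and~\eqref{eq:tautStringEq2}, admits a unique minimizer whose characterization is \emph{independent} of $\phi$; applying this to $\phi_1(d)=d^2$ and $\phi_2(d)=\sqrt{1+d^2}$ then forces $\vs^*=\hat\vs^*$. As a first step I would observe that each objective is strictly convex on the convex compact feasible set (bounded since all $w_i>0$, closed by virtue of the equality endpoints), so both problems possess unique minimizers. Furthermore, the tube has positive width so Slater's condition holds and KKT is both necessary and sufficient for optimality in both cases.

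Next I would write the KKT system uniformly. With $d_i := s_i-s_{i-1}$ and multipliers $\mu_i^+,\mu_i^-\ge 0$ attached to $s_i\le r_i+w_i$ and $s_i\ge r_i-w_i$, stationarity reads
\begin{equation*}
  \phi'(d_i)-\phi'(d_{i+1})+\mu_i^+-\mu_i^- = 0,\qquad i=1,\ldots,n-1,
\end{equation*}
together with complementary slackness. Because $\phi'$ is strictly increasing for every strictly convex $\phi$, this system is equivalent to the purely geometric conditions: (i) $d_i=d_{i+1}$ whenever $|s_i-r_i|<w_i$; (ii) $d_i\le d_{i+1}$ at an upper contact $s_i=r_i+w_i$; (iii) $d_i\ge d_{i+1}$ at a lower contact $s_i=r_i-w_i$. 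Together with feasibility and the endpoint fixings, these conditions never reference $\phi$.

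To close the loop I would show that any feasible $\vs$ satisfying (i)--(iii) is optimal for \emph{every} strictly convex $\phi$: given such $\phi$, simply define $\mu_i^+ := \max\{0,\phi'(d_{i+1})-\phi'(d_i)\}$ and $\mu_i^- := \max\{0,\phi'(d_i)-\phi'(d_{i+1})\}$, set to zero at inactive constraints. Strict monotonicity of $\phi'$ guarantees that both multipliers are nonnegative, vanish on the inactive set, and satisfy stationarity, so $(\vs,\mu^\pm)$ is a KKT triple for the chosen $\phi$ and hence minimizes it. Applying this with $\phi=\phi_2$ to the minimizer $\vs^*$ of~\eqref{eq:tautStringEq1} exhibits $\vs^*$ also as a minimizer of~\eqref{eq:tautStringEq2}, and uniqueness gives $\vs^*=\hat\vs^*$. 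The main delicate point will be the equivalence between stationarity and (i)--(iii): degenerate ``touches'' where an active constraint carries a zero multiplier and $d_i=d_{i+1}$ must be handled on the same footing as genuinely inactive indices. This is straightforward once one treats (ii) and (iii) as weak inequalities throughout, but it should not be glossed over in the formal write-up.
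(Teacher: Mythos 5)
Your proof is correct, and while it rests on the same basic mechanism as the paper's---writing the KKT systems of \eqref{eq:tautStringEq1} and \eqref{eq:tautStringEq2} and transferring multipliers between them using monotonicity of the derivative of the objective---it is organized differently and is strictly more general. The paper works only with the specific pair $d^2$ and $\sqrt{1+d^2}$: it takes the multipliers $(\valpha_i,\vbeta_i)$ certifying optimality of $\vs^*$ for \eqref{eq:tautStringEq1} and rescales them, arguing that because $f(x)=x/\sqrt{1+x^2}$ is monotone and contractive one may set $(\hat\valpha_i,\hat\vbeta_i)=(k\valpha_i,k\vbeta_i)$ with $0\le k<1$ (the scaling factor must really be allowed to depend on $i$, a point the paper glosses over), and then it runs the whole argument a second time in reverse using the expansive inverse map. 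You instead interpose a $\phi$-independent geometric characterization of optimality---equal slopes at inactive indices, convex kinks at ceiling contacts, concave kinks at floor contacts---and certify optimality for an \emph{arbitrary} strictly convex $\phi$ by constructing fresh multipliers $\mu_i^{\pm}=\max\bigl\{0,\pm\bigl(\phi'(d_{i+1})-\phi'(d_i)\bigr)\bigr\}$ rather than rescaling old ones. This buys three things: the theorem holds at once for every strictly convex $\phi$ (the paper's statement is the special case $\phi(d)=\sqrt{1+d^2}$); it exposes that contractivity of $f$ is superfluous, since strict monotonicity of $\phi'$ alone preserves multiplier signs; and your appeal to strict convexity and uniqueness of the minimizer of \eqref{eq:tautStringEq2} replaces the paper's entire second pass with $f^{-1}$. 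Two small remarks: since all constraints are affine, KKT necessity holds by the linearity constraint qualification without Slater, so you need not insist on positive tube width for that step---this also quietly covers indices with $w_i=0$, where both constraints are active, complementary slackness is automatic, and your conditions (ii)--(iii) simply impose nothing; and your degenerate-touch caveat is exactly right, since an active constraint carrying a zero multiplier behaves like an inactive index once (ii) and (iii) are read as weak inequalities, which is also how the paper implicitly treats such points.
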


\begin{proof}

The Lagrangian of problem \ref{eq:tautStringEq1} takes the form
\begin{equation*}
 L(\vs,\valpha,\vbeta) = \sum_{i=1}^n \left( \vs_i - \vs_{i-1} \right)^2 + \sum_{i=1}^{n-1} \valpha_i (\vs_i - \vr_i -\vw_i) + \sum_{i=1}^{n-1} \vbeta_i ( -\vw_i - \vs_i + \vr_i )  ,
\end{equation*}
and its Karush-Kuhn-Tucker optimality conditions are given by
\begin{align}
 \label{eq:tautStringEq1KKT1}
 (\vs_{i+1} - \vs_i) - (\vs_i - \vs_{i-1}) &= \valpha_i - \vbeta_i, \\
 \label{eq:tautStringEq1KKT2}
 \left| \vs_i - \vr_i \right| &\le \vw_i, \\
 \label{eq:tautStringEq1KKT3}
 \valpha_i, \vbeta_i &\ge 0, \\
 \label{eq:tautStringEq1KKT4}
 \valpha_i (\vs_i - \vr_i -\vw_i) &= 0, \\
 \label{eq:tautStringEq1KKT5}
 \vbeta_i ( -\vw_i - \vs_i + \vr_i ) &= 0,
\end{align}
$\forall i=1,\ldots,n-1$, and where the first equation comes from the fact that $\frac{\partial L(\vs,\valpha,\vbeta)}{\partial \vs} = 0$ at the minimum.

As the only difference between problems \ref{eq:tautStringEq1} and \ref{eq:tautStringEq2} is in the form of the objective, the KKT conditions for problem \ref{eq:tautStringEq2} take the same form, but for the first one,
\begin{align}
 \label{eq:tautStringEq2KKT1}
 \frac{(\hat \vs_{i+1} - \hat \vs_i)}{\sqrt{1 + (\hat \vs_{i+1} - \hat \vs_i)^2}} - \frac{(\hat \vs_i - \hat \vs_{i-1})}{\sqrt{1 + (\hat \vs_{i} - \hat \vs_{i-1})^2}} &= \hat \valpha_i - \hat \vbeta_i, \\
 \label{eq:tautStringEq2KKT2}
 \left| \hat \vs_i - \vr_i \right| &\le \vw_i, \\
 \label{eq:tautStringEq2KKT3}
 \hat \valpha_i, \hat \vbeta_i &\ge 0, \\
 \label{eq:tautStringEq2KKT4}
 \hat \valpha_i (\hat \vs_i - \vr_i -\vw_i) &= 0, \\
 \label{eq:tautStringEq2KKT5}
 \hat \vbeta_i ( -\vw_i - \hat \vs_i + \vr_i ) &= 0,
\end{align}
$\forall i=1,\ldots,n-1$, and where we use hat notation for the dual coefficients to tell them apart from those of problem \ref{eq:tautStringEq1}.

Suppose $\vs^*$ minimizer to problem \ref{eq:tautStringEq1}, hence fulfilling the conditions \ref{eq:tautStringEq1KKT1}-\ref{eq:tautStringEq1KKT5}. In particular this means that it is feasible to assign values to the dual coefficients $\valpha, \vbeta$ in such a way that the conditions above are met. If we set $\hat \vs = \vs^*$ in the conditions \ref{eq:tautStringEq2KKT1}-\ref{eq:tautStringEq2KKT5} the following observations are of relevance
\begin{itemize}
 \item Condition \ref{eq:tautStringEq2KKT2} becomes the same as condition \ref{eq:tautStringEq1KKT2}, and so it is immediately met.
 \item The operator $f(x) = \frac{x}{\sqrt{1 + x^2}}$ is contractive and monotonous.
 \item The couple $(\valpha_i,\vbeta_i)$ cannot be both non--zero at the same time, since $\valpha_i > 0$ enforces $\vs_i = \vr_i + \vw_i$ and $\vbeta_i > 0$ enforces $\vs_i = \vr_i - \vw_i$, and $\vw_i$ is non--zero.
 \item Hence and because $\valpha_i, \vbeta_i \ge 0$ and condition \ref{eq:tautStringEq1KKT1} holds, when $(\vs_{i+1} - \vs_i) - (\vs_i - \vs_{i-1}) > 0$ then $\valpha_i > 0$, $\vbeta_i = 0$, and when $(\vs_{i+1} - \vs_i) - (\vs_i - \vs_{i-1}) < 0$ then $\valpha_i = 0$, $\vbeta_i > 0$.
 \item $f(\vs_{i+1} - \vs_i) - f(\vs_i - \vs_{i-1})$ has the same sign as $(\vs_{i+1} - \vs_i) - (\vs_i - \vs_{i-1})$, since $f$ is monotonous and as such preserves ordering.
 \item Since $f$ is contractive, condition \ref{eq:tautStringEq2KKT1} can be met by setting $(\hat \valpha_i,\hat \vbeta_i) = (k \valpha_i, k \vbeta_i)$ for some $0 \leq k < 1$. Note that this works because $(\valpha_i,\vbeta_i)$ cannot be both zero at the same time.
 \item Condition \ref{eq:tautStringEq2KKT3} is met for those choices of $\hat \valpha_i, \hat \vbeta_i$, as \ref{eq:tautStringEq1KKT3} was met for $\valpha_i, \vbeta_i$ and $0 \leq k < 1$.
 \item Conditions \ref{eq:tautStringEq2KKT4} and \ref{eq:tautStringEq2KKT5} are also met for those choices of $\hat \valpha_i, \hat \vbeta_i$, as $\hat \valpha_i (\vs_i - \vr_i -\vw_i) = k \valpha_i (\vs_i - \vr_i -\vw_i) = 0$ and $\hat \vbeta_i ( -\vw_i - \vs_i + \vr_i ) = k \vbeta_i ( -\vw_i - \vs_i + \vr_i ) = 0$.
\end{itemize}
Therefore, all of the optimality conditions \ref{eq:tautStringEq2KKT1}-\ref{eq:tautStringEq2KKT5} for problem \ref{eq:tautStringEq2} are met for $\vs^*$ solution of problem \ref{eq:tautStringEq1}, and so a minimum of problem \ref{eq:tautStringEq1} is also a minimum for problem \ref{eq:tautStringEq2}.

The proof can be repeated the other way round by setting $\vs = \hat \vs^*$ optimal for problem \ref{eq:tautStringEq2}, defining the operator $f^{-1}(x) =  \frac{x}{\sqrt{1 - x^2}}$, and observing that this operator is monotonous and expansive, so we can establish $(\valpha_i,\vbeta_i) = (k \hat \valpha_i, k \hat \vbeta_i)$ for some $k \geq 1$ and the optimality conditions \ref{eq:tautStringEq1KKT1}-\ref{eq:tautStringEq1KKT5} for problem \ref{eq:tautStringEq1} are met following a similar reasoning to the one presented above. Thus, a minimum for problem \ref{eq:tautStringEq2} is also a minimum for problem \ref{eq:tautStringEq1}, which joined with the previous result completes the proof.

\end{proof}

\section{Proof on the equivalence of linearized taut-string method}
\label{app:tautStringAlgEq}

\begin{proposition}
 Using affine approximations to the greatest convex minorant and the smallest concave majorant does not change the solution of the taut-string method.
\end{proposition}

\begin{proof}
 Let us note $\cap(f)$ as the smallest concave majorant of some function $f$ taking integer values, $\cup(f)$ as the greatest concave minorant, $\bar a(f)$ as the smallest affine majorant and $\ubar a(f)$ as the greatest affine minorant. By definition we have
 
 \begin{equation*}
  \ubar a(f(i)) \leq \cup(f(i)) \leq f(i) \leq  \cap(f(i)) \leq \bar a(f(i)) \quad \forall i \in \mathbb{Z}
 \end{equation*}
 
 Consider now the nature of the taut-string problem, where a vertically symmetric tube of radius $\lambda_i$ at each section is modelled by following the majorant of the tube bottom ($f-\lambda$) and the minorant of the tube ceiling ($f+\lambda$). We work the inequalities above as:
 
 \begin{align*}
  f(i) - \lambda_i \leq  \cap(f(i) - \lambda_i)  \leq \bar a(f(i) - \lambda_i) \\
  \ubar a(f(i) + \lambda_i) \leq \cup(f(i) + \lambda_i) \leq f(i) + \lambda_i
 \end{align*}
 
 We will show that an overlap of smallest concave majorant / greatest convex minorant takes place iff the same overlap happens when using the affine approximations. We formally define overlap as the setting where for a point $i$ we have $\cup(f_i + \lambda_i) \leq \cap(f_i -\lambda_i)$.
 
 One side of the implication is easy: if $\cup(f(i) + \lambda_i) \leq \cap(f(i) - \lambda)$ for some $i$, then using the relations above we have $\ubar a(f(i) + \lambda_i) \leq \cup(f(i) + \lambda_i) \leq \cap(f(i) - \lambda_i) \leq \bar a(f(i) - \lambda_i)$, and so the affine approximation detects any overlap taking place in the concave/convex counterpart.
 
 The opposite requires the key observation that in the taut-string method both majorant and minorant functions are clamped to the same point of origin: $f(0) = 0$ at the start of the method, or the point where the last segment was fixed after each restart. Let us assume $f(0) = 0$ without loss of generality. Suppose now that an overlap is detected by the affine approximation.
 Because of this affine nature the majorant/minorant slopes are constant, i.e.
 \begin{align*}
  \bar \delta_1 = \bar \delta_2 = \ldots = \bar \delta_n = \bar \delta, \quad 
  \ubar \delta_1 = \ubar \delta_2 = \ldots = \ubar \delta_n = \ubar \delta.
 \end{align*}
 However, if we consider the convex/concave approximations these slopes can increase/decrease as the segment progresses, that is:
 \begin{align*}
  \delta^{\cup}_1 \leq \delta^{\cup}_2 \leq \ldots \leq \delta^{\cup}_n, \quad
  \delta^{\cap}_1 \geq \delta^{\cap}_2 \geq \ldots \geq \delta^{\cap}_n.
 \end{align*}
 Consider now the majorant/minorant values, expressed through the slopes and taking into account the observation above about the starting point.
 \begin{align*}
  \cap(f(i) - \lambda_i) &= \sum_{j=1}^{i} \delta^{\cap}_j, \quad
  \cup(f(i) + \lambda_i) &= \sum_{j=1}^{i} \delta^{\cup}_j, \quad
  \bar a(f(i) - \lambda_i) &= i \bar \delta, \quad
  \ubar a(f(i) + \lambda_i) &= i \ubar \delta.
 \end{align*}
 Since an overlap has been detected in the affine approximation, we have that for some point $i$
 \begin{align*}
  i \ubar \delta = \ubar a(f(i) + \lambda_i) &\leq \bar a(f(i) - \lambda_i) = i \bar \delta, 
 \end{align*}
 so $\ubar \delta \leq \bar \delta$.
 Consider now the values of the affine minorant/majorant at the point immediately after the origin,
 \begin{align*}
  \ubar a(f_1 - \lambda_1) = \ubar \delta, \quad
  \bar a(f_1 + \lambda_1) = \bar \delta.
 \end{align*}
 We will show now that the convex/convex counterpart must take exactly the same values at these points. To do so we take into account the following fact: there must exist points $x$ and $y$, $x, y \leq i$, where
 \begin{align*}
  \ubar a(f_x + \lambda_x) = f_x + \lambda_x = \cup(f_x + \lambda_x), \quad
  \bar a(f_y - \lambda_y) = f_y - \lambda_y = \cap(f_y - \lambda_y),
 \end{align*}
 that is to say, the affine minorant/majorant must touch the tube ceiling/bottom at some point, otherwise we could obtain a greater minorant / smaller majorant by reducing this distance. The equalities to the convex minorant / concave majorant are then obtained by exploiting the inequalities at the beginning of the proof.
 
 By the already presented inequalities $\cup(f_1 + \lambda_1) \geq \ubar a(f_1 + \lambda_1)$, but let us suppose for a moment $\cup(f_1 + \lambda_1) > \ubar a(f_1 + \lambda_1)$. This would imply $\delta_1^{\cup} > \ubar \delta$. We then would have that at the touching point $x$
 \begin{align*}
  f_x + \lambda_x = \ubar a(f_x + \lambda_x) = x \ubar \delta < x \delta_1^{\cup} \leq \cup(f_1 + \lambda_1),
 \end{align*}
 as the slopes in a convex minorant must be monotonically increasing. However, such function would not be a valid convex minorant, as it would grow over $f+\lambda$. Therefore $\cup(f_1 + \lambda_1) = \ubar a(f_1 + \lambda_1)$ must hold. Using a symmetric argument, $\cap(f_1 - \lambda_1) = \bar a(f_1 - \lambda_1)$ can also be shown to hold. Joining this with the previous facts we have that
 \begin{align*}
  \cup(f_1 + \lambda_1) = \ubar a(f_1 + \lambda_1) = \ubar \delta \leq \bar \delta = \bar a(f_1 - \lambda_1) = \cap(f_1 - \lambda_1),
 \end{align*}
 and therefore the overlap detected by the affine approximation is detected through its convex/concave version as well through $\cup(f_1 + \lambda_1) \leq \cap(f_1 - \lambda_1)$.

\end{proof}

\section{Projected-newton for weighted $\tvell_1^{\oned}$}
\label{app:projNewton}

In this appendix we present details of a projected-Newton (PN) approach to solving the weighted-TV problem~\eqref{eq.14}. Although taut-string approaches are empirically superior to this PN approach, the details of this derivation prove to be useful when developing subroutines for handling $\ell_p$-norm TV prox-operators, but perhaps their greatest use lies in presenting a general method that could be applied to other problems that have structures similar to TV, e.g., group total-variation~\citep{alaiz2013group,wytock} and $\ell_1$-trend filtering~\citep{boyd.kim,trendf}.

The weighted-TV dual problem~\eqref{eq.15} is a bound-constrained QP, so it could be solved using a variety of methods such as TRON~\citep{TRON}, L-BFGS-B~\citep{L-BFGS-B}, or projected-Newton (PN)~\citep{ProjNewton}. Obviously, these methods will be inefficient if invoked off-the-shelf; exploitation of problem structure is a must for solving~\eqref{eq.15} efficiently. PN lends itself well to such structure exploitation; we describe the details below. 

PN runs iteratively in three key steps: first it identifies a special subset of \emph{active variables} and uses these to compute a \emph{reduced} Hessian. Then, it uses this Hessian to scale the gradient and move in the direction opposite to it, damping with a  stepsize, if needed. Finally, the next iterate is obtained by projecting onto the constraints, and the cycle repeats. PN can be regarded as an extension of the gradient-projection method (GP,~\cite{Bertsekas}), where the components of the gradient that make the updating direction infeasible are removed; in PN both the gradient and the Hessian are \emph{reduced} to guarantee this feasibility.

At each iteration PN selects the active variables
\begin{equation}
  \label{eq.16}
  I := \set{i \mid (u_i = -w_i \;\text{and}\; [\nabla\phi(\vu)]_i > \epsilon)\quad\text{or}\quad (u_i = w_i \;\text{and}\; [\nabla\phi(\vu)]_i < -\epsilon)},
\end{equation}
where $\epsilon \ge 0$ is small scalar. This corresponds to the set of variables at a bound, and for which the gradient points inside the feasible region; that is, for these variables to further improve the objective function we would have to step out of bounds. It is thus clear that these variables are of no use for this iteration, so we define the complementary set $\bar{I} := \set{1\ldots n} \backslash I$ of indices not in $I$, which are the variables we are interested in updating. From the Hessian $\mH=\nabla^2\phi(u)$ we extract the \emph{reduced Hessian} $\mH_{\bar{I}}$ by selecting rows and columns indexed by $\bar{I}$, and in a similar way the \emph{reduce gradient} $[\nabla\phi(\vu)]_{\bar{I}}$. Using these we perform a Newton--like ``reduced'' update in the form 
\begin{equation}
  \label{eq:6}
  \vu_{\bar{I}} \gets P(\vu_{\bar{I}} - \alpha\mH_{\bar{I}}^{-1}[\nabla\phi(\vu)]_{\bar{I}}), 
\end{equation}
where $\alpha$ is a stepsize, and $P$ denotes projection onto the constraints, which for box--constraints reduces to simple element--wise projection. Note that only the variables in the set $\bar{I}$ are updated in this iterate, leaving the rest unchanged. While such update requires computing the inverse of the reduced Hessian $\mH_{\bar{I}}$, which in the general case can amount to computational costs in the $O(n^3)$ order, we will see now how exploiting the structure of the problem allows us to perform all the steps above efficiently.

First, observe that for~(\ref{eq.15}) the Hessian is  
\begin{equation*}
 \mH  = \md\md^T = \left( \begin{array}{ccccc}
                  2  & -1 &        &        &     \\
                  -1 &  2 & -1     &        &     \\
                     & -1 &  2     & \ddots &     \\
                     &    & \ddots & \ddots & -1  \\
                     &    &        &   -1   &  2
                 \end{array} \right) \in \reals^{(n-1) \times (n-1)} .
\end{equation*}

Next, observe that whatever the active set $I$, the corresponding reduced Hessian $\mH_{\bar{I}}$ remains symmetric tridiagonal. This observation is crucial because then we can quickly compute the updating direction $\vd_{\bar{I}} = \mH_{\bar{I}}^{-1}[\nabla\phi(\vu)]_{\bar{I}}$, which can be done by solving the linear system $\mH_{\bar{I}}\vd_{\bar{I}} = [\nabla \phi(\vu^t)]_{\bar{I}}$ as follows:

\begin{enumerate}
  \setlength{\itemsep}{0pt}
 \item Compute the Cholesky decomposition $\mH_{\bar{I}} = \mr^T\mr$.
 \item Solve the linear system $\mr^T \vv = [\nabla\phi(\vu)]_{\bar{I}}$ to obtain $\vv$.
 \item Solve the linear system $\mr \vd_{\bar{I}} = \vv$ to obtain $\vd_{\bar{I}}$.
\end{enumerate}

Because the reduced Hessian is also tridiagonal, its Cholesky decomposition can be computed in \emph{linear time} to yield a bidiagonal matrix $\mr$, which in turn allows to solve the subsequent linear systems also in linear time. Extremely efficient routines to perform all these tasks are available in the LAPACK libraries~\citep{LAPACK}.

The next crucial ingredient is efficient selection of the stepsize $\alpha$. The original PN algorithm \citet{ProjNewton} recommends Armijo-search along projection arc. However, for our problem this search is inordinately expensive. So we resort to a backtracking strategy using quadratic interpolation~\citep{NumOptNocedal}, which works admirably well. This strategy is as follows: start with an initial stepsize $\alpha_0=1$. If the current stepsize $\alpha_k$ does not provide sufficient decrease in $\phi$, build a quadratic model using $\phi(\vu)$, $\phi(\vu-\alpha_k\vd)$, and $\partial_{\alpha_k} \phi(\vu)$. Then, the stepsize $\alpha_{k+1}$ is set to the value that minimizes this quadratic model. In the event that at some point of the procedure the new $\alpha_{k+1}$ is larger than or too similar to $\alpha_k$, its value is halved. In this fashion, quadratic approximations of $\phi$ are iterated until a good enough $\alpha$ is found. The goodness of a stepsize is measured using the following 
Armijo-like sufficient descent rule
\begin{equation*}
 \phi(\vu) - \phi(P \left[ \vu-\alpha_k\vd \right]) \geq \sigma \cdot \alpha_k \cdot \left( \nabla\phi(\vu) \cdot \vd \right),
\end{equation*}
where a tolerance $\sigma = 0.05$ works well practice.

Note that the gradient $\nabla \phi(\vu)$ might be misleading in the condition above if $\vu$ has components at the boundary and $\vd$ points outside this boundary (because then, due to the subsequent projection no real improvement would be obtained by stepping outside the feasible region). To address this concern, we modify the computation of the gradient $\nabla \phi(\vu)$, zeroing our the entries that relate to direction components pointing outside the feasible set. 

The whole stepsize selection procedure is shown in Algorithm \ref{algStep}. The costliest operation in this procedure is the evaluation of $\phi$, which, nevertheless can be done in linear time. Furthermore, in practice a few iterations more than suffice to obtain a good stepsize.

 \begin{algorithm}[t]
  \caption{\small Stepsize selection for Projected Newton}
  \label{algStep}
  \begin{algorithmic}
   \State {\it Initialize:} $\alpha_0 = 1$, $k=0$, $\vd$, tolerance parameter $\sigma$
   \While{$\phi(\vu) - \phi(P[\vu-\alpha_k \vd]) < \sigma \cdot \alpha_k \cdot \left( \nabla\phi(\vu) \cdot \vd \right) $}
    \State Minimize quadratic model: $\alpha_{k+1} = \frac{\alpha_k^2 \partial_{\alpha_k} \phi(\vu)}{2 (\phi(\vu) - \phi(\vu - \alpha_k) + \alpha_k \partial_{\alpha_k} \phi(\vu))}$. 
    \State \textbf{if} $\alpha_{k+1} > \alpha_k$ \textbf{or} $\alpha_{k+1} \simeq \alpha_k$, \textbf{then} $\alpha_{k+1} = \half \alpha_k$.
    \State $k \gets k + 1$
   \EndWhile
   \State\Return $\alpha_k$
  \end{algorithmic}
 \end{algorithm}

Overall, a full PN iteration as described above runs at $O(n)$ cost. Thus, by exploiting the structure of the problem, we manage to reduce the $O(n^3)$ cost per iteration of a general PN algorithm to a linear-cost method. The pseudocode of the resulting method is shown as  Algorithm~\ref{algTV1}. Note that in the special case when the weights $\bm{W}:=\Diag(w_i)$ are so large that the unconstrained optimum coincides with the constrained one, we can obtain $\vu^*$ directly via solving $\md\md^T\bm{W}\vu^* = \md\vy$ (which can also be done at $O(n)$ cost). The duality gap of the current solution is used as a stopping criterion, where we use a tolerance of $\epsilon = 10^{-5}$ in practice.

{\begin{algorithm}[t]
 \caption{\small PN algorithm for TV-L1-proximity}
 \label{algTV1}
 \begin{algorithmic}
  \State Let $\bm{W}=\Diag(w_i)$; solve $\md\md^T\bm{W}\vu^* = \md\vy$.
  \State \textbf{if} $\infnorm{\bm{W}^{-1}\vu^*} \le 1$, \textbf{return} $\vu^*$.
  \State $\vu^0 = P[\vu^*]$, $t = 0$.
  \While{$\text{gap}(\vu)$ $>$ $\epsilon$}
   \State Identify set of active constraints $I$; let $\bar{I}=\set{1\ldots n} \setminus I$.
   \State Construct reduced Hessian $\mH_{\bar{I}}$.
   \State Solve $\mH_{\bar{I}}\vd_{\bar{I}} = [\nabla \phi(\vu^t)]_{\bar{I}}$.
   \State Compute stepsize $\alpha$ using backtracking + interpolation (Alg.~\ref{algStep}).
   \State Update $\vu^{t+1}_{\bar{I}} = P[\vu^t_{\bar{I}} - \alpha\vd_{\bar{I}}]$.
   \State $t \gets t + 1$.
  \EndWhile
  \State\Return $\vu^t$.
 \end{algorithmic}
\end{algorithm}
}

\section{Testing images and videos, and experimental results}
\label{app:images}

The images used in the experiments are displayed in what follows, along with their noisy/denoised and convoluted/deconvoluted versions for each algorithm tested. QR barcode images were generated by encoding random text using Google chart API\footnote{\url{http://code.google.com/intl/en-EN/apis/chart/}}. Images {\it shape} and {\it phantom} \footnote{Extracted from \url{http://en.wikipedia.org/wiki/File:Shepp_logan.png}} are publicly available and frequently used in image processing. {\it trollface} and {\it comic} \footnote{Author: Francisco Molina. \url{http://www.afrikislife.net/english/}} are also publicly available. {\it gaudi}, used in the multicore experiments, is a high resolution $3197 \times 3361$ photograph of Gaudi's Casa Batll\'o\footnote{Extracted from \url{http://www.flickr.com/photos/jeffschwartz/202423023/}}. The rest of the images were originally created by the authors.

For the video experiments, the {\it salesman}, {\it coastguard} and {\it bicycle}  sequences were used, which are publicly available at \cite{BM3Dvideos}. As an example, frames from the first video are displayed in what follows, along with their noisy/denoised versions.

\begin{figure}[htbp]
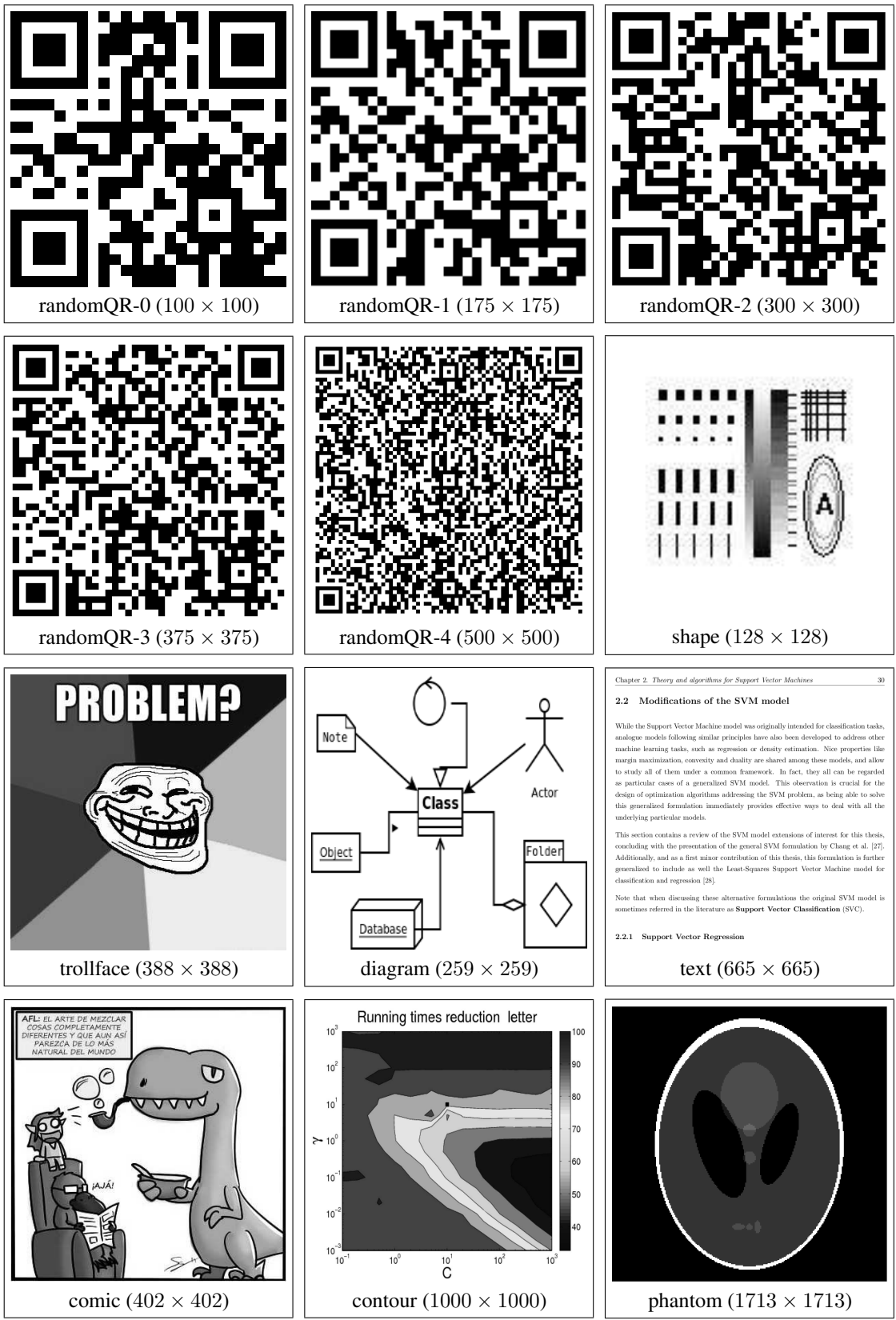

  \begin{tabular}{lll}
  \abovespace\belowspace
  \begin{minipage}{0.3\textwidth}
   \framebox{\parbox{\textwidth}{
    \centering
    \includegraphics[width=1\textwidth]{randomQR_0} \\
    randomQR-0 ($100 \times 100$)
   }}
  \end{minipage}
  &
  \begin{minipage}{0.3\textwidth}
   \framebox{\parbox{\textwidth}{
    \centering
    \includegraphics[width=1\textwidth]{randomQR_1} \\
    randomQR-1 ($175 \times 175$)
   }}
  \end{minipage}
  &
  \begin{minipage}{0.3\textwidth}
   \framebox{\parbox{\textwidth}{
    \centering
    \includegraphics[width=1\textwidth]{randomQR_2} \\
    randomQR-2 ($300 \times 300$)
   }}
  \end{minipage}
  \vspace{0.2cm}
  \\
  \begin{minipage}{0.3\textwidth}
   \framebox{\parbox{\textwidth}{
    \centering
    \includegraphics[width=1\textwidth]{randomQR_3} \\
    randomQR-3 ($375 \times 375$)
   }}
  \end{minipage}
  &
  \begin{minipage}{0.3\textwidth}
   \framebox{\parbox{\textwidth}{
    \centering
    \includegraphics[width=1\textwidth]{randomQR_4} \\
    randomQR-4 ($500 \times 500$)
   }}
  \end{minipage}
  &
  \begin{minipage}{0.3\textwidth}
   \framebox{\parbox{\textwidth}{
    \centering
    \includegraphics[width=1\textwidth]{shape} \\
    shape ($128 \times 128$)
   }}
  \end{minipage}
  \vspace{0.2cm}
  \\
  \abovespace\belowspace
  \begin{minipage}{0.3\textwidth}
   \framebox{\parbox{\textwidth}{
    \centering
    \includegraphics[width=1\textwidth]{trollfaceGray} \\
    trollface ($388 \times 388$)
   }}
  \end{minipage}
  &
  \begin{minipage}{0.3\textwidth}
   \framebox{\parbox{\textwidth}{
    \centering
    \includegraphics[width=1\textwidth]{diagram} \\
    diagram ($259 \times 259$)
   }}
  \end{minipage}
  &
  \begin{minipage}{0.3\textwidth}
   \framebox{\parbox{\textwidth}{
    \centering
    \includegraphics[width=1\textwidth]{text} \\
    text ($665 \times 665$)
   }}
  \end{minipage}
  \vspace{0.2cm}
  \\
  \abovespace\belowspace
  \begin{minipage}{0.3\textwidth}
   \framebox{\parbox{\textwidth}{
    \centering
    \includegraphics[width=1\textwidth]{comicGray} \\
    comic ($402 \times 402$)
   }}
  \end{minipage}
  &
  \begin{minipage}{0.3\textwidth}
   \framebox{\parbox{\textwidth}{
    \centering
    \includegraphics[width=1\textwidth]{contourGray} \\
    contour ($1000 \times 1000$)
   }}
  \end{minipage}
  &
  \begin{minipage}{0.3\textwidth}
   \framebox{\parbox{\textwidth}{
    \centering
    \includegraphics[width=1\textwidth]{phantom} \\
    phantom ($1713 \times 1713$)
   }}
  \end{minipage}
  \end{tabular}

  \caption{
    Test images used in the experiments together with their sizes in pixels. Images displayed have been scaled down to fit in page.
    \label{fig:imagesExp}
  }
\end{figure}

\begin{figure}[htbp]
  \begin{tabular}{lll}
  \abovespace\belowspace
  \begin{minipage}{0.3\textwidth}
   \framebox{\parbox{\textwidth}{
    \centering
    \includegraphics[width=1\textwidth]{randomQR_0-Noisy} \\
    randomQR-0 ($100 \times 100$)
   }}
  \end{minipage}
  &
  \begin{minipage}{0.3\textwidth}
   \framebox{\parbox{\textwidth}{
    \centering
    \includegraphics[width=1\textwidth]{randomQR_1-Noisy} \\
    randomQR-1 ($175 \times 175$)
   }}
  \end{minipage}
  &
  \begin{minipage}{0.3\textwidth}
   \framebox{\parbox{\textwidth}{
    \centering
    \includegraphics[width=1\textwidth]{randomQR_2-Noisy} \\
    randomQR-2 ($300 \times 300$)
   }}
  \end{minipage}
  \vspace{0.2cm}
  \\
  \begin{minipage}{0.3\textwidth}
   \framebox{\parbox{\textwidth}{
    \centering
    \includegraphics[width=1\textwidth]{randomQR_3-Noisy} \\
    randomQR-3 ($375 \times 375$)
   }}
  \end{minipage}
  &
  \begin{minipage}{0.3\textwidth}
   \framebox{\parbox{\textwidth}{
    \centering
    \includegraphics[width=1\textwidth]{randomQR_4-Noisy} \\
    randomQR-4 ($500 \times 500$)
   }}
  \end{minipage}
  &
  \begin{minipage}{0.3\textwidth}
   \framebox{\parbox{\textwidth}{
    \centering
    \includegraphics[width=1\textwidth]{shape-Noisy} \\
    shape ($128 \times 128$)
   }}
  \end{minipage}
  \vspace{0.2cm}
  \\
  \abovespace\belowspace
  \begin{minipage}{0.3\textwidth}
   \framebox{\parbox{\textwidth}{
    \centering
    \includegraphics[width=1\textwidth]{trollface-Noisy} \\
    trollface ($388 \times 388$)
   }}
  \end{minipage}
  &
  \begin{minipage}{0.3\textwidth}
   \framebox{\parbox{\textwidth}{
    \centering
    \includegraphics[width=1\textwidth]{diagram-Noisy} \\
    diagram ($259 \times 259$)
   }}
  \end{minipage}
  &
  \begin{minipage}{0.3\textwidth}
   \framebox{\parbox{\textwidth}{
    \centering
    \includegraphics[width=1\textwidth]{text-Noisy} \\
    text ($665 \times 665$)
   }}
  \end{minipage}
  \vspace{0.2cm}
  \\
  \abovespace\belowspace
  \begin{minipage}{0.3\textwidth}
   \framebox{\parbox{\textwidth}{
    \centering
    \includegraphics[width=1\textwidth]{comic-Noisy} \\
    comic ($402 \times 402$)
   }}
  \end{minipage}
  &
  \begin{minipage}{0.3\textwidth}
   \framebox{\parbox{\textwidth}{
    \centering
    \includegraphics[width=1\textwidth]{contour-Noisy} \\
    contour ($1000 \times 1000$)
   }}
  \end{minipage}
  &
  \begin{minipage}{0.3\textwidth}
   \framebox{\parbox{\textwidth}{
    \centering
    \includegraphics[width=1\textwidth]{phantom-Noisy} \\
    phantom ($1713 \times 1713$)
   }}
  \end{minipage}
  \end{tabular}

  \caption{
    Noisy versions of images used in the experiments.
    \label{fig:imagesNoised}
  }
\end{figure}

\begin{figure}[htbp]
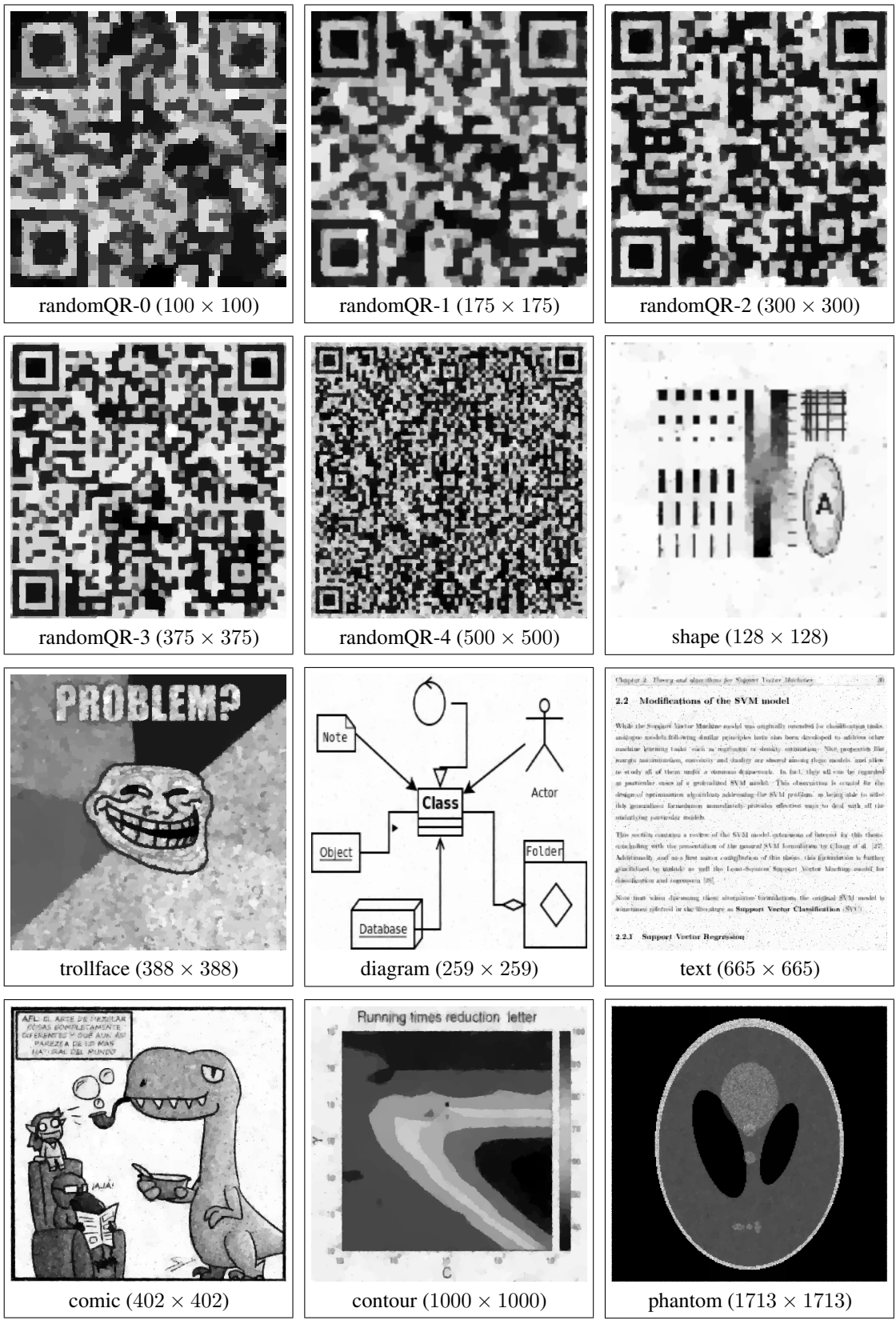

  \begin{tabular}{lll}
  \abovespace\belowspace
  \begin{minipage}{0.3\textwidth}
   \framebox{\parbox{\textwidth}{
    \centering
    \includegraphics[width=1\textwidth]{randomQR_0-Aniso} \\
    randomQR-0 ($100 \times 100$)
   }}
  \end{minipage}
  &
  \begin{minipage}{0.3\textwidth}
   \framebox{\parbox{\textwidth}{
    \centering
    \includegraphics[width=1\textwidth]{randomQR_1-Aniso} \\
    randomQR-1 ($175 \times 175$)
   }}
  \end{minipage}
  &
  \begin{minipage}{0.3\textwidth}
   \framebox{\parbox{\textwidth}{
    \centering
    \includegraphics[width=1\textwidth]{randomQR_2-Aniso} \\
    randomQR-2 ($300 \times 300$)
   }}
  \end{minipage}
  \vspace{0.2cm}
  \\
  \begin{minipage}{0.3\textwidth}
   \framebox{\parbox{\textwidth}{
    \centering
    \includegraphics[width=1\textwidth]{randomQR_3-Aniso} \\
    randomQR-3 ($375 \times 375$)
   }}
  \end{minipage}
  &
  \begin{minipage}{0.3\textwidth}
   \framebox{\parbox{\textwidth}{
    \centering
    \includegraphics[width=1\textwidth]{randomQR_4-Aniso} \\
    randomQR-4 ($500 \times 500$)
   }}
  \end{minipage}
  &
  \begin{minipage}{0.3\textwidth}
   \framebox{\parbox{\textwidth}{
    \centering
    \includegraphics[width=1\textwidth]{shape-Aniso} \\
    shape ($128 \times 128$)
   }}
  \end{minipage}
  \vspace{0.2cm}
  \\
  \abovespace\belowspace
  \begin{minipage}{0.3\textwidth}
   \framebox{\parbox{\textwidth}{
    \centering
    \includegraphics[width=1\textwidth]{trollface-Aniso} \\
    trollface ($388 \times 388$)
   }}
  \end{minipage}
  &
  \begin{minipage}{0.3\textwidth}
   \framebox{\parbox{\textwidth}{
    \centering
    \includegraphics[width=1\textwidth]{diagram-Aniso} \\
    diagram ($259 \times 259$)
   }}
  \end{minipage}
  &
  \begin{minipage}{0.3\textwidth}
   \framebox{\parbox{\textwidth}{
    \centering
    \includegraphics[width=1\textwidth]{text-Aniso} \\
    text ($665 \times 665$)
   }}
  \end{minipage}
  \vspace{0.2cm}
  \\
  \abovespace\belowspace
  \begin{minipage}{0.3\textwidth}
   \framebox{\parbox{\textwidth}{
    \centering
    \includegraphics[width=1\textwidth]{comic-Aniso} \\
    comic ($402 \times 402$)
   }}
  \end{minipage}
  &
  \begin{minipage}{0.3\textwidth}
   \framebox{\parbox{\textwidth}{
    \centering
    \includegraphics[width=1\textwidth]{contour-Aniso} \\
    contour ($1000 \times 1000$)
   }}
  \end{minipage}
  &
  \begin{minipage}{0.3\textwidth}
   \framebox{\parbox{\textwidth}{
    \centering
    \includegraphics[width=1\textwidth]{phantom-Aniso} \\
    phantom ($1713 \times 1713$)
   }}
  \end{minipage}
  \end{tabular}

  \caption{
    Denoising results for the test images.
    \label{fig:imagesDenoised}
  }
\end{figure}


\begin{figure}[htbp]
  \begin{tabular}{lll}
  \abovespace\belowspace
  \begin{minipage}{0.3\textwidth}
   \framebox{\parbox{\textwidth}{
    \centering
    \includegraphics[width=1\textwidth]{randomQR_0-Blurred} \\
    randomQR-0 ($100 \times 100$)
   }}
  \end{minipage}
  &
  \begin{minipage}{0.3\textwidth}
   \framebox{\parbox{\textwidth}{
    \centering
    \includegraphics[width=1\textwidth]{randomQR_1-Blurred} \\
    randomQR-1 ($175 \times 175$)
   }}
  \end{minipage}
  &
  \begin{minipage}{0.3\textwidth}
   \framebox{\parbox{\textwidth}{
    \centering
    \includegraphics[width=1\textwidth]{randomQR_2-Blurred} \\
    randomQR-2 ($300 \times 300$)
   }}
  \end{minipage}
  \vspace{0.2cm}
  \\
  \begin{minipage}{0.3\textwidth}
   \framebox{\parbox{\textwidth}{
    \centering
    \includegraphics[width=1\textwidth]{randomQR_3-Blurred} \\
    randomQR-3 ($375 \times 375$)
   }}
  \end{minipage}
  &
  \begin{minipage}{0.3\textwidth}
   \framebox{\parbox{\textwidth}{
    \centering
    \includegraphics[width=1\textwidth]{randomQR_4-Blurred} \\
    randomQR-4 ($500 \times 500$)
   }}
  \end{minipage}
  &
  \begin{minipage}{0.3\textwidth}
   \framebox{\parbox{\textwidth}{
    \centering
    \includegraphics[width=1\textwidth]{shape-Blurred} \\
    shape ($128 \times 128$)
   }}
  \end{minipage}
  \vspace{0.2cm}
  \\
  \abovespace\belowspace
  \begin{minipage}{0.3\textwidth}
   \framebox{\parbox{\textwidth}{
    \centering
    \includegraphics[width=1\textwidth]{trollface-Blurred} \\
    trollface ($388 \times 388$)
   }}
  \end{minipage}
  &
  \begin{minipage}{0.3\textwidth}
   \framebox{\parbox{\textwidth}{
    \centering
    \includegraphics[width=1\textwidth]{diagram-Blurred} \\
    diagram ($259 \times 259$)
   }}
  \end{minipage}
  &
  \begin{minipage}{0.3\textwidth}
   \framebox{\parbox{\textwidth}{
    \centering
    \includegraphics[width=1\textwidth]{text-Blurred} \\
    text ($665 \times 665$)
   }}
  \end{minipage}
  \vspace{0.2cm}
  \\
  \abovespace\belowspace
  \begin{minipage}{0.3\textwidth}
   \framebox{\parbox{\textwidth}{
    \centering
    \includegraphics[width=1\textwidth]{comic-Blurred} \\
    comic ($402 \times 402$)
   }}
  \end{minipage}
  &
  \begin{minipage}{0.3\textwidth}
   \framebox{\parbox{\textwidth}{
    \centering
    \includegraphics[width=1\textwidth]{contour-Blurred} \\
    contour ($1000 \times 1000$)
   }}
  \end{minipage}
  &
  \begin{minipage}{0.3\textwidth}
   \framebox{\parbox{\textwidth}{
    \centering
    \includegraphics[width=1\textwidth]{phantom-Blurred} \\
    phantom ($1713 \times 1713$)
   }}
  \end{minipage}
  \end{tabular}

  \caption{
    Noisy and convoluted versions of images used in the experiments.
    \label{fig:imagesBlurred}
  }
\end{figure}

\begin{figure}[htbp]
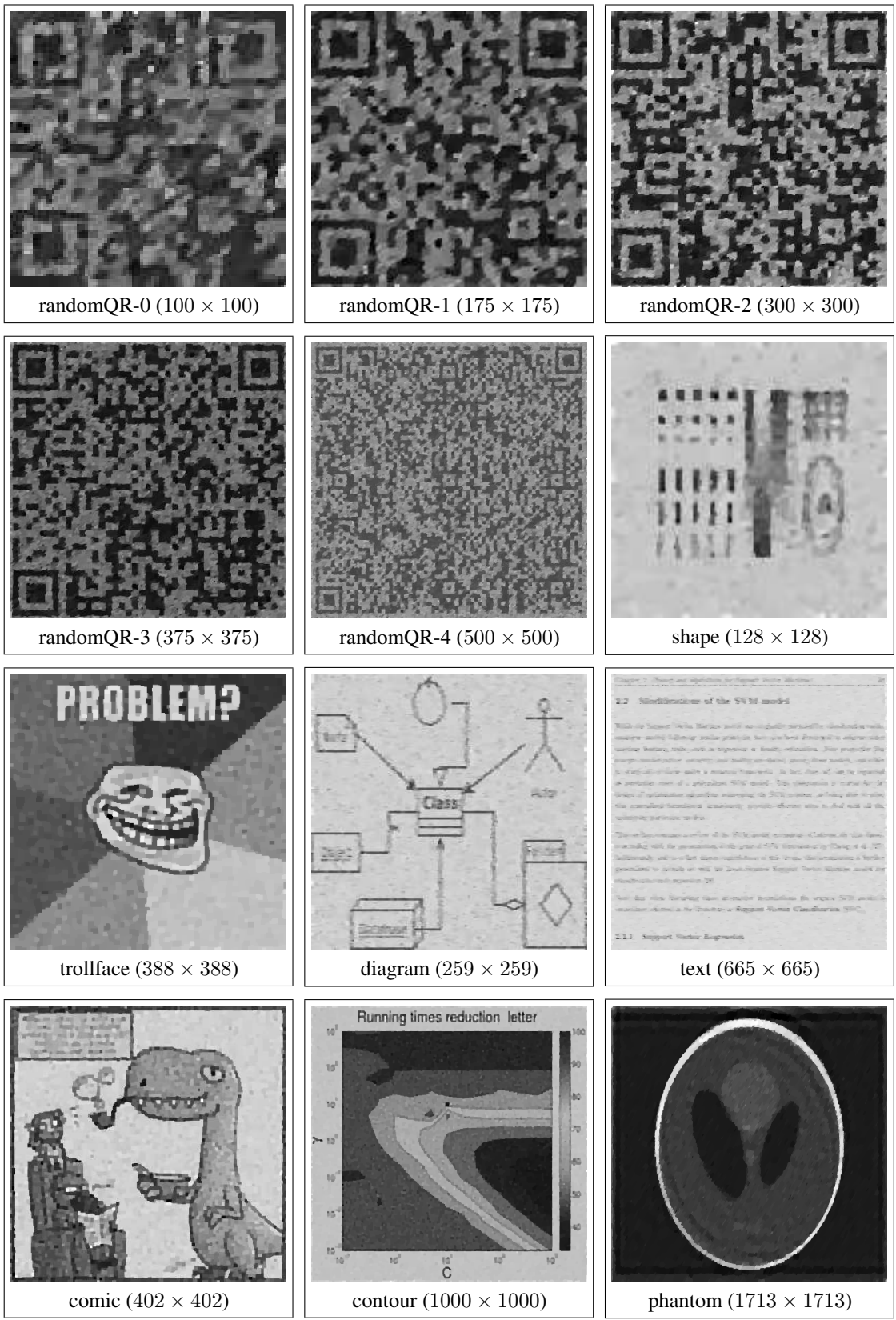

  \begin{tabular}{lll}
  \abovespace\belowspace
  \begin{minipage}{0.3\textwidth}
   \framebox{\parbox{\textwidth}{
    \centering
    \includegraphics[width=1\textwidth]{randomQR_0-Deblur-Aniso} \\
    randomQR-0 ($100 \times 100$)
   }}
  \end{minipage}
  &
  \begin{minipage}{0.3\textwidth}
   \framebox{\parbox{\textwidth}{
    \centering
    \includegraphics[width=1\textwidth]{randomQR_1-Deblur-Aniso} \\
    randomQR-1 ($175 \times 175$)
   }}
  \end{minipage}
  &
  \begin{minipage}{0.3\textwidth}
   \framebox{\parbox{\textwidth}{
    \centering
    \includegraphics[width=1\textwidth]{randomQR_2-Deblur-Aniso} \\
    randomQR-2 ($300 \times 300$)
   }}
  \end{minipage}
  \vspace{0.2cm}
  \\
  \begin{minipage}{0.3\textwidth}
   \framebox{\parbox{\textwidth}{
    \centering
    \includegraphics[width=1\textwidth]{randomQR_3-Deblur-Aniso} \\
    randomQR-3 ($375 \times 375$)
   }}
  \end{minipage}
  &
  \begin{minipage}{0.3\textwidth}
   \framebox{\parbox{\textwidth}{
    \centering
    \includegraphics[width=1\textwidth]{randomQR_4-Deblur-Aniso} \\
    randomQR-4 ($500 \times 500$)
   }}
  \end{minipage}
  &
  \begin{minipage}{0.3\textwidth}
   \framebox{\parbox{\textwidth}{
    \centering
    \includegraphics[width=1\textwidth]{shape-Deblur-Aniso} \\
    shape ($128 \times 128$)
   }}
  \end{minipage}
  \vspace{0.2cm}
  \\
  \abovespace\belowspace
  \begin{minipage}{0.3\textwidth}
   \framebox{\parbox{\textwidth}{
    \centering
    \includegraphics[width=1\textwidth]{trollface-Deblur-Aniso} \\
    trollface ($388 \times 388$)
   }}
  \end{minipage}
  &
  \begin{minipage}{0.3\textwidth}
   \framebox{\parbox{\textwidth}{
    \centering
    \includegraphics[width=1\textwidth]{diagram-Deblur-Aniso} \\
    diagram ($259 \times 259$)
   }}
  \end{minipage}
  &
  \begin{minipage}{0.3\textwidth}
   \framebox{\parbox{\textwidth}{
    \centering
    \includegraphics[width=1\textwidth]{text-Deblur-Aniso} \\
    text ($665 \times 665$)
   }}
  \end{minipage}
  \vspace{0.2cm}
  \\
  \abovespace\belowspace
  \begin{minipage}{0.3\textwidth}
   \framebox{\parbox{\textwidth}{
    \centering
    \includegraphics[width=1\textwidth]{comic-Deblur-Aniso} \\
    comic ($402 \times 402$)
   }}
  \end{minipage}
  &
  \begin{minipage}{0.3\textwidth}
   \framebox{\parbox{\textwidth}{
    \centering
    \includegraphics[width=1\textwidth]{contour-Deblur-Aniso} \\
    contour ($1000 \times 1000$)
   }}
  \end{minipage}
  &
  \begin{minipage}{0.3\textwidth}
   \framebox{\parbox{\textwidth}{
    \centering
    \includegraphics[width=1\textwidth]{phantom-Deblur-Aniso} \\
    phantom ($1713 \times 1713$)
   }}
  \end{minipage}
  \end{tabular}

  \caption{
    Deconvolution results for the test images.
    \label{fig:imagesDeblurred}
  }
\end{figure}
\pagestyle{empty}
\begin{figure}[htbp]
 \centering
 \includegraphics[width = 0.98\textwidth]{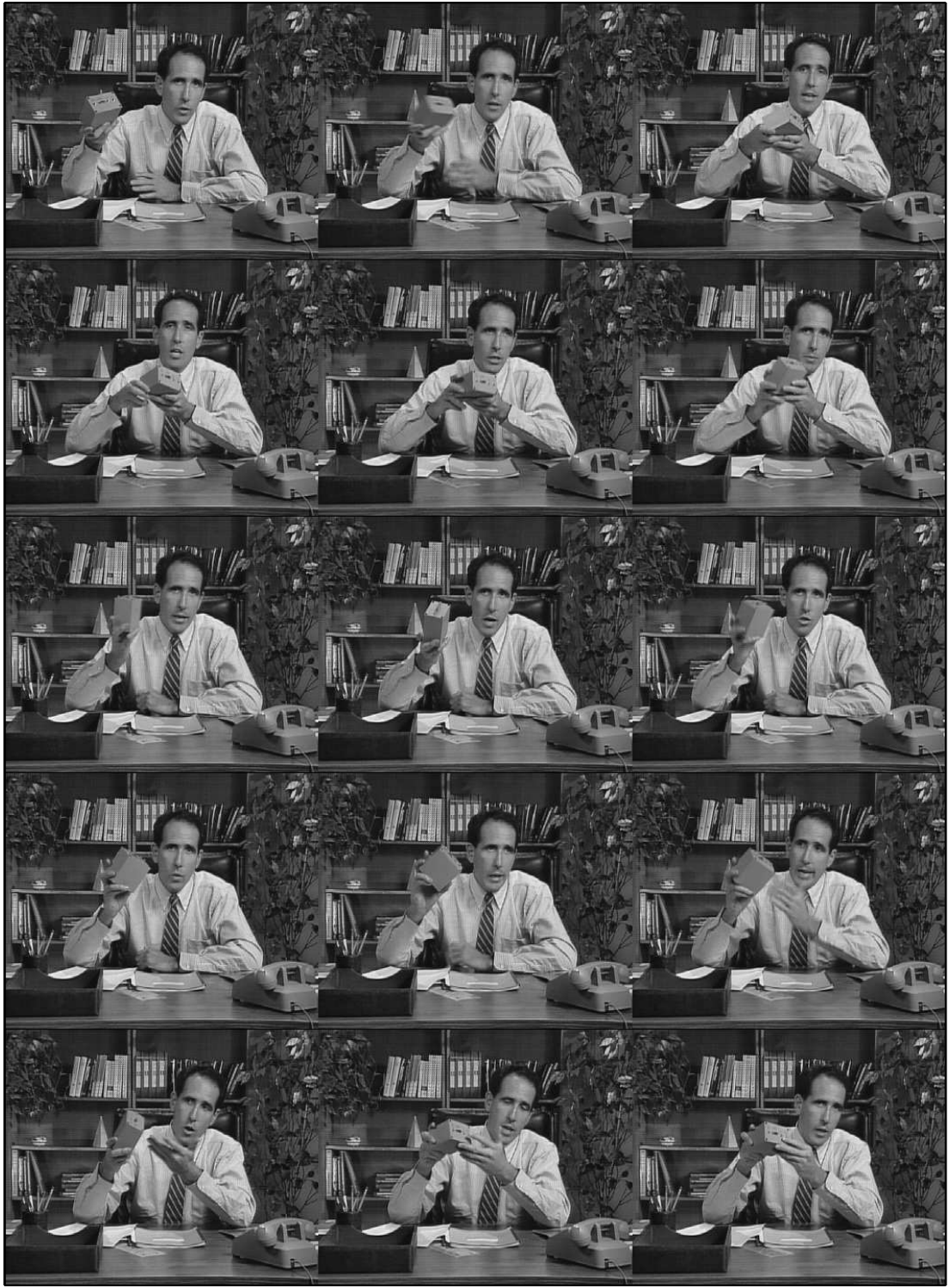}
 \caption{
  \label{fig:videoClean}
  A selection of frames from the {\it salesman} video sequence.
 }
\end{figure}

\begin{figure}[htbp]
 \centering
 \includegraphics[width = 0.98\textwidth]{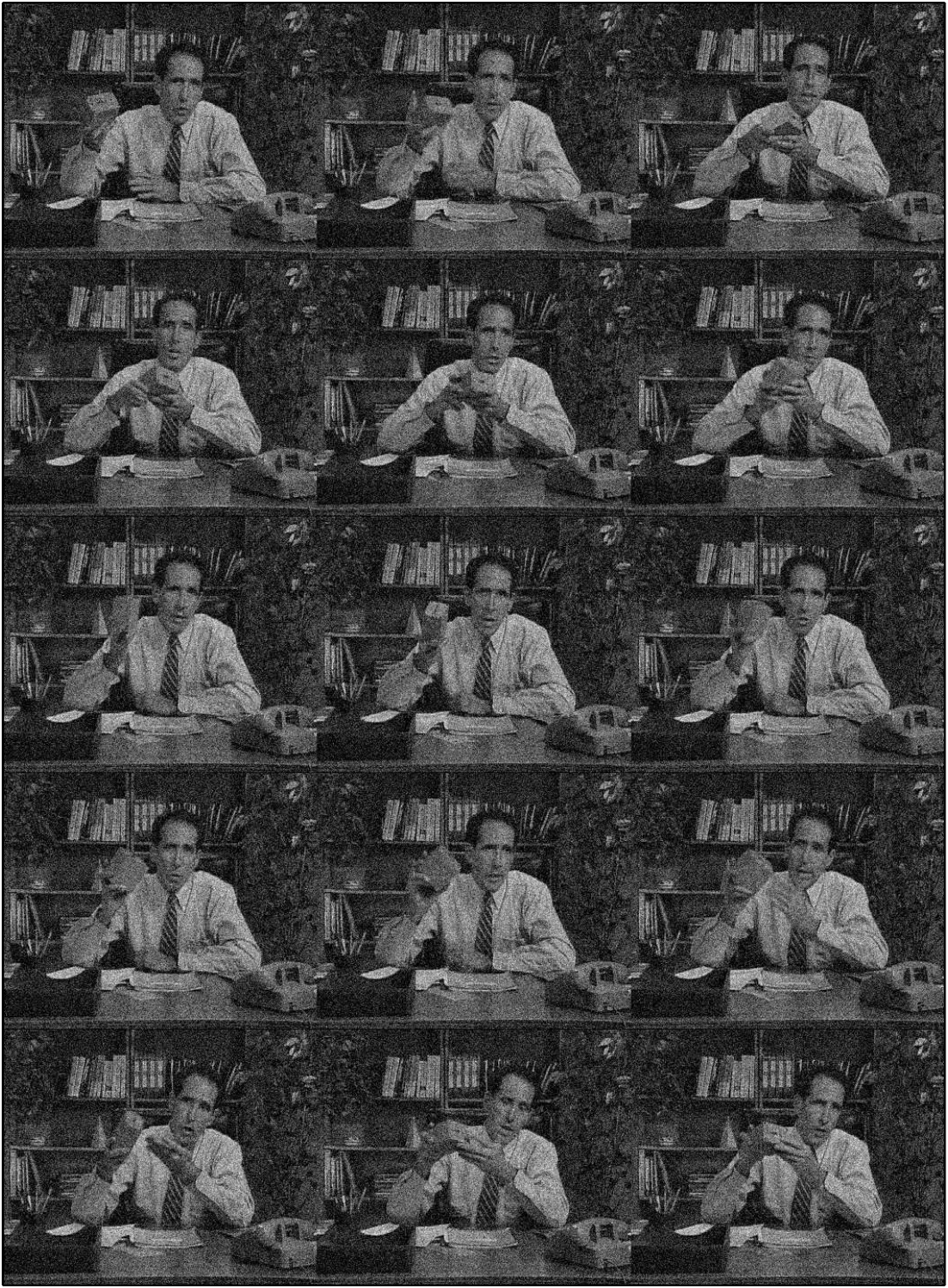}
 \caption{
  \label{fig:videoNoisy}
  Noisy frames from the {\it salesman} video sequence.
 }
\end{figure}

\begin{figure}[htbp]
 \centering
 \includegraphics[width = 0.98\textwidth]{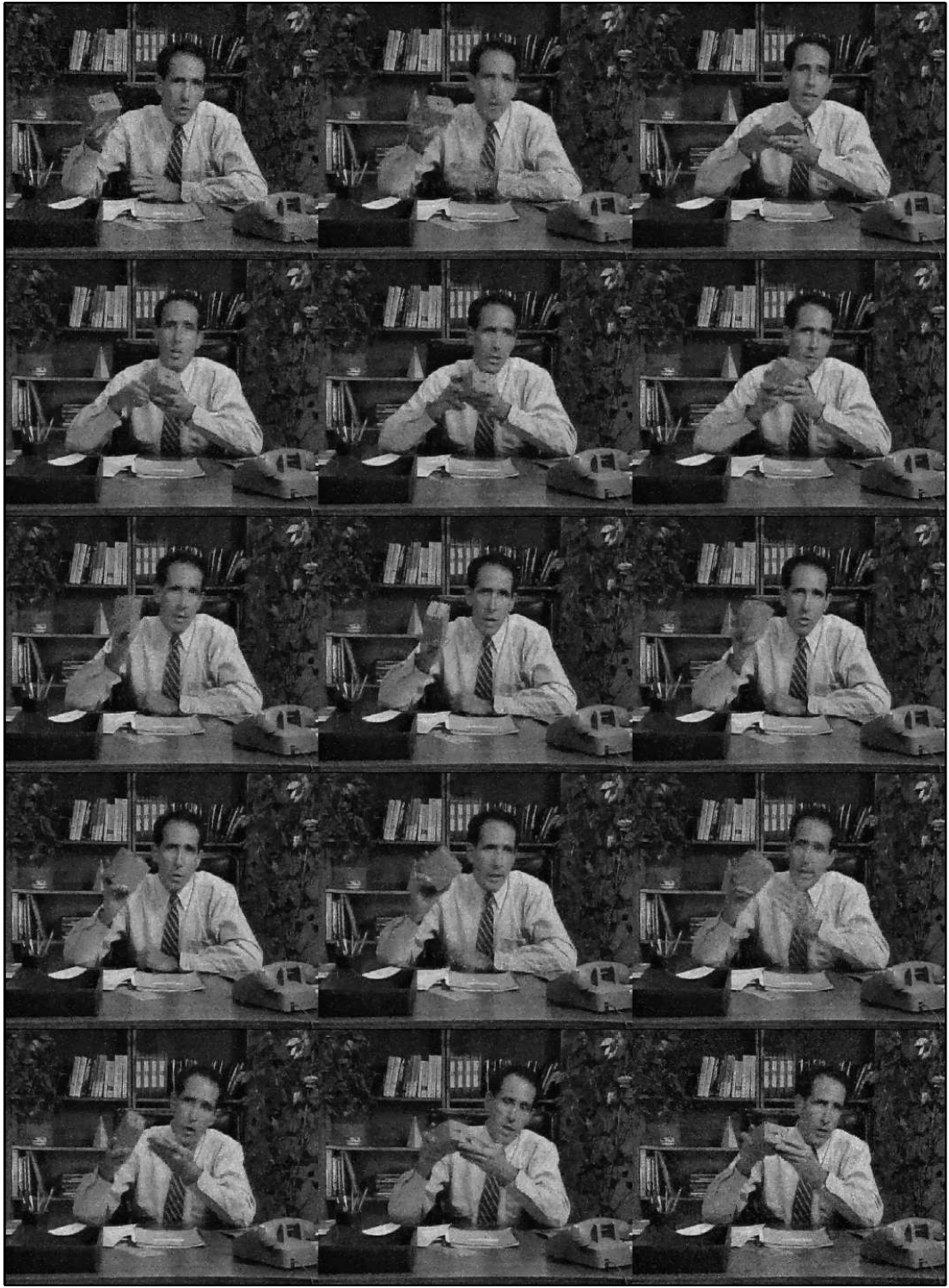}
 \caption{
  \label{fig:videoAniso}
  Denoised frames from the {\it salesman} video sequence.
 }
\end{figure}

\end{document}